\documentclass{article}

 \usepackage[preprint]{neurips_2026}

\usepackage[utf8]{inputenc} 
\usepackage[T1]{fontenc}    
\usepackage[pagebackref=true]{hyperref}       
\renewcommand*{\backref}[1]{}
\renewcommand*{\backrefalt}[4]{%
    \ifcase #1 \relax
    \or
        (Cited on page #2.)%
    \else
        (Cited on pages #2.)%
    \fi
}
\usepackage{url}            
\usepackage{booktabs}       
\usepackage{makecell}       
\usepackage{amsfonts}       
\usepackage{nicefrac}       
\usepackage{microtype}      
\usepackage{xcolor}         

\title{
    {\color{oscar} LOSCAR}-SGD: Local SGD 
    with Communication-Computation Overlap\\
    and Delay-Corrected Sparse Model Averaging
}

\author{%
  Yassine~Maziane \\
  KAUST \\
  \texttt{yassine.maziane@kaust.edu.sa} \\
  \And
  Ammar~Mahran \\
  KAUST \\
  \texttt{majiedammar.mahran@kaust.edu.sa} \\
  \And
  Artavazd~Maranjyan \\
  KAUST \\
  \texttt{artavazd.maranjyan@kaust.edu.sa} \\
  \And
  Peter~Richt\'{a}rik \\
  KAUST \\
  \texttt{peter.richtarik@kaust.edu.sa}
}

\usepackage{multirow}
\usepackage[utf8]{inputenc} 
\usepackage[T1]{fontenc} 
\usepackage{calligra}
\usepackage{layout}

\usepackage{amsmath}
\usepackage{amssymb}
\usepackage{amsfonts} 
\usepackage{amsthm}

\usepackage{mathtools}
\usepackage{thmtools} 
\usepackage{thm-restate}

\usepackage{latexsym}

\usepackage{xspace}
\usepackage{xcolor} 
\usepackage{graphicx}

\usepackage{algorithmic}
\usepackage{algorithm}

\usepackage{hyperref}  
\usepackage{url}           
\usepackage{booktabs} 
\usepackage{microtype} 

\hypersetup{colorlinks=false, urlcolor=magenta}

\usepackage{cleveref}

\usepackage{bbm}
\usepackage{bm} 

\usepackage{nicefrac} 
\usepackage{adjustbox}
\usepackage{threeparttable}

\usepackage{import, ifthen}

\usepackage{tcolorbox}
\usepackage{pifont}
\definecolor{mydarkred}{RGB}{192,25,25}
\definecolor{mydarkgreen}{RGB}{25,192,25}
\definecolor{mydarkblue}{RGB}{25,25,192}

\definecolor{RedOrange}{cmyk}{0,0.77,0.87,0}
\definecolor{Orchid}{cmyk}{0.32,0.64,0,0}

\definecolor{checkmarkcolor}{rgb}{0.3,0.25,0.2}
\definecolor{crossmarkcolor}{rgb}{0.3,0.25,0.2}
\newcommand{\mycheckmark}{\textbf{\color{checkmarkcolor}\ding{52}}}
\newcommand{\mycrossmark}{{\color{crossmarkcolor}\ding{56}}}

\newcommand{\norm}[1]{{\left\| #1 \right\|}}

\newcommand{\rbr}[1]{\left(#1\right)} 
\newcommand{\nbr}[1]{\left\{#1\right\}} 

\newcommand{\anglebr}[1]{\left\langle#1\right\rangle} 

\newcommand{\inner}[2]{\anglebr{#1, #2}}

\newcommand{\cD}{\mathcal{D}}

\newcommand{\cF}{\mathcal{F}}

\newcommand{\bbN}{\mathbb{N}}

\newcommand{\R}{\mathbb{R}}  

\newcommand{\del}[1]{}

\newcommand{\eqdef}{\coloneqq}

\newcommand{\Exp}[1]{{\mathbb{E}} \left[ #1 \right]} 
\newcommand{\ExpCond}[2]{{\mathbb{E}}\left[\left.#1\right\vert#2\right]}
\newcommand{\ExpSub}[2]{{\mathbb{E}}_{#1}\left[#2\right]}

\newcommand{\Proj}{{\rm Proj}} 

\newcommand{\pr}[1][]{
  \ifthenelse { \equal{#1}{} }
  { \ensuremath{\mathrm{P}} }
  { \ensuremath{\mathrm{P}\left(#1\right)} }
}

\usepackage[colorinlistoftodos,bordercolor=orange,backgroundcolor=orange!20,linecolor=orange,textsize=scriptsize]{todonotes}

\newcommand{\ppeter}[1]{}

\newcommand{\hidesolutions}[1]{} 

\definecolor{pastelred}{HTML}{FFB3BA}
\definecolor{pastelgreen}{HTML}{BDECB6}
\definecolor{pastelblue}{HTML}{AEC6CF}

\definecolor{thmbackground}{rgb}{240, 240, 255}

\declaretheoremstyle[
  spaceabove = 10pt, 
  spacebelow = 10pt,
  postheadspace=\newline,  
    postheadhook={\textcolor{black}{\rule[.6ex]{\linewidth}{0.4pt}}\\},  
  headfont = \bfseries,
  bodyfont = \normalfont\itshape,
  notefont = \mdseries\bfseries,
  notebraces = (),
 mdframed={
            frametitlebackgroundcolor=blue!60,
            backgroundcolor=red!0!white, 
           linecolor=black!100!white, 
            linewidth=0.4pt,     
            roundcorner=5pt,                     
            innertopmargin=6pt,
            innerbottommargin=20pt,   
            skipabove=2\parsep, 
            skipbelow=2\parsep } 
]{mythmstyle}

\declaretheorem[
  name=Corollary,
  style=mythmstyle,
  numberwithin=section
]{corollary}

\declaretheoremstyle[
  spaceabove = 10pt, 
  spacebelow = 10pt,
  postheadspace=\newline,  
    postheadhook={\textcolor{black}{\rule[.6ex]{\linewidth}{0.4pt}}\\},  
  headfont = \bfseries,
  bodyfont = \normalfont\itshape,
  notefont = \mdseries\bfseries,
  notebraces = (),
 mdframed={
            backgroundcolor=green!0!white, 
            linecolor=black!100!white, 
            innertopmargin=6pt,
            roundcorner=5pt, 
            innerbottommargin=20pt, 
            skipabove=2\parsep, 
            skipbelow=2\parsep } 
]{mylemmastyle}

\declaretheorem[
  name=Lemma,
  style=mylemmastyle,
  numberwithin=section
]{lemma}

\declaretheoremstyle[
  spaceabove = 10pt, 
  spacebelow = 10pt,
  postheadspace=\newline,  
    postheadhook={\textcolor{black}{\rule[.6ex]{\linewidth}{0.4pt}}\\},  
  headfont = \bfseries,
  bodyfont = \normalfont\itshape,
  notefont = \mdseries\bfseries,
  notebraces = (),
 mdframed={
            backgroundcolor=black!0!white, 
            linecolor=black!100!white, 
            innertopmargin=6pt,
            roundcorner=5pt, 
            innerbottommargin=20pt, 
            skipabove=2\parsep, 
            skipbelow=2\parsep } 
]{myexamplestyle}

\theoremstyle{plain}
\newtheorem{assumption}{Assumption}\numberwithin{assumption}{section}
\numberwithin{claim}{section}
\numberwithin{fact}{section}
 \numberwithin{exercise}{section}

\theoremstyle{definition}
\numberwithin{definition}{section}

\definecolor{alggray}{gray}{0.35} 
\newcommand{\algname}[1]{{\color{alggray}\small\sf#1}\xspace}

\definecolor{oscar}{RGB}{145,105,25}
\newcommand{\loscar}{\algname{\color{oscar} LOSCAR}}
\newcommand{\loscarsgd}{\loscar-\sgd}

\newcommand{\minibatch}{\algname{Minibatch~SGD}}
\newcommand{\minibatchtitle}{Minibatch~SGD}

\newcommand{\sgd}{\algname{SGD}}

\newcommand{\fedavg}{\algname{FedAvg}}
\newcommand{\fedprox}{\algname{FedProx}}
\newcommand{\localsgd}{\algname{Local~SGD}}

\usepackage{thmtools}
\usepackage{thm-restate}
\usepackage{cleveref}
\crefname{assumption}{assumption}{assumptions}
\crefname{theorem}{theorem}{theorems}
\crefname{corollary}{corollary}{corollaries}
\crefname{lemma}{lemma}{lemmas}

\definecolor{linkcolor}{rgb}{0.45,0,0.05} 
\hypersetup{ %
  colorlinks=true,
  linkcolor=linkcolor,
  citecolor=linkcolor,
  filecolor=linkcolor,
  urlcolor=linkcolor,
}

\begin{document}

\maketitle

\begin{abstract}
    Communication is a major bottleneck in distributed learning, especially in large-scale settings and in federated learning environments with slow links. Three standard ways to reduce this cost are communication compression, local training, and communication-computation overlap. Methods that combine these ingredients are used in practice and have been found to be effective for large-scale training, but there is little theory for methods that combine all three. We study a heterogeneous-compute setting in which different workers may take different numbers of local steps, and we propose \loscarsgd, a \localsgd method that communicates only a sparse subset of model coordinates and continues optimizing while communication is in flight. A key ingredient is a delay-corrected merge rule that incorporates delayed synchronized information without discarding the progress made during the overlap phase. We give convergence guarantees for smooth non-convex objectives and show how sparsity, overlap, and worker heterogeneity affect the rate. To the best of our knowledge, this is the first theory for this combination of ingredients. Experiments further show that communication-computation overlap reduces training time and that the delay-corrected merge outperforms naive overwriting.
\end{abstract}
\section{Introduction}\label{sec:introduction}
Distributed learning enables the training of large models across many workers.
The most common setup is \textit{data parallelism}, where the data is split across workers, each worker keeps a copy of the model, and the workers communicate to train one shared model \citep{goyal2017accurate, li2020pytorch, zhao2023pytorch}.
A simple baseline in this setting is synchronized \minibatch: at every iteration, each worker computes one stochastic gradient, the gradients are averaged, and one global update is taken.
This is simple, but it requires communication at every iteration.
As models become larger and the number of workers grows, communication can become the main factor limiting training speed.
Federated learning is an especially clear example of this issue, since communication often happens over the internet and is much slower than local computation \citep{mcmahan2017communication, kairouz2021advances, wang2021field}.

There are several common ways to reduce this communication cost.
One is \textbf{local training}: instead of synchronizing after every stochastic-gradient step, workers perform several local \sgd steps before they communicate.
Because the local steps are simple \sgd steps, this approach is often called \localsgd \citep{zinkevich2010parallelized}, and in the federated learning literature it became popular through \fedavg \citep{mcmahan2017communication}.
Another idea is \textbf{communication-computation overlap}.
In standard synchronized methods such as \minibatch and plain \fedavg-style local training, workers may sit idle while messages are being sent, aggregated, and returned.
With overlap, workers keep optimizing locally while communication is in flight, which can hide part of the communication delay and reduce training time \citep{wang2020overlaplocalsgd, sun2024co2, kale2025eager}.
A third idea is to reduce the amount of communicated information through \textbf{compression}, for example via quantization or sparsification \citep{alistarh2017qsgd, wangni2018gradient}.
In local training, workers often synchronize model weights by averaging them; with sparsification, only a subset of model coordinates is synchronized \citep{fournier2024wash, beton2025sparta}.

These techniques are each well studied in both theory and practice.
There is also practical evidence that combining them can work well \citep{douillard2025streaming, ajanthan2026asyncmesh}.
However, to the best of our knowledge, there is no theory that covers the full combination of local training, sparse synchronization, and communication-computation overlap.
The problem is further complicated when workers have different computation speeds, as in heterogeneous clusters or federated settings, because different workers may perform different amounts of local work in the same amount of time.
A natural way to handle this is to let different workers take different numbers of local steps so that they remain roughly aligned in time \citep{li2020federated,
maranjyan2025ata, maranjyan2025gradskip}.

In this work, we study the data-homogeneous version of this setting and propose \loscarsgd, a \localsgd-type method with infrequent sparse model averaging, communication-computation overlap, and worker-specific local-step counts.
We also study a delay-corrected merge rule that preserves the progress made during the overlap phase instead of discarding it through a naive averaging overwrite.

\Cref{tbl:intro_methods} summarizes representative methods from the literature that incorporate at least one of the ingredients discussed above and have accompanying theoretical analysis.

\begin{table*}[t]
\centering
\setlength{\tabcolsep}{5pt}
\begin{tabular}{@{}ccccc@{}}
    \toprule
    Methods & \makecell{Local\\steps} & Compression & \makecell{Comm./comp.\\overlap} & \makecell{Worker-specific\\local-step counts} \\
    \midrule
    \minibatch & \mycrossmark & \mycrossmark & \mycrossmark & \mycrossmark \\
    \cmidrule(r){1-1}
    \makecell[c]{\localsgd, \fedavg\\{\footnotesize\citep{zinkevich2010parallelized,mcmahan2017communication}}} & \mycheckmark & \mycrossmark & \mycrossmark & \mycrossmark \\
    \cmidrule(r){1-1}
    \makecell[c]{\algname{SPARTA}, \algname{LoCoDL}\\{\footnotesize\citep{beton2025sparta, condat2025locodl}}} & \mycheckmark & \mycheckmark & \mycrossmark & \mycrossmark \\
    \cmidrule(r){1-1}
    \makecell[c]{\algname{Overlap-Local-SGD}, \algname{CO2}\\{\footnotesize\citep{wang2020overlaplocalsgd,sun2024co2}}} & \mycheckmark & \mycrossmark & \mycheckmark & \mycrossmark \\
    \cmidrule(r){1-1}
    \makecell[c]{\fedprox / \algname{GradSkip}\\{\footnotesize\citep{li2020federated,maranjyan2025gradskip}}} & \mycheckmark & \mycrossmark & \mycrossmark & \mycheckmark \\
    \cmidrule(r){1-1}
    \makecell[c]{\loscarsgd (\Cref{alg:general})} & \mycheckmark & \mycheckmark & \mycheckmark & \mycheckmark \\
    \bottomrule
\end{tabular}
\caption{
    Comparison of representative distributed learning methods with theoretical analysis related to our setting. 
    The columns indicate whether a method explicitly incorporates local steps, compression, communication-computation overlap, and worker-specific local-step counts.
    A checkmark indicates the presence of the corresponding feature.
    To the best of our knowledge, our method is the first analyzed method that combines all four features.}
\label{tbl:intro_methods}
\end{table*}
%
%
\subsection{Contributions}\label{sub:contributions}
\begin{itemize}
    \item
    We introduce a general \localsgd framework (\Cref{alg:general}) for data-homogeneous distributed optimization that combines four ingredients in one model:
    local training, sparse model averaging, communication-computation overlap, and worker-specific local-step counts.

    \item
    We propose a delay-corrected merge rule for sparse synchronization.
    When delayed synchronized information arrives, the worker should not discard the local progress it made during communication.
    Instead of simply overwriting coordinates with a delayed average, the rule keeps the overlap progress and corrects only the synchronization disagreement.

    \item
    In \Cref{sec:theory}, we prove convergence guarantees for smooth non-convex objectives.
    The bounds show how sparsity, overlap, and heterogeneity affect convergence.
    They also show that the extra error caused by overlap-induced staleness appears as a controlled higher-order term.


    \item
    Experiments in \Cref{sec:experiments} show that communication-computation overlap reduces training time, and that the delay-corrected merge is better than naive overwriting.
\end{itemize}
\subsection{Related work}\label{sub:related_literature}
\paragraph{Local training}
Local training reduces communication by performing multiple stochastic-gradient steps between synchronizations.
It was popularized in federated learning by \fedavg \citep{mcmahan2017communication}, building on earlier distributed-training ideas \citep{zinkevich2010parallelized, povey2014parallel, moritz2015sparknet}.
Federated-learning analyses in the heterogeneous-data regime include \citep{li2020onfedavg, karimireddy2020scaffold, mishchenko2022proxskip, douillard2023diloco}, while for our setting the most relevant theory is the homogeneous-data \localsgd line \citep{yu2019parallel, haddadpour2019localsgd, wang2021cooperative, woodworth2020local}; for broader background, see \citet{malinovsky2022variance}.

\paragraph{Compression and sparsification}
Another standard way to reduce communication is to transmit compressed information.
Classical work studies gradient compression, sparsification, and quantization \citep{alistarh2017qsgd, wangni2018gradient, lin2020deepgradientcompression, beznosikov2023biased}, but for methods based on periodic model averaging the closer references are those that sparsify the communicated model itself \citep{beton2025sparta, filippova2025partial, douillard2025streaming, condat2025locodl, condat2026bicolor}.

\paragraph{Communication-computation overlap}
Communication-computation overlap is one of the main ingredients of our method.
Instead of leaving workers idle while messages are sent, aggregated, and returned, overlap lets them continue taking useful local updates while communication is in flight, which can reduce training time.
This idea is studied in overlap-based local-training and distributed-training methods such as \citep{wang2020overlaplocalsgd, sun2024co2, kale2025eager, douillard2025streaming}.

\paragraph{Asynchronous methods}
Asynchronous \sgd also avoids worker idling, but it does so by removing round-level synchronization altogether \citep{agarwal2011distributed, recht2011hogwild, maranjyan2025thesis}.
The motivation is therefore similar to overlap, but the mechanism is different.
Our method is more structured: it keeps explicit rounds and a shared server message at the end of each round, while allowing controlled staleness through delayed sparse synchronization.
Because only part of the model may be synchronized after a delay, the method is closer to asynchronous training than plain \localsgd, while still remaining more organized than fully asynchronous \sgd.
This kind of structure is useful to have, and it can even be necessary for optimal asynchronous methods, as shown in recent asynchronous-\sgd theory \citep{tyurin2023optimal, maranjyan2025mindflayer, maranjyan2025ringmaster, maranjyan2026ringleader, mahran2026rescaled, sadiev2026ringmaster_lmo, tovmasyan2026rennala_mvr}.
In this sense, our method sits between \localsgd and asynchronous \sgd.
The closest comparison in this direction is \citet{tyurin2026birch}, which combines local training with asynchronous model averaging but does not include model sparsification.
\section{Problem setup}\label{sec:problem_setup}
We consider the stochastic optimization problem
\begin{equation}\label{eq:objective}
    \min_{x \in \R^d} \nbr{ f(x) \eqdef \ExpSub{\xi \sim \cD}{F(x;\xi)} } ,
\end{equation}
where $x \in \R^d$ is the model parameter,
$\xi$ is a random sample drawn from a distribution $\cD$,
$F(x;\xi)$ is the sample loss,
and $f$ is the population objective.

We work in a distributed, data-homogeneous setting with \(n\) workers.
Each worker keeps its own copy of the model and draws data from the same distribution \(\cD\).
Therefore, every worker can in principle solve \eqref{eq:objective} on its own.
We still use multiple workers because the goal is to speed up training by parallelizing computation.

We study first-order stochastic optimization, so the basic local computation at each worker is the evaluation of a stochastic gradient.
We write \(g_i(x,\xi_i)\) for the stochastic gradient computed by worker \(i\) at point \(x\) using a fresh random sample \(\xi_i\).
Our goal is to measure progress over time.
For that reason, we also need a simple model for computation time and communication time.
\begin{assumption}[Worker timing model]\label{ass:worker_times}
    For each worker \(i \in \{1,\dots,n\}\), one stochastic-gradient computation takes \(\tau_i\) seconds, where \(\tau_i\) is a positive integer.
    Let
    \[
        \tau \eqdef \operatorname{lcm}(\tau_1,\dots,\tau_n)
    \]
    be the least common multiple of these computation times.
    We also assume that one communication phase, from the moment workers send their messages until the aggregated message is available again at the workers, takes \(\zeta\) seconds, where
    \[
        \zeta \in \{0,\tau,2\tau,\dots\}.
    \]
\end{assumption}
\Cref{ass:worker_times} is a modeling simplification.
In a real system, computation times are not exact integers and communication delays are not perfectly constant.
We do not use this assumption because we believe real systems behave in such an exact way.
We use it only as a clean abstraction that puts all workers on a common time grid.
Since every \(\tau_i\) divides \(\tau\), every \(\tau\) seconds is a time at which all workers can be viewed on the same clock.
This lets us define rounds and count local steps without introducing much heavier timing notation.
What matters for our analysis is not the integer-valued assumption itself.
What matters is that different workers can complete different amounts of local work in the same amount of time, that communication takes a nonzero amount of time, and that workers may continue taking local steps during communication.
These are the effects we want to capture.
In practice, one can approximate this model by choosing explicit round boundaries in time.
Then all local work completed before a boundary is assigned to the current round, and communication is handled after that boundary.
If some worker is slightly early or slightly late, the system can either wait briefly for the current local step to finish or ignore a partially completed step.
The same idea can be used during communication: one can assign a fixed communication window to the round and absorb small timing mismatches inside that window.

We make the following standard assumptions on the objective:
\begin{assumption}[Lower boundedness]\label{ass:lower_bounded}
    The objective $f:\R^d \to \R$ is bounded from below, i.e.,
    there exists $f^\star \in \R$ such that
    \[
        f(x) \ge f^\star \qquad \forall x \in \R^d.
    \]
\end{assumption}
\begin{assumption}[Smoothness]\label{ass:smoothness}
    The objective $f$ is differentiable and $L$-smooth, that is,
    there exists $L>0$ such that
    \[
        \|\nabla f(x)-\nabla f(y)\| \leq L\|x-y\|
        \qquad \forall x,y \in \R^d.
    \]
\end{assumption}
Randomness enters our model through the random data samples used to compute stochastic gradients, as well as the random masks sampled for the coordinate sparsifier (cf. \Cref{sec:method}).
We make the natural assumption that these sources of randomness are independent.
\begin{assumption}[Independent Samples]\label{ass:independence}
    The collection of all data samples $\{\xi_{i,t}^r\}$ and all sparsification masks $\{S_r\}$ across all workers, local steps, and rounds are mutually independent.
\end{assumption}
Moreover, we make the following assumptions about the stochastic gradients:
\begin{assumption}[Unbiased stochastic gradients]\label{ass:unbiased}
    For every worker $i$ and every $x \in \R^d$,
    \[
        \ExpCond{g_i(x,\xi_i)}{x} = \nabla f(x).
    \]
\end{assumption}
\begin{assumption}[Bounded variance]\label{ass:bounded_variance}
    There exists $\sigma^2 \ge 0$ such that for every worker $i$ and every $x \in \R^d$,
    \[
        \ExpCond{\|g_i(x,\xi_i)-\nabla f(x) \|^2}{x} \leq \sigma^2.
    \]
\end{assumption}
\begin{assumption}[Bounded second moment]\label{ass:bounded_second_moment}
    There exists $G>0$ such that for every worker $i$ and every $x \in \R^d$,
    \[
        \ExpCond{ \|g_i(x,\xi_i)\|^2 }{ x } \leq G^2.
    \]
\end{assumption}
\Cref{ass:unbiased,ass:bounded_variance,ass:bounded_second_moment} are written worker by worker, but the constants \(L\), \(\sigma^2\), and \(G^2\) do not depend on \(i\) because all workers optimize the same objective in the data-homogeneous setting.

The next notation is only needed in the proofs.
In round \(r\), worker \(i\) performs \(H_i\) local steps, where \(H_i\) is defined later in \eqref{eq:H_i}.
For round \(r\) and local step index \(t\), let \(w_{i,t}^r\) denote the point at which worker \(i\) evaluates its \(t\)-th stochastic gradient in that round.
We also define the active worker set at local time \(t\) by
\begin{equation}\label{eq:A_t-89yf908df}
    A_t \eqdef \{i\in\{1,\dots,n\}: H_i>t\},
    \qquad
    m_t \eqdef |A_t|.
\end{equation}
So \(A_t\) is exactly the set of workers that still perform a local step at local time \(t\).

With the sigma-field
\[
    \cF_{r,t}
    \eqdef
    \sigma\!\left(
    \{x_i^r\}_{i=1}^n,\;
    \{w_{i,s}^r,\xi_{i,s}^r:\ i\in A_s,\ s=0,\dots,t-1\},\;
    \{w_{i,t}^r:\ i\in A_t\}
    \right) ,
\]
This sigma-field contains the history up to local time \(t\) in round \(r\), except for the fresh samples drawn at that time.

%
Under \Cref{ass:independence,ass:unbiased}, conditioned on \(\cF_{r,t}\), the random vectors
\[
    g_i(w_{i,t}^r,\xi_{i,t}^r)-\nabla f(w_{i,t}^r),
    \qquad i\in A_t ~,
\]
are independent and have conditional mean zero.
%
%
\section{Local SGD with communication-computation overlap and infrequent sparse model averaging}\label{sec:method}
Here we describe the method studied in this paper.
It combines three ideas:
local SGD, sparse model communication, and communication-computation overlap.
The main goal is to use all the available worker computation.
In particular, workers should keep taking local SGD steps instead of sitting idle while communication is happening.

As explained in \Cref{sec:problem_setup}, the workers may have different computation times, and \(\tau\) is the common clock used to define rounds.
The method uses two timing parameters.
The integer \(M \ge 1\) controls how long workers compute locally before communication starts.
More precisely, the first phase of each round lasts \(M\tau\) seconds.
After that, the workers communicate sparse model information with the server.
This communication takes \(\zeta\) seconds.
During those \(\zeta\) seconds, workers keep computing locally.
When the server response arrives, each worker merges the delayed sparse average with its current local model and starts the next round.

The full method, \loscarsgd, is summarized in \Cref{alg:general}.
\begin{algorithm}[thbp]
\caption{\loscarsgd ({\color{oscar} L}ocal \sgd with {\color{oscar}O}verlapped {\color{oscar} S}parse {\color{oscar} C}orrected {\color{oscar} A}ve{\color{oscar} R}aging)}
\label{alg:general}
\begin{algorithmic}[1]
    \STATE Input: Number of workers $n$;  worker compute times $\tau_1,\dots,\tau_n \in \mathbb N$; $\tau \gets \operatorname{lcm}(\tau_1,\dots,\tau_n)$ (least common multiple of the worker compute times)
    \STATE integer $M \ge 1$ (local-computation parameter)
    \STATE communication duration $\zeta \in \{0,\tau,2\tau,\dots\}$
    \STATE sparsification level $K \in \{1,\dots,d\}$
    \STATE stepsize $\eta > 0$
    \STATE $N_i \gets M\tau/\tau_i$ (\#local steps before communication starts) for each worker $i$
    \STATE $Q_i \gets \zeta/\tau_i$ (\# local steps during communication) for each worker $i$
    \STATE Initialize $x_i^0 \in \mathbb R^d$ for $i=1,\dots,n$
    \FOR{$r=0,1,2,\dots$}
        \STATE Sample a common Rand-$K$ mask $S_r \subseteq [d]$
        \FORALL{workers $i=1,\dots,n$ \textbf{in parallel}}
            \STATE Starting from $x_i^r$, run $N_i$ local SGD steps with stepsize $\eta$ to obtain $y_i^r$
            \STATE Form compressed message $m_i^r \gets \mathcal C_{S_r}(y_i^r)$
            \STATE Send $m_i^r$ to the server
            \STATE While communication is in flight, run $Q_i$ further local SGD steps, using stepsize $\eta$, from $y_i^r$ to obtain $z_i^r$
        \ENDFOR
        \STATE Server computes
        $$
            m^r \gets \frac1n \sum_{i=1}^n m_i^r
        $$
        \STATE Server broadcasts $m^r$ to all workers
        \FORALL{workers $i=1,\dots,n$ \textbf{in parallel}}
            \STATE Merge delayed average with latest local model:
            \[
                x_{i,j}^{r+1} \gets
                    \begin{cases}
                        m_j^r + (z_{i,j}^r-y_{i,j}^r), & j \in S_r,\\
                        z_{i,j}^r, & j \notin S_r
                    \end{cases}
                \qquad \forall  j \in [d]
            \]
        \ENDFOR
    \ENDFOR
\end{algorithmic}
\end{algorithm}

We now walk through the main parts of one round in more detail.
\paragraph{One round in detail} 
\label{par:local_computation_within_one_round}
 
At the beginning of round $r \in \bbN_0$, worker $i$ holds their local model $x_i^r \in \R^d$.
During the first \(M\tau\) seconds of the round, worker $i$ performs 
\begin{equation}\label{eq:N_i}
    N_i \eqdef \frac{M\tau}{\tau_i}
\end{equation}
local SGD steps and reaches an intermediate iterate $y_i^r$.
This is the model that worker $i$ prepares for communication.
Each worker compresses \(y_i^r\) and sends the resulting sparse message to the server.

While this communication is taking place, the workers do not remain idle.
Instead, they continue running SGD locally.
In particular, worker $i$ runs SGD for another 
\begin{equation}\label{eq:Q_i}
    Q_i \eqdef \frac{\zeta}{\tau_i}
\end{equation}
steps over the next \(\zeta\) seconds and reaches a newer iterate \(z_i^r\).
So \(y_i^r\) is the model at the moment communication starts, while \(z_i^r\) is the model held by worker \(i\) when the server message returns.

Hence, in one round, worker \(i\) performs a total of
\begin{equation}\label{eq:H_i}
    H_i \eqdef N_i+Q_i \overset{\eqref{eq:N_i}+\eqref{eq:Q_i}}{=} \frac{M\tau+\zeta}{\tau_i}
\end{equation}
local SGD steps. 
By construction, \(N_i\), \(Q_i\), and \(H_i\) are integers, and \(H_i \geq 1\).

At the same time, the server averages the sparse messages received from all workers and sends the result back.
When this message arrives, each worker combines it with its current local model \(z_i^r\) to form the next starting point \(x_i^{r+1}\).

The algorithm is written using \(N_i\) and \(Q_i\) because this makes the round structure easy to state.
In practice, one does not need to precompute these numbers and force workers to stop after exactly \(N_i\) or \(Q_i\) steps.
Instead, workers can simply keep running local SGD, and the system can decide when communication starts.
If \(\tau\) is known, then starting communication after \(M\tau\) seconds gives the same behavior as \Cref{alg:general}.
If \(\tau\) is not known, one can instead use a time-based synchronization interval as a hyperparameter.
As discussed under \Cref{ass:worker_times}, the integer-time assumption is a modeling device for the analysis, not a literal implementation requirement.

\paragraph{Compressed synchronization} 
\label{par:compressed_synchronization}

We use sparsification to reduce the amount of data communicated in each round.
This helps keep the communication window short, which in turn reduces how stale the communicated model becomes before the merged model is formed.

We fix the sparsification level $K \in [d] \eqdef \{1,\dots,d\}$.
In every round $r \in \bbN_0$, a mask $S_r \subseteq [d]$ is sampled uniformly at random among all subsets of cardinality $K$, independently of the algorithmic history up to the start of round $r$ and the stochastic samples drawn within round $r$.
The same mask \(S_r\) is used by all workers in that round, so the server receives the same set of coordinates from every worker.
We denote the level of sparsification and its complement by
\begin{equation}\label{eq:pq}
    p \eqdef \frac{K}{d},
    \qquad
    q \eqdef 1-p ~.
\end{equation}

We define the compressor associated with $S_r$ by $\mathcal C_{S_r}(x) \eqdef \Proj_{S_r}x$, where $\Proj_{S_r}$ denotes the coordinate projection onto the coordinates in $S_r$. 
This operator keeps the coordinates in \(S_r\) and drops the rest.
Since $S_r$ is a uniform random subset of $[d]$ of cardinality $K$, each coordinate is selected with probability $p=\nicefrac{K}{d}$, and therefore
$$
    \Exp{\mathcal C_{S_r}(x)} = \Exp{\Proj_{S_r}(x)} = \frac Kd x.
$$
Worker \(i\) sends the compressed message
$$
    m_i^r \eqdef \mathcal C_{S_r}(y_i^r)=\Proj_{S_r}y_i^r.
$$
The server averages these sparse messages and broadcasts the result back to all workers:
$$
    m^r = \frac{1}{n}\sum_{i=1}^n m_i^r.
$$
Since the same mask $S_r$ is used by all workers, this is equivalent to
\begin{equation*}
    m^r = \Proj_{S_r} (\bar{y}^r),
    \qquad
    \bar y^r \eqdef \frac{1}{n}\sum_{i=1}^n y_i^r.    
\end{equation*}
The important point is that \(m^r\) is formed from the older models \(y_i^r\), not from the newer models \(z_i^r\).
When the averaged message arrives after \(\zeta\) seconds, each worker combines it with its current local model \(z_i^r\) to form the next round's starting point \(x_i^{r+1}\).
\paragraph{Merge rule}\label{par:merge_rules}
At the end of round \(r\), worker \(i\) has two relevant objects:
the newest local model \(z_i^r\), and the delayed server message \(m^r\).
The merge rule says how these two objects are combined.

One possible merge rule is to overwrite the communicated coordinates with the server value:
\begin{equation*}
    x_{i,j}^{r+1}
    =   \begin{cases}
            m_j^r, & j \in S_r ~, \\
            z_{i,j}^r, & j \notin S_r ~.
        \end{cases}
\end{equation*}

Under this rule, the coordinates in \(S_r\) are replaced by the delayed average, while the other coordinates stay equal to the most recent local iterate \(z_i^r\).
This rule is simple, but it throws away the progress made on the communicated coordinates during the overlap window.

A better alternative is to keep that progress.
This is the rule used in \Cref{alg:general}:
\begin{equation*}
    x_{i,j}^{r+1}
    =
    \begin{cases}
        \bar y_j^r + (z_{i,j}^r-y_{i,j}^r), & j\in S_r ~,\\
        z_{i,j}^r, & j\notin S_r ~.
    \end{cases}    
\end{equation*}
Under this rule, \(z_i^r\) is the baseline.
On the communicated coordinates, we add the correction \(\bar y^r-y_i^r\).
So the worker keeps the local progress \(z_i^r-y_i^r\) made during communication, while still moving the communicated coordinates toward the average model.
\paragraph{Special cases} \label{par:special_cases}
Our method combines three algorithmic ingredients:
local training, sparse communication, and communication-computation overlap.
Each of them can be turned off, yielding variants that are covered by existing methods in the literature; see \Cref{tbl:intro_methods}

Setting \(\zeta=0\) removes communication-computation overlap.
Indeed, then \(Q_i=0\) for every worker, so no local steps are taken while communication is in flight.
This does not remove communication itself; rather, it specializes the model to the case in which communication contributes no extra time that could be overlapped with computation.
Setting \(K=d\) removes sparsification, since then all coordinates are communicated.
If, in addition, all workers have the same logical step time, \(\tau_i=\tau\), then each worker takes the same number \(N_i=M\) of pre-communication local steps, which recovers the usual homogeneous \localsgd regime.
The further special case \(M=1\) yields one local step per round; together with \(K=d\) and \(\zeta=0\), this recovers synchronized \minibatch.
The choice \(\tau_i=\tau\) should be understood only as a specialization of the model to equal step counts per round, not as a literal requirement that the underlying hardware be homogeneous.

\paragraph{Connection to asynchronous SGD}
Our method is motivated by the same systems principle as asynchronous \sgd \citep{agarwal2011distributed, maranjyan2025thesis}:
workers should keep computing whenever possible instead of idling during synchronization or communication.
The main difference is that our method retains an explicit round structure.
During a round, workers continue optimizing while communication is in flight, but at the end of the round they all receive the same server message and update from it.
In asynchronous \sgd, by contrast, there is no such round-level synchronization:
updates are applied whenever a worker finishes a stochastic-gradient computation, which leads to stale information.

For this reason, our method is more structured than asynchronous \sgd, yet it still contains a controlled form of staleness.
The server message is formed from the older iterates \(y_i^r\), while by the time that message is used each worker has already advanced to \(z_i^r\).
When \(K<d\), only a subset of coordinates is synchronized in each round, which makes the method closer in spirit to asynchronous training than classical \localsgd.
When \(K=d\), this sparsity effect disappears, and the method becomes \localsgd with communication-computation overlap and delayed full-model averaging.
In this sense, the method sits between \localsgd and asynchronous \sgd.
A closely related work is that of \citet{tyurin2026birch}, which allows local training and asynchronous model averaging, but does not include model sparsification.
\section{Theory}\label{sec:theory}
Before we state our main result, we collect the aggregate quantities that appear in the rate.
\begin{equation}\label{eq:-0-0-098uy8ygugf}
    \bar N \eqdef \frac{1}{n}\sum_{i=1}^n N_i,
    \qquad
    \bar H \eqdef \frac{1}{n}\sum_{i=1}^n H_i,
    \qquad
    H_{\max} \eqdef \max_{1\leq i\leq n} H_i,
\end{equation}
\begin{equation}\label{eq:98yf9d8yf0d_90uf09d}
    S_N \eqdef \sum_{i=1}^n N_i^2,
    \qquad
    S_Q \eqdef \sum_{i=1}^n Q_i^2,
\end{equation}
These quantities summarize the amount and distribution of local computation in one round:
\(\bar N\) and \(\bar H\) are the average numbers of pre-communication and total local steps,
\(H_{\max}\) is the largest total number of local steps performed by any worker,
and \(S_N\) and \(S_Q\) are the corresponding squared aggregates for the pre-communication and overlap phases.
Finally, define
\begin{equation}\label{eq:Psi_H-defn}
    \Psi_H
    \eqdef
    \sum_{i=1}^n\sum_{t=0}^{H_i-1} t^2
    =
    \sum_{i=1}^n \frac{(H_i-1)H_i(2H_i-1)}{6}.
\end{equation}
The quantity \(\Psi_H\) captures the cumulative within-round drift created by taking multiple local SGD steps before the next synchronization.
In particular, \(\Psi_H=0\) when every worker performs only one local step per round.
We can now state the main convergence guarantee for \Cref{alg:general}.
\begin{restatable}[Convergence guarantee]{theorem}{maintheorem}\label{thm:main}
    Consider \Cref{alg:general}, initialized with
    $$
        x_i^0 = x^0 \in \R^d,
        \qquad i=1,\dots,n.
    $$
    Let \Cref{ass:lower_bounded,ass:smoothness,ass:independence,ass:unbiased,ass:bounded_variance,ass:bounded_second_moment} hold.
    Choose a stepsize \(\eta\) satisfying
    $$
        0<\eta \leq \frac{1}{8LH_{\max}},
    $$
    where \(H_{\max}\) is defined in \eqref{eq:-0-0-098uy8ygugf}.
    Choose the sparsification coefficient \(K \in \{1,\dots,d\}\), let \(q=1-\frac{K}{d}\) as in \eqref{eq:pq}, and choose parameters \(\alpha>0\) and \(\beta>0\) such that
    $$
        c \eqdef q(1+\alpha)(1+\beta) < 1.
    $$
    Further, define
    $$
        B \eqdef q(1+\beta)\left(1+\frac1\alpha\right),
        \qquad
        D \eqdef 1+\frac{q}{\beta}.
    $$
    Then for every \(R\ge 1\), the average iterate \(\bar x^r \eqdef \frac{1}{n}\sum_{i=1}^n x_i^r\), satisfies
    \begin{eqnarray}\label{eq:main_rate}
        \frac1R\sum_{r=0}^{R-1}\Exp{\|\nabla f(\bar x^r)\|^2}
        &\leq& 
        \frac{4(f(x^0)-f^\star)}{\eta\bar H R}
        +
        \frac{4L\eta\sigma^2}{n}
        \notag\\
        &&+ 
        \frac{6L^2\eta^2G^2\Psi_H}{n\bar H}
        +
        \frac{6L^2\eta^2G^2H_{\max}}{n\bar H}
        \frac{BS_N+DS_Q}{1-c},
    \end{eqnarray}
    where
    \(\bar{H}\) is defined in \eqref{eq:-0-0-098uy8ygugf}, \(S_N\) and \(S_Q\) are defined in \eqref{eq:98yf9d8yf0d_90uf09d}, and
    \(\Psi_H\) is defined in \eqref{eq:Psi_H-defn}.
\end{restatable}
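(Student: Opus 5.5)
We will run a virtual-sequence argument on the average iterate \(\bar x^r\). The key observation is that the merge rule in \Cref{alg:general} preserves averages. Averaging the merge over \(i\) on \(j\in S_r\) gives \(\bar y_j^r+(\bar z_j^r-\bar y_j^r)=\bar z_j^r\), and off \(S_r\) it gives \(\bar z_j^r\) as well. Hence \(\bar x^{r+1}=\bar z^r\), so that
\[
\bar x^{r+1}=\bar x^r-\frac{\eta}{n}\sum_{t=0}^{H_{\max}-1}\sum_{i\in A_t} g_i(w_{i,t}^r,\xi_{i,t}^r).
\]
We apply \(L\)-smoothness to this one-round update. Conditioning step by step on \(\cF_{r,t}\) and using \Cref{ass:unbiased,ass:bounded_variance} with conditional independence across \(i\in A_t\), we split each gradient into \(\nabla f(w_{i,t}^r)\) plus noise. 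The noise contributes at most \(L\eta^2\sigma^2\sum_t m_t/n^2=L\eta^2\sigma^2\bar H/n\). For the inner product term we use \(-\langle a,b\rangle=\tfrac12(\|a-b\|^2-\|a\|^2-\|b\|^2)\) with \(a=\nabla f(\bar x^r)\), \(b=\nabla f(w_{i,t}^r)\). This yields \(-\tfrac{\eta\bar H}{2}\|\nabla f(\bar x^r)\|^2\), a negative term in \(\|\nabla f(w_{i,t}^r)\|^2\), and a drift term \(\tfrac{\eta L^2}{2n}\sum_{i,t}\|w_{i,t}^r-\bar x^r\|^2\). The stepsize condition \(\eta\le 1/(8LH_{\max})\) lets the negative \(\|\nabla f(w)\|^2\) terms absorb the mean-part of \(\tfrac{L\eta^2}{2}\|\sum\cdot\|^2\), via Cauchy–Schwarz over at most \(nH_{\max}\) terms. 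Constants may be loosened, which is where the factors \(4\) and \(6\) come from.

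The drift is handled by writing \(w_{i,t}^r-\bar x^r=(x_i^r-\bar x^r)-\eta\sum_{s<t}g_i(w_{i,s}^r,\xi_{i,s}^r)\) and using \(\|a+b\|^2\le 2\|a\|^2+2\|b\|^2\). The within-round part is bounded by Cauchy–Schwarz and \Cref{ass:bounded_second_moment} by \(t\eta^2\sum_{s<t}\|g\|^2\lesssim \eta^2t^2G^2\); summing over \(i,t\) produces exactly \(\eta^2G^2\Psi_H\). This gives the third term after dividing by \(\eta\bar H\). The inter-round part is \(H_i\|x_i^r-\bar x^r\|^2\le H_{\max}\|x_i^r-\bar x^r\|^2\), which requires a bound on the consensus error \(V_r\eqdef\sum_i\Exp{\|x_i^r-\bar x^r\|^2}\).

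The main obstacle is the recursion for \(V_r\). From the merge rule, \(x_i^{r+1}-\bar x^{r+1}\) equals \((I-\Proj_{S_r})(y_i^r-\bar y^r)+(u_i^r-\bar u^r)\), where \(u_i^r\eqdef z_i^r-y_i^r\) is the overlap progress. Note that \(\Proj_{S_r}\) kills the \(y\)-disagreement on \(S_r\) but the overlap disagreement survives on all coordinates. Further, \(y_i^r-\bar y^r=(x_i^r-\bar x^r)+(v_i^r-\bar v^r)\), where \(v_i^r\) is the pre-communication progress. Because \(S_r\) is independent of the history and samples (\Cref{ass:independence}), \(\Exp{\|(I-\Proj_{S_r})a\|^2}=q\|a\|^2\) for \(\cF\)-measurable \(a\). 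The overlap term is not independent of \(S_r\)-relevant quantities in the cross term, so we apply Young twice. First, with \(\beta\), we separate \(u\) from the projected \(y\)-term, giving \(q(1+\beta)\) and a factor \(D=1+q/\beta\) on the overlap part. Second, with \(\alpha\), we separate \(x_i^r-\bar x^r\) from \(v_i^r\). This yields
\[
V_{r+1}\le cV_r+B\,\eta^2G^2S_N+D\,\eta^2G^2S_Q,
\]
using \(\sum_i\Exp{\|v_i^r-\bar v^r\|^2}\le\sum_i\Exp{\|v_i^r\|^2}\le \eta^2N_i^2G^2\) summed, and similarly for \(Q_i\). The care needed is mainly in checking that the \(q\)-factor applies before Young's inequality is used on the overlap term. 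Since \(V_0=0\), unrolling gives \(V_r\le \eta^2G^2(BS_N+DS_Q)/(1-c)\).

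Finally, we plug this bound into the drift bound, telescope the descent inequality over \(r=0,\dots,R-1\), use \Cref{ass:lower_bounded} and \(\bar x^0=x^0\), and divide by \(\eta\bar H R/4\). This produces the four terms of \eqref{eq:main_rate}.
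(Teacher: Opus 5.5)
Your proof is correct. Your consensus recursion is essentially the paper's own, but your one-round descent step takes a different route.

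\textbf{The descent step.} The paper uses $\langle a,b\rangle\ge\frac12\|a\|^2-\frac12\|a-b\|^2$, which discards the $-\frac12\|\nabla f(w_{i,t}^r)\|^2$ terms. It then controls the mean part of $\|G_r\|^2$ by re-expanding every $\nabla f(w_{i,t}^r)$ around $\bar x^r$. This creates a second copy of the consensus and drift terms (the factors $1+4L\eta H_{\max}\le\frac32$). It also lowers the descent coefficient to $\frac{\bar H}{2}-2L\eta\bar H H_{\max}\ge\frac{\bar H}{4}$, which is exactly where the constants $4$ and $6$ come from. You instead keep the $-\frac{\eta}{2n}\sum_{i,t}\|\nabla f(w_{i,t}^r)\|^2$ terms from the polarization identity and cancel the mean part of the quadratic term against them. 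This yields the same four terms with constant $2$ each, so the stated bound follows a fortiori. Moreover, if you count the $n\bar H$ summands exactly, the cancellation only needs $L\eta\bar H\le\frac12$, which is implied by, and much less restrictive than, $\eta\le\frac{1}{8LH_{\max}}$. The paper's route is more mechanical, since every quantity is expressed through $\nabla f(\bar x^r)$. It pays for this in constants and in a stepsize tied to $H_{\max}$.

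\textbf{Two details to state correctly when you write this up.}

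First, the mean/noise split of $\|G_r\|^2$ must use $\|a+b\|^2\le 2\|a\|^2+2\|b\|^2$. The reason is that $\bar\nabla_s^r$ depends on $\bar\varepsilon_t^r$ for $t<s$, so those cross terms do not vanish. Your noise constant $L\eta^2\bar H\sigma^2/n$ is already consistent with this.

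Second, in the consensus step your stated reason is backwards: the overlap disagreement \emph{is} independent of $S_r$, because the overlap steps never see the mask. That independence is precisely what puts the factor $q$ on the cross term,
$\mathbb{E}\|(I-\Proj_{S_r})a+b\|^2=q\|a\|^2+2q\langle a,b\rangle+\|b\|^2$. This is why you get $D=1+q/\beta$ rather than $1+1/\beta$. The cross term survives only because $a$ and $b$ are correlated through the common within-round trajectory, not through the mask.
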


\paragraph{Interpretation} 
The bound in \eqref{eq:main_rate} separates four effects.
The first term is the optimization term, which scales inversely with $\eta \bar{H} R$.
Larger effective local progress per round, as captured by $\eta \bar{H}$, reduces the required number of communication rounds $R$. 
Note, however, that the permissible stepsize $\eta$ must scale with $\nicefrac{1}{H_{\max}}$ to ensure convergence, meaning the effective progress $\eta \bar{H}$ is constrained by the system heterogeneity ratio $\nicefrac{\bar{H}}{H_{\max}}$.
We account for this trade-off explicitly in the wall-clock time complexity of \Cref{corollary:time_complexity}.

The second term is the stochastic noise term, and it retains the familiar \(\nicefrac{1}{n}\) variance reduction from averaging across workers that arises from \minibatch \citep{cotter2011better, gower2019sgd}.

The third term is the intrinsic local-drift penalty.
It is present even with full communication and zero delay, simply because workers take several local steps before they resynchronize.
The fourth term captures disagreement caused by delayed and partial synchronization.
Its \(S_N\)-part comes from the fact that the communicated model is formed only after \(N_i\) local steps, while its \(S_Q\)-part is the additional price of continuing local optimization during the overlap window.
This term is also amplified when \(K\) is smaller, since then \(q=1-\nicefrac{K}{d}\) is larger, which increases the constants \(c\), \(B\), and \(D\).
In other words, synchronizing fewer coordinates makes the residual disagreement harder to control.

Two simplifications are worth keeping in mind.
When \(\zeta=0\), we have \(Q_i=0\) for all \(i \in [n]\) and \(S_Q=0\), so the overlap-induced contribution inside the fourth term disappears.
When \(K=d\), we have \(q=0\), and hence \(c=B=0\) and \(D=1\), so the \(BS_N\) part of the fourth term vanishes and only the delay part proportional to \(S_Q\) remains.
The third term involving \(\Psi_H\) is different:
it is the intrinsic local-drift term, and it remains whenever workers take more than one local step per round.

Most importantly, all disagreement terms scale as \(\eta^2\).
Thus, the leading stochastic term has the same \(\eta\)-dependence as in standard SGD, while the extra price of local training, sparsification, and overlap appears only at higher order.

\paragraph{Special cases}
Theorem~\ref{thm:main} recovers several regimes of independent interest.

\paragraph{Special case: \minibatchtitle}
Classical \minibatch is recovered in the homogeneous setting \(\tau_i=\tau\) by taking \(M=1\), \(\zeta=0\), and \(K=d\).
Then every round contains exactly one full synchronized step, so \(\bar H=H_{\max}=1\), \(\Psi_H=0\), and the disagreement terms vanish.
Only the first two terms in \eqref{eq:main_rate} remain, and we obtain
$$
    \frac1R\sum_{r=0}^{R-1}\Exp{\|\nabla f(\bar x^r)\|^2}
    \leq
    \frac{4(f(x^0)-f^\star)}{\eta R}
    +
    \frac{4L\eta\sigma^2}{n}
    \!,
$$
which is the standard non-convex \minibatch rate \citep{cotter2011better, gower2019sgd}.

\paragraph{Special case: local SGD}
As discussed already in \Cref{par:special_cases}, taking \(\tau_i=\tau\), \(\zeta=0\), and \(K=d\) recovers homogeneous \localsgd with \(M\) local steps per round.
In that regime, the \(S_Q\)-part of the fourth term disappears because there is no overlap, and the \(BS_N\)-part disappears because there is no compression.
The only higher-order correction left in \eqref{eq:main_rate} is then the usual local-drift term involving \(\Psi_H\).
Thus, the theorem reduces to the standard picture for \localsgd: increasing \(M\) improves the leading optimization term by reducing communication frequency, but it also increases the drift within each round.
\paragraph{Time complexity}\label{par:time_complexity} 
We now translate the convergence bound into round and time complexity.
\begin{corollary}[Time complexity; proof in \Cref{proof:time_complexity}]\label{corollary:time_complexity}
    Let $\Delta \eqdef f(x^0)-f^\star$ and $X \eqdef \Psi_H + H_{\max}\frac{BS_N + DS_Q}{1-c}$. 
    Let $\hat{r}$ be sampled uniformly at random from $\{0, 1, \ldots, R-1\}$, independently of the algorithmic history.
    Under the assumptions of \Cref{thm:main}, if the total number of communication rounds satisfies
    \begin{equation*}
        R \ge c_R \left( \frac{\Delta L \sigma^2}{n \varepsilon \bar{H}} + \frac{\Delta L G \sqrt{X}}{\varepsilon^{3/2} \bar{H} \sqrt{n \bar H}} + \frac{\Delta L H_{\max}}{\varepsilon \bar{H}} \right)
    \end{equation*}
    for a sufficiently large constant $c_R > 0$, then $\Exp{\norm{\nabla f(\bar x^{\hat{r}})}^2} \le \varepsilon$.
    
    The corresponding bound on the wall-clock time complexity to achieve this guarantee is
    \begin{equation*}
        \mathcal{O}\rbr{\frac{\Delta L \sigma^2}{n \varepsilon} \cdot \tau_H + \frac{\Delta L G \sqrt{X}}{\varepsilon^{3/2} \sqrt{n \bar H}} \cdot \tau_H + \frac{\Delta L H_{\max}}{\varepsilon} \cdot \tau_H},
    \end{equation*}
    where $\tau_H \eqdef \frac{M\tau+\zeta}{\bar H} = \frac{n}{\sum_{i=1}^n \tau_i^{-1}}$ is the harmonic mean of worker computation times.
\end{corollary}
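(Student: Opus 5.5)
We will derive the corollary directly from \Cref{thm:main} by choosing the stepsize to balance its terms. Since \(\hat r\) is uniform on \(\{0,\dots,R-1\}\) and independent of the algorithmic history, \(\Exp{\norm{\nabla f(\bar x^{\hat r})}^2}\) equals the left-hand side of \eqref{eq:main_rate}. The definition of \(X\) lets us merge the third and fourth terms. The bound then reads
\[
    \Exp{\norm{\nabla f(\bar x^{\hat r})}^2}
    \le
    \frac{4\Delta}{\eta\bar H R}
    +\frac{4L\eta\sigma^2}{n}
    +\frac{6L^2\eta^2G^2X}{n\bar H},
\]
valid for every \(0<\eta\le 1/(8LH_{\max})\). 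It therefore suffices to make each of the three terms at most \(\varepsilon/3\).

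We choose
\[
    \eta=\min\left\{\frac{1}{8LH_{\max}},\ \frac{n\varepsilon}{12L\sigma^2},\ \sqrt{\frac{n\bar H\varepsilon}{18L^2G^2X}}\right\}.
\]
By construction, this choice makes the second and third terms each at most \(\varepsilon/3\). The first term is at most \(\varepsilon/3\) provided \(R\ge 12\Delta/(\eta\bar H\varepsilon)\). We bound \(1/\eta\) by the sum of the reciprocals of the three candidates. This gives
\[
    \frac{\Delta}{\eta\bar H\varepsilon}
    \lesssim
    \frac{\Delta L H_{\max}}{\varepsilon\bar H}
    +\frac{\Delta L\sigma^2}{n\varepsilon^2\bar H}
    +\frac{\Delta L G\sqrt X}{\varepsilon^{3/2}\bar H\sqrt{n\bar H}}.
\]
This matches the stated threshold up to the constant \(c_R\). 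I read the first term of the statement's round bound as \(\Delta L\sigma^2/(n\varepsilon^2\bar H)\), since the natural calculation produces \(\varepsilon^2\) there. If the statement really has \(\varepsilon\) in that place, I would flag it as a typo. The edge cases \(\sigma=0\) and \(X=0\) are handled by dropping the corresponding candidates from the minimum.

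For the wall-clock bound, note that each round lasts exactly \(M\tau+\zeta\) seconds: \(M\tau\) seconds of local computation followed by \(\zeta\) seconds of overlapped communication, with no idle time. The total time is therefore \(R(M\tau+\zeta)\). Using \(H_i=(M\tau+\zeta)/\tau_i\) from \eqref{eq:H_i}, we get \(\bar H=(M\tau+\zeta)\frac1n\sum_i\tau_i^{-1}\). Hence \((M\tau+\zeta)/\bar H=n/\sum_i\tau_i^{-1}=\tau_H\), the harmonic mean.

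Multiplying the minimal \(R\) by \(M\tau+\zeta=\bar H\tau_H\) cancels one factor of \(\bar H\) in every term. This yields the stated \(\mathcal O(\cdot)\) expression, with \(H_{\max}\tau_H\) in the last term and \(\sqrt{X/(n\bar H)}\) in the middle term.

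No step here is a real obstacle; the argument is bookkeeping. The points that need care are the following.
\begin{itemize}
    \item Respect the stepsize cap \(1/(8LH_{\max})\); this is what produces the \(H_{\max}\) term.
    \item Confirm that \(X\) does not depend on \(\eta\), so the stepsize choice is legitimate. This holds because \(c\), \(B\), \(D\) depend only on \(q\), \(\alpha\), \(\beta\).
    \item Take \(R\) to be an integer, which costs only a constant factor absorbed into \(c_R\).
\end{itemize}
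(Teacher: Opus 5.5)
Your proposal is correct and follows the same route as the paper's proof. You pick the stepsize as the minimum of the cap $1/(8LH_{\max})$ and the two values that make the noise and disagreement terms at most $\varepsilon/3$, solve for $R$, and multiply by the round length $M\tau+\zeta=\bar H\tau_H$. You are also right that the $\sigma^2$ term should carry $\varepsilon^2$, and this applies to both the round bound and the wall-clock bound: the paper's own proof repeats the typo, and its later remark that the minibatch case recovers the $\mathcal{O}(\Delta L\sigma^2\tau/(n\varepsilon^2))$ scaling confirms your reading.
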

The leading term in this bound keeps the standard linear-in-\(n\) minibatch speedup and depends on worker speed only through the harmonic mean \(\tau_H\).
This leading term matches the dominant stochastic term that appears in recent asynchronous-\sgd analyses such as \citet{mishchenko2022asynchronous, koloskova2022sharper}.
The effects of local training, sparsification, and overlap appear only through the higher-order correction terms involving \(X\) and \(H_{\max}\).
This is precisely the desirable regime:
overlap can improve practical efficiency without changing the leading stochastic term, provided the additional disagreement terms remain controlled.
For the \minibatch special case above, \(\tau_H=\tau\), and the leading term in the bound of \Cref{corollary:time_complexity} reduces to the standard \(\mathcal{O}\!\left(\Delta L \sigma^2 \tau /(n\varepsilon^2)\right)\) scaling.
Thus, in the fully synchronized homogeneous regime, the time complexity is exactly the expected minibatch baseline.

\section{Experiments}\label{sec:experiments}

We evaluate the proposed overlap mechanism in a PyTorch logical-time simulator.
Workers are separate model copies inside one deterministic process, so the experiments isolate the algorithmic effects of sparse communication, overlap, delay correction, and heterogeneous worker speeds without hardware or networking noise.
The main experiments use \texttt{a9a} logistic regression with four workers, normalized features, batch size $256$, matched initialization and seeds, and worker step times $(1,2,3,6)$ unless stated otherwise.
We focus here on the sparse methods: blocking \textbf{Local Sparse}, \textbf{Overlap overwrite}, and \textbf{Overlap delay-corrected}; full metrics and ablations are deferred to the appendix.

\Cref{fig:main-experiments} summarizes the main empirical conclusions.
First, when the methods have the same round duration and communicate the same number of coordinates per round, comparing loss against rounds is sufficient: logical time and communication are common rescalings of the same index.
In this matched setting, overlap improves substantially over blocking sparse averaging, and delay correction gives the best curve.
This supports the intended mechanism: communication time is turned into useful local computation, and the delay-corrected merge preserves that progress instead of overwriting it.
Second, the sparsity level provides the expected communication tradeoff.
For the delay-corrected method, reducing the communicated fraction $p$ shifts the curve left by orders of magnitude in logical bits while retaining similar optimization behavior, with only a mild loss penalty at the most aggressive sparsity levels.
Third, when communication delay is large relative to the local compute window, the merge rule matters: delay correction clearly and consistently improves over naive overwrite.

\begin{figure}[thb]
    \centering
    \setlength{\tabcolsep}{1pt}
    \begin{minipage}{0.32\linewidth}
        \centering
        \includegraphics[width=\linewidth]{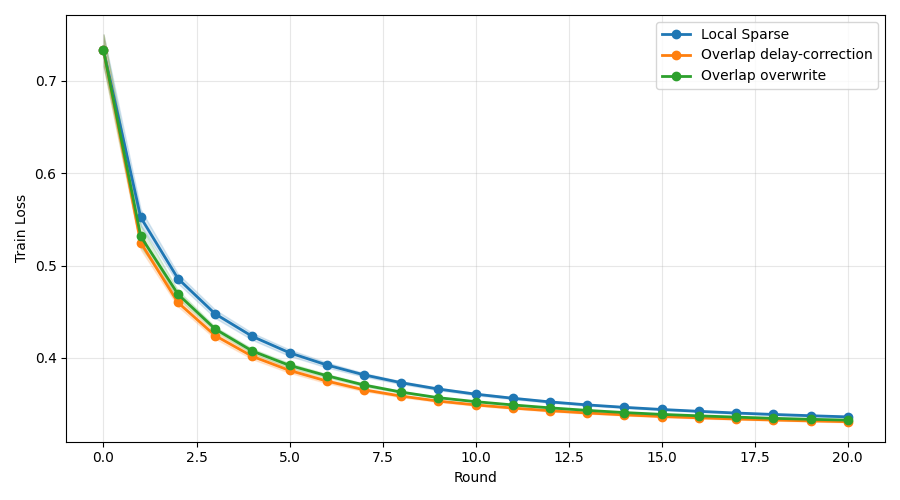}\\[-0.35em]
        {\scriptsize (a) Overlap comparison.}
    \end{minipage}\hfill
    \begin{minipage}{0.32\linewidth}
        \centering
        \includegraphics[width=\linewidth]{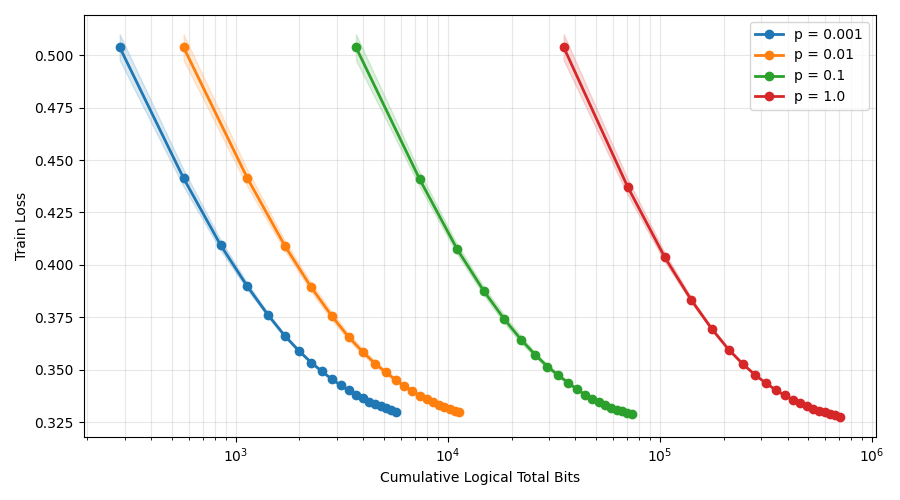}\\[-0.35em]
        {\scriptsize (b) Sparsity vs. communication.}
    \end{minipage}\hfill
    \begin{minipage}{0.32\linewidth}
        \centering
        \includegraphics[width=\linewidth]{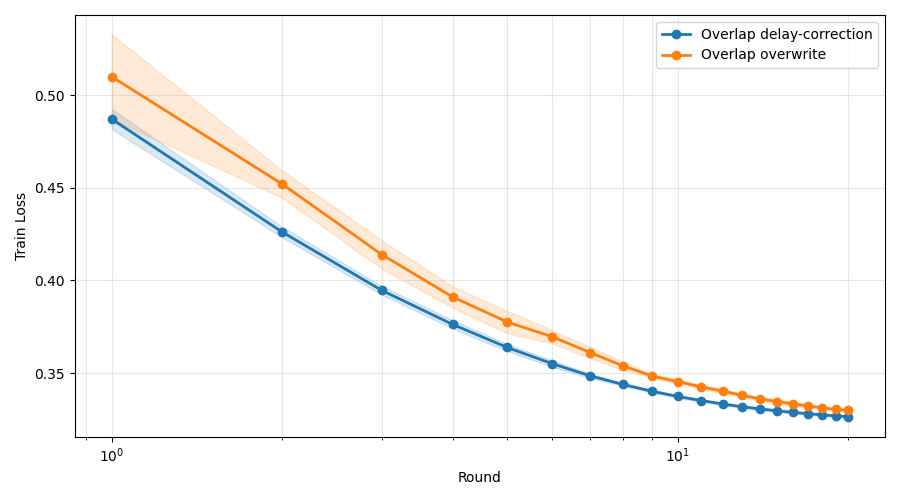}\\[-0.35em]
        {\scriptsize (c) Long-delay merge rule.}
    \end{minipage}
    \caption{Main experimental summary on \texttt{a9a}.
    (a) Overlap improves over blocking sparse averaging, and delay correction is best.
    (b) Smaller sparsity levels reduce the logical communication cost by orders of magnitude for the delay-corrected method.
    (c) Under a long communication delay, delay correction substantially outperforms overwrite.}
    \label{fig:main-experiments}
\end{figure}

The same qualitative ordering appears in additional loss, accuracy, gradient-norm, time, and communication plots.
We also ran nonconvex regularized versions of the logistic-regression experiments, but do not report them in the main paper because they did not lead to distinct conclusions.
The appendix further studies the local computation budget, communication delay, and heterogeneous-data stress tests; the latter show that very large local budgets can be harmful outside the homogeneous-data regime covered by the theory.
It also includes CIFAR-10 and Tiny ImageNet neural-network experiments, where overlap improves logical-time accuracy while processed-example plots check whether the extra steps taken during communication remain useful at a fixed amount of local computation.
\section{Conclusion}\label{sec:conclusion}
We studied local SGD with sparse model averaging, communication--computation overlap, and heterogeneous worker speeds in the data-homogeneous setting. We introduced a delay-corrected merge rule that incorporates delayed sparse synchronized information without discarding the local progress made during the overlap phase.
For smooth non-convex objectives, we proved convergence guarantees and a time complexity bound showing that the leading stochastic term retains the standard linear-in-\(n\) minibatch speedup, while the additional cost of local training, sparsification, and overlap appears through higher-order disagreement terms.
To the best of our knowledge, this is the first analysis that covers this combination of ingredients.

The experiments support this picture. Across controlled comparisons and ablations, communication--computation overlap consistently improves over blocking sparse synchronization, and the delay-corrected merge consistently improves over naive overwrite, with the largest gains when communication delay is large.
The results also show that aggressive sparsification can reduce communication by orders of magnitude with only a modest optimization penalty in the homogeneous-data regime, and that the delay-corrected variant is the most stable across different local-computation budgets.

The main limitations are that the theory is restricted to the data-homogeneous setting and does not establish optimal time complexity.
Extending the method and analysis to heterogeneous objectives, and understanding when overlap remains safe in that regime, is an important direction for future work. 
\begin{ack}
    The research reported in this publication was supported by funding from King Abdullah University of Science and Technology (KAUST): i) KAUST Baseline Research Scheme, ii) CRG Grant ORFS-CRG12-2024-6460, and iii) Center of Excellence for Generative AI, under award number 5940.
\end{ack}
{
\small
\bibliographystyle{plainnat}
\bibliography{bib}
}
\newpage
\appendix
\tableofcontents
\newpage
\section{Proofs}\label{sec:proofs}
This section collects the proofs of the main technical results.
\subsection{Within-round dynamics}
At the beginning of round \(r\), worker \(i\) holds a model \(x_i^r\in\R^d\).
Its local trajectory within the round is
\begin{equation}\label{eq:j099u0f9d8y98gfg}
    w_{i,0}^r \eqdef x_i^r,
    \qquad
    w_{i,t+1}^r
    =
    w_{i,t}^r-\eta_r\, g_i(w_{i,t}^r,\xi_{i,t}^r),
    \qquad
    t=0,1,\dots,H_i-1.
\end{equation}
We define
\begin{equation}\label{eq:ni90yfd87y98fdig}
    y_i^r \eqdef w_{i,N_i}^r,
    \qquad
    z_i^r \eqdef w_{i,H_i}^r ~.
\end{equation}
We also define the round averages
\begin{equation}
\label{eq:averages}
    \bar x^r \eqdef \frac{1}{n}\sum_{i=1}^n x_i^r,
    \qquad
    \bar y^r \eqdef \frac{1}{n}\sum_{i=1}^n y_i^r,
    \qquad
    \bar z^r \eqdef \frac{1}{n}\sum_{i=1}^n z_i^r.
\end{equation}
The delay-corrected merge is
\begin{equation}\label{eq:corrected_merge}
    x_i^{r+1}
    =
    \Proj_{S_r}\bigl(\bar y^r+z_i^r-y_i^r\bigr)+(I-\Proj_{S_r})z_i^r
    =
    z_i^r + \Proj_{S_r}(\bar y^r-y_i^r),
\end{equation}
where \(S_r\subseteq[d]\) is a uniformly random subset of cardinality \(K\), shared by all workers.

\subsection{Exact evolution of the average}

We now state and prove our first lemma, characterizing the exact evolution of the average $\bar x^r$.

\begin{lemma}[Exact evolution of the average]
\label{lem:avg_evol_09vdf}For every round \(r\),
\begin{equation}\label{eq:no8yfd9y89fd08ygf}
\bar x^{r+1}=\bar z^r = \bar x^r-\eta_r G_r,
\end{equation}
where
\begin{equation}\label{eq:oi-f9ud0fdg}
G_r \eqdef \sum_{t=0}^{H_{\max}-1} \bar g_t^r,
\qquad
\bar g_t^r \eqdef \frac{1}{n}\sum_{i\in A_t} g_i(w_{i,t}^r,\xi_{i,t}^r).
\end{equation}
\end{lemma}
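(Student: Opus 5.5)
The proof has two independent parts: first show \(\bar x^{r+1}=\bar z^r\) by averaging the merge rule, then unroll the local recursions to express \(\bar z^r\) in terms of \(\bar x^r\) and the stochastic gradients. Neither part uses randomness or expectations; the identity holds pathwise.

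\emph{Step 1: the merge preserves the average.} Average the delay-corrected merge \eqref{eq:corrected_merge} over \(i=1,\dots,n\). Because the mask \(S_r\) is shared by all workers, \(\Proj_{S_r}\) is the same linear map for every \(i\) and commutes with averaging:
\[
    \bar x^{r+1}
    = \bar z^r + \Proj_{S_r}\Bigl(\frac1n\sum_{i=1}^n (\bar y^r - y_i^r)\Bigr)
    = \bar z^r + \Proj_{S_r}(\bar y^r-\bar y^r)
    = \bar z^r .
\]
Thus the correction term \(\Proj_{S_r}(\bar y^r-y_i^r)\) is mean-zero across workers.

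\emph{Step 2: telescope each worker's trajectory.} Sum the recursion \eqref{eq:j099u0f9d8y98gfg} over \(t=0,\dots,H_i-1\) and use \(w_{i,0}^r=x_i^r\) and \(z_i^r=w_{i,H_i}^r\) from \eqref{eq:ni90yfd87y98fdig}. This gives
\[
    z_i^r = x_i^r - \eta_r\sum_{t=0}^{H_i-1} g_i(w_{i,t}^r,\xi_{i,t}^r).
\]

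\emph{Step 3: average and reindex.} Averaging over \(i\) gives
\[
    \bar z^r=\bar x^r-\frac{\eta_r}{n}\sum_{i=1}^n\sum_{t=0}^{H_i-1} g_i(w_{i,t}^r,\xi_{i,t}^r).
\]
Swap the order of summation over the index set \(\{(i,t): 0\le t<H_i\}\). For each \(t\in\{0,\dots,H_{\max}-1\}\), the condition \(t<H_i\) holds exactly when \(i\in A_t\) by \eqref{eq:A_t-89yf908df}. For \(t\ge H_{\max}\), no worker satisfies \(t<H_i\), so those terms vanish. The double sum therefore equals \(\sum_{t=0}^{H_{\max}-1}\sum_{i\in A_t} g_i(w_{i,t}^r,\xi_{i,t}^r)\). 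Dividing by \(n\) gives \(\sum_t \bar g_t^r = G_r\) as defined in \eqref{eq:oi-f9ud0fdg}, which proves \(\bar z^r = \bar x^r-\eta_r G_r\).

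\emph{Expected difficulty.} The argument is elementary; the only points needing care are the reindexing in Step 3 and the normalization. In \(\bar g_t^r\) the sum over the active set \(A_t\) is divided by \(n\), not by \(m_t\), and this is exactly what the reindexing produces. The conclusion also relies on Step 1, which works only because the mask \(S_r\) is common to all workers.
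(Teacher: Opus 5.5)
Your proof is correct and follows the paper's argument essentially step for step. You average the corrected merge \eqref{eq:corrected_merge}, pulling the shared projection $\Proj_{S_r}$ out of the sum to get $\bar x^{r+1}=\bar z^r$, then unroll each worker's local trajectory and swap the double sum via the active sets $A_t$ (normalized by $n$) to obtain $\bar x^r-\eta_r G_r$.
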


\begin{proof}
Averaging the corrected merge formula gives
\begin{eqnarray*}
\bar x^{r+1}
\overset{\eqref{eq:averages}}{=}   \frac{1}{n}\sum_{i=1}^n x_i^{r+1}
&\overset{\eqref{eq:corrected_merge}}{=}& \frac{1}{n}\sum_{i=1}^n \Bigl(z_i^r+\Proj_{S_r}(\bar y^r-y_i^r)\Bigr) \\
&=& \frac{1}{n}\sum_{i=1}^n z_i^r + \frac{1}{n}\sum_{i=1}^n  \Proj_{S_r}(\bar y^r-y_i^r)\\
&=&
\bar z^r + \Proj_{S_r}\left(\frac{1}{n}\sum_{i=1}^n (\bar y^r- y_i^r)\right)\\
&=&
\bar z^r + \Proj_{S_r}(0) =\bar z^r.
\end{eqnarray*}
Next,
\begin{eqnarray}
z_i^r
 \overset{\eqref{eq:ni90yfd87y98fdig}}{=} w_{i,H_i}^r  \overset{\eqref{eq:j099u0f9d8y98gfg}}{=} 
x_i^r-\eta_r\sum_{t=0}^{H_i-1} g_i(w_{i,t}^r,\xi_{i,t}^r). \label{eq:k-0-f9ud09yu0fd}
\end{eqnarray}
Averaging over \(i\), we get
\begin{eqnarray*}
\bar z^r &\overset{\eqref{eq:averages}}{=} & \frac{1}{n}\sum_{i=1}^n z_i^r  \\
&\overset{\eqref{eq:k-0-f9ud09yu0fd}}{=}&
 \frac{1}{n}\sum_{i=1}^n x_i^r - \eta_r\frac{1}{n}\sum_{i=1}^n\sum_{t=0}^{H_i-1} g_i(w_{i,t}^r,\xi_{i,t}^r) \\
&\overset{\eqref{eq:averages}}{=} &
\bar x^r - \eta_r\frac{1}{n}\sum_{i=1}^n\sum_{t=0}^{H_i-1} g_i(w_{i,t}^r,\xi_{i,t}^r) \\
&\overset{\eqref{eq:A_t-89yf908df}}{=}&
\bar x^r - \eta_r\sum_{t=0}^{H_{\max}-1}\frac{1}{n}\sum_{i\in A_t} g_i(w_{i,t}^r,\xi_{i,t}^r) \\
&\overset{\eqref{eq:oi-f9ud0fdg}}{=}&
\bar x^r - \eta_r G_r.
\end{eqnarray*}
Combining the two identities proves the claim.
\end{proof}

\subsection{Variance reduction by worker averaging}

For \(i\in A_t\), define \begin{equation} \label{eq:-0-89u09980gf_0fgg}
\varepsilon_{i,t}^r
\eqdef
g_i(w_{i,t}^r,\xi_{i,t}^r)-\nabla f(w_{i,t}^r),
\end{equation}
and further let
\begin{eqnarray}
\bar\varepsilon_t^r
& \eqdef &  \frac{1}{n}\sum_{i\in A_t} \varepsilon_{i,t}^r. 
\label{eq:-8-f9d--08g} 
\end{eqnarray}

Plugging  \eqref{eq:-0-89u09980gf_0fgg} into \eqref{eq:-8-f9d--08g}  gives
\begin{eqnarray}
\bar\varepsilon_t^r
&\overset{\eqref{eq:-0-89u09980gf_0fgg}+\eqref{eq:-8-f9d--08g}}{=}&
\frac{1}{n}\sum_{i\in A_t}\Bigl(g_i(w_{i,t}^r,\xi_{i,t}^r)-\nabla f(w_{i,t}^r)\Bigr) \notag \\
& \overset{\eqref{eq:oi-f9ud0fdg}}{=} &
\bar g_t^r-\frac{1}{n}\sum_{i\in A_t}\nabla f(w_{i,t}^r). \notag
\end{eqnarray}

\begin{lemma}[Variance reduction at local time \(t\)]
\label{lem:variance_reduction}Under \Cref{ass:independence} (independent samples) and \Cref{ass:bounded_variance} (bounded variance), for every round \(r\) and local time \(t\), we have
\[
\ExpCond{\|\bar\varepsilon_t^r\|^2}{\cF_{r,t}}
\le
\frac{m_t}{n^2}\sigma^2
\]
and
\[
\sum_{t=0}^{H_{\max}-1}
\ExpCond{\|\bar\varepsilon_t^r\|^2}{\{x_i^r\}_{i=1}^n}
\le
\frac{\bar H}{n}\sigma^2.
\]
\end{lemma}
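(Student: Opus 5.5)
The first bound will follow by expanding the square and using conditional independence. Fix \(r\) and \(t\). By definition \eqref{eq:-8-f9d--08g},
\[
\|\bar\varepsilon_t^r\|^2=\frac{1}{n^2}\sum_{i\in A_t}\|\varepsilon_{i,t}^r\|^2+\frac{1}{n^2}\sum_{i\neq k\in A_t}\inner{\varepsilon_{i,t}^r}{\varepsilon_{k,t}^r}.
\]
Take conditional expectation given \(\cF_{r,t}\) and use the observation at the end of \Cref{sec:problem_setup}. By \Cref{ass:independence,ass:unbiased}, conditionally on \(\cF_{r,t}\) the vectors \(\varepsilon_{i,t}^r\), \(i\in A_t\), are independent and have mean zero. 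Here the points \(w_{i,t}^r\) are \(\cF_{r,t}\)-measurable and the samples \(\xi_{i,t}^r\) are fresh. Hence every cross term has zero conditional expectation, since \(\ExpCond{\inner{\varepsilon_{i,t}^r}{\varepsilon_{k,t}^r}}{\cF_{r,t}}=\inner{\ExpCond{\varepsilon_{i,t}^r}{\cF_{r,t}}}{\ExpCond{\varepsilon_{k,t}^r}{\cF_{r,t}}}=0\). For each diagonal term, \Cref{ass:bounded_variance} applied at the \(\cF_{r,t}\)-measurable point \(w_{i,t}^r\) gives \(\ExpCond{\|\varepsilon_{i,t}^r\|^2}{\cF_{r,t}}\le\sigma^2\). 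This uses that the conditional law of \(\xi_{i,t}^r\) given \(\cF_{r,t}\) is still \(\cD\), which is exactly the independence assumption. Summing over the \(m_t\) active workers yields \(\frac{m_t}{n^2}\sigma^2\).

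For the second bound, note that \(\sigma(\{x_i^r\}_{i=1}^n)\subseteq\cF_{r,t}\) for every \(t\). The tower property therefore gives \(\ExpCond{\|\bar\varepsilon_t^r\|^2}{\{x_i^r\}}\le \frac{m_t}{n^2}\sigma^2\). The remaining fact is the counting identity
\[
\sum_{t=0}^{H_{\max}-1}m_t=\sum_{t=0}^{H_{\max}-1}\sum_{i=1}^n\mathbbm{1}[H_i>t]=\sum_{i=1}^n H_i=n\bar H.
\]
This holds because worker \(i\) belongs to \(A_t\) exactly for \(t=0,\dots,H_i-1\). Summing over \(t\) then gives \(\frac{n\bar H}{n^2}\sigma^2=\frac{\bar H}{n}\sigma^2\).

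The only delicate step is justifying the conditional independence and zero mean of the noise given \(\cF_{r,t}\). One must check that \(\cF_{r,t}\) contains the round-start models, all earlier samples, and the current evaluation points, but not the time-\(t\) samples. The masks \(S_{r'}\) for \(r'<r\) enter only through \(x_i^r\), and \(S_r\) is unused within the round. So \Cref{ass:independence} makes the time-\(t\) samples independent of \(\cF_{r,t}\) and of each other. This is precisely the remark already stated after the definition of \(\cF_{r,t}\), so I will invoke it directly rather than re-derive it.
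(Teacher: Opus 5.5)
Your proposal is correct and follows the paper's proof essentially step for step: conditional independence and zero mean given $\cF_{r,t}$ kill the cross terms, \Cref{ass:bounded_variance} bounds each diagonal term by $\sigma^2$, and the counting identity $\sum_{t=0}^{H_{\max}-1} m_t = n\bar H$ together with the tower property yields the summed bound. The only differences are minor: you apply the tower property before summing over $t$ rather than after, which is immaterial, and you explicitly note that the vanishing cross terms also need \Cref{ass:unbiased}, which the paper uses implicitly even though the lemma statement lists only \Cref{ass:independence,ass:bounded_variance}.
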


\begin{proof}
Conditioned on \(\cF_{r,t}\), the vectors
\(
\{\varepsilon_{i,t}^r, i\in A_t\},
\)
are independent and conditionally mean zero.
Therefore,
\begin{equation}\label{eq:i-09yp9dh8fgfg}
\ExpCond{\|\bar\varepsilon_t^r\|^2 }{\cF_{r,t}}
\overset{\eqref{eq:-8-f9d--08g} }{=}
\frac1{n^2}
\ExpCond{\left\|\sum_{i\in A_t}\varepsilon_{i,t}^r\right\|^2 }{ \cF_{r,t} } =
\frac1{n^2}
\sum_{i\in A_t}
\ExpCond{\|\varepsilon_{i,t}^r\|^2 }{ \cF_{r,t}},
\end{equation}
because the cross terms vanish.
By \Cref{ass:bounded_variance}, we have
\[
\ExpCond{\|\varepsilon_{i,t}^r\|^2}{ \cF_{r,t} }
\le
\sigma^2.
\]
Plugging this into \eqref{eq:i-09yp9dh8fgfg}, and recalling that  $m_t = |A_t|$, we get
\begin{equation}\label{eq:8g87fytd98yf8dg}
\ExpCond{\|\bar\varepsilon_t^r\|^2 }{ \cF_{r,t} }
\le
\frac{m_t}{n^2}\sigma^2.
\end{equation}
Summing the above inequality over \(t \in \{0,\dots, H_{\max}-1\}\) and using
\begin{eqnarray}
\sum_{t=0}^{H_{\max}-1} m_t & \overset{\eqref{eq:A_t-89yf908df}}{=} & \sum_{t=0}^{H_{\max}-1} |A_t|  \notag \\
& \overset{\eqref{eq:A_t-89yf908df}}{=} &\sum_{t=0}^{H_{\max}-1} \left| \{i\in\{1,\dots,n\}: H_i>t\} \right|  \notag \\
&=& \sum_{t=0}^{H_{\max}-1} \sum_{i=1}^{n} {\bf 1}(H_i > t)  \notag \\ 
&=& \sum_{i=1}^{n}  \sum_{t=0}^{H_{\max}-1} {\bf 1}(H_i > t) \notag \\ 
&=& \sum_{i=1}^n H_i \overset{\eqref{eq:-0-0-098uy8ygugf}}{=} n\bar H,\label{eq:bo87v8yfd7y87tf8d}
\end{eqnarray}
we obtain
\[
\sum_{t=0}^{H_{\max}-1}
\ExpCond{\|\bar\varepsilon_t^r\|^2}{ \cF_{r,t}  }
\overset{\eqref{eq:8g87fytd98yf8dg}}{\leq}
\frac{1}{n^2}\left(\sum_{t=0}^{H_{\max}-1} m_t\right)\sigma^2
\overset{\eqref{eq:bo87v8yfd7y87tf8d}}{=}
\frac{\bar H}{n}\sigma^2.
\]
Taking conditional expectation with respect to \(\{x_i^r\}_{i=1}^n\) and using the tower property, we obtain
\[
\sum_{t=0}^{H_{\max}-1}
\ExpCond{\|\bar\varepsilon_t^r\|^2}{\{x_i^r\}_{i=1}^n}
=
\sum_{t=0}^{H_{\max}-1}
\ExpCond{
\ExpCond{\|\bar\varepsilon_t^r\|^2}{\cF_{r,t}}
}{\{x_i^r\}_{i=1}^n}
\le
\frac{\bar H}{n}\sigma^2.
\]
\end{proof}

\subsection{Within-round drift bounds}

Define
\begin{equation}\label{eq:4_defs}
u_i^r \eqdef y_i^r-x_i^r,
\qquad
v_i^r \eqdef z_i^r-y_i^r,
\qquad
\bar u^r \eqdef \frac{1}{n}\sum_{i=1}^n u_i^r,
\qquad
\bar v^r \eqdef \frac{1}{n}\sum_{i=1}^n v_i^r.
\end{equation}

\begin{lemma}[Within-round local drift]\label{lem:within-round-drift}Let \Cref{ass:independence} (independent samples) and \Cref{ass:bounded_second_moment} (bounded second moment) hold. Then, for every round \(r\), worker \(i\), and local time \(t\in\{0,\dots,H_i\}\), we have
\begin{equation} \label{eq:W_bound_delay}
\ExpCond{ \|w_{i,t}^r-x_i^r\|^2 }{ \{x_j^r\}_{j=1}^n }
\leq
\eta_r^2 t^2 G^2,
\end{equation}
\begin{align}
\ExpCond{\sum_{i=1}^n \|u_i^r-\bar u^r\|^2 }{ \{x_j^r\}_{j=1}^n }
&\leq
\eta_r^2 G^2 S_N,
\label{eq:U_bound_delay}
\\
\ExpCond{\sum_{i=1}^n \|v_i^r-\bar v^r\|^2 }{ \{x_j^r\}_{j=1}^n }
&\leq
\eta_r^2 G^2 S_Q.
\label{eq:V_bound_delay}
\end{align}
Moreover,
\begin{equation}
\sum_{i=1}^n\sum_{t=0}^{H_i-1}
\ExpCond{\|w_{i,t}^r-x_i^r\|^2  }{ \{x_j^r\}_{j=1}^n }
\le
\eta_r^2 G^2 \Psi_H. \label{eq:double_W_bound_delay}
\end{equation}
\end{lemma}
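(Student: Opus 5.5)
We write each displacement as a sum of stochastic gradients and use Jensen's inequality together with the bounded second moment. For \eqref{eq:W_bound_delay}, unroll \eqref{eq:j099u0f9d8y98gfg}:
\[
w_{i,t}^r-x_i^r=-\eta_r\sum_{s=0}^{t-1}g_i(w_{i,s}^r,\xi_{i,s}^r).
\]
By the convexity inequality $\|\sum_{s=0}^{t-1}a_s\|^2\le t\sum_{s=0}^{t-1}\|a_s\|^2$,
\[
\|w_{i,t}^r-x_i^r\|^2\le \eta_r^2\, t\sum_{s=0}^{t-1}\|g_i(w_{i,s}^r,\xi_{i,s}^r)\|^2 .
\]
We then take expectations term by term. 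For each $s$, condition on $\cF_{r,s}$, which contains $w_{i,s}^r$. By \Cref{ass:independence} the sample $\xi_{i,s}^r$ is fresh, so \Cref{ass:bounded_second_moment} gives
\[
\ExpCond{\|g_i(w_{i,s}^r,\xi_{i,s}^r)\|^2}{\cF_{r,s}}\le G^2 .
\]
Since $\sigma(\{x_j^r\})\subseteq\cF_{r,s}$, the tower property yields the same bound conditional on $\{x_j^r\}_{j=1}^n$. Summing the $t$ terms gives $\eta_r^2t^2G^2$. No martingale or cross-term cancellation is needed here; we deliberately use the crude Jensen bound, since only $G^2$ appears in the claim.

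Summing \eqref{eq:W_bound_delay} over $t=0,\dots,H_i-1$ and $i=1,\dots,n$ gives \eqref{eq:double_W_bound_delay} immediately, by the definition \eqref{eq:Psi_H-defn} of $\Psi_H$.

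For \eqref{eq:U_bound_delay}, first use the variance decomposition $\sum_i\|u_i^r-\bar u^r\|^2=\sum_i\|u_i^r\|^2-n\|\bar u^r\|^2\le\sum_i\|u_i^r\|^2$. Since $u_i^r=y_i^r-x_i^r=w_{i,N_i}^r-x_i^r$, applying \eqref{eq:W_bound_delay} with $t=N_i$ gives
\[
\ExpCond{\|u_i^r\|^2}{\{x_j^r\}}\le\eta_r^2N_i^2G^2,
\]
and summing over $i$ yields $\eta_r^2G^2S_N$.

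For \eqref{eq:V_bound_delay} we argue the same way, except that $v_i^r=z_i^r-y_i^r=-\eta_r\sum_{s=N_i}^{H_i-1}g_i(w_{i,s}^r,\xi_{i,s}^r)$ is a sum of $Q_i$ terms. Jensen gives $\|v_i^r\|^2\le\eta_r^2Q_i\sum_{s=N_i}^{H_i-1}\|g_i(w_{i,s}^r,\xi_{i,s}^r)\|^2$. The conditional second-moment bound, applied at each $s$ via $\cF_{r,s}$ and the tower property, then gives $\eta_r^2Q_i^2G^2$. Dropping the centering and summing yields $\eta_r^2G^2S_Q$.

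The only point requiring care is the measurability bookkeeping: each $w_{i,s}^r$ must be $\cF_{r,s}$-measurable, and $\xi_{i,s}^r$ must be independent of $\cF_{r,s}$. Both follow from the definition of $\cF_{r,t}$ and \Cref{ass:independence}, so the conditional version of \Cref{ass:bounded_second_moment} can be applied at a random point. The mask $S_r$ plays no role within the round, so nothing else is needed.
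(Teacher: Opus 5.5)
Your proposal is correct and follows the paper's proof essentially step for step. Both unroll the local trajectory, apply the bound $\|\sum_{s} a_s\|^2 \le t\sum_s\|a_s\|^2$, bound each term by $G^2$ by conditioning on $\cF_{r,s}$ and using the tower property, and drop the centering via $\sum_i\|u_i-\bar u\|^2\le\sum_i\|u_i\|^2$. The only cosmetic difference is that you get the $u$-bound by specializing \eqref{eq:W_bound_delay} at $t=N_i$ (valid since $N_i\le H_i$) instead of repeating the computation, as the paper does.
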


\begin{proof}
We begin with the identity
\begin{equation}
w_{i,t}^r - x_i^r
\overset{\eqref{eq:j099u0f9d8y98gfg}}{=}
-\eta_r \sum_{s=0}^{t-1} g_i(w_{i,s}^r,\xi_{i,s}^r).
\label{eq:w-drift-sum}
\end{equation}
Using Cauchy-Schwarz, we obtain
\begin{equation}
\|w_{i,t}^r - x_i^r\|^2
\overset{\eqref{eq:w-drift-sum}}{=}
\eta_r^2 \left\|\sum_{s=0}^{t-1} g_i(w_{i,s}^r,\xi_{i,s}^r)\right\|^2
\leq 
\eta_r^2\, t \sum_{s=0}^{t-1} \|g_i(w_{i,s}^r,\xi_{i,s}^r)\|^2.
\label{eq:w-drift-preexp}
\end{equation}
Taking conditional expectation and using the tower property together with \Cref{ass:bounded_second_moment}, we get
\begin{equation}
\ExpCond{\|g_i(w_{i,s}^r,\xi_{i,s}^r)\|^2}{\{x_j^r\}_{j=1}^n}
=
\ExpCond{\ExpCond{\|g_i(w_{i,s}^r,\xi_{i,s}^r)\|^2}{\cF_{r,s}}
}{\{x_j^r\}_{j=1}^n}
\leq
G^2.
\label{eq:bounded-second-moment-used}
\end{equation}
Substituting into \eqref{eq:w-drift-preexp}, we obtain
\begin{align}
\ExpCond{\|w_{i,t}^r - x_i^r\|^2}{\{x_j^r\}_{j=1}^n}
&\overset{\eqref{eq:w-drift-preexp} + \eqref{eq:bounded-second-moment-used}}{\le}
\eta_r^2 t^2 G^2.
\label{eq:w-drift-final}
\end{align}

Summing \eqref{eq:w-drift-final} over \(i\) and \(t\), we get
\begin{equation*}
\sum_{i=1}^n \sum_{t=0}^{H_i-1}
\ExpCond{\|w_{i,t}^r - x_i^r\|^2}{\{x_j^r\}_{j=1}^n}
\overset{\eqref{eq:w-drift-final}}{\le}
\eta_r^2 G^2 \sum_{i=1}^n \sum_{t=0}^{H_i-1} t^2
\overset{\eqref{eq:Psi_H-defn}}{=}
\eta_r^2 G^2 \Psi_H.
\end{equation*}

Next, by definition,
\begin{equation}
u_i^r
\overset{\eqref{eq:4_defs}}{=} 
y_i^r-x_i^r 
\overset{\eqref{eq:ni90yfd87y98fdig}}{=}
w_{i,N_i}^r - x_i^r
\overset{\eqref{eq:j099u0f9d8y98gfg}}{=}
-\eta_r \sum_{t=0}^{N_i-1} g_i(w_{i,t}^r,\xi_{i,t}^r).
\label{eq:u-def-sum}
\end{equation}
Using Cauchy-Schwarz again, we obtain
\begin{eqnarray}
\|u_i^r\|^2
\overset{\eqref{eq:u-def-sum}}{=}
\eta_r^2 \left\|\sum_{t=0}^{N_i-1} g_i(w_{i,t}^r,\xi_{i,t}^r)\right\|^2
\leq
\eta_r^2 N_i \sum_{t=0}^{N_i-1} \|g_i(w_{i,t}^r,\xi_{i,t}^r)\|^2.
\label{eq:u-preexp}
\end{eqnarray}
Taking conditional expectation and using \eqref{eq:bounded-second-moment-used}, we get
\begin{align}
\ExpCond{\|u_i^r\|^2}{\{x_j^r\}_{j=1}^n}
&\overset{\eqref{eq:u-preexp}+\eqref{eq:bounded-second-moment-used}}{\le}
\eta_r^2 N_i^2 G^2.
\label{eq:u-bound}
\end{align}
Recall that \(\bar u^r \eqdef \frac{1}{n}\sum_{i=1}^n u_i^r\). Since the variance of the vectors \(\{u_i^r\}_{i=1}^n\) can be bounded by their second moment, i.e., 
\begin{equation}
\frac{1}{n}\sum_{i=1}^n \|u_i^r-\bar u^r\|^2
\leq
\frac{1}{n}\sum_{i=1}^n \|u_i^r\|^2,
\label{eq:-9b0fdyfdg}
\end{equation}
we obtain
\begin{eqnarray*}
\ExpCond{\sum_{i=1}^n \|u_i^r-\bar u^r\|^2}{\{x_j^r\}_{j=1}^n}
&\overset{\eqref{eq:-9b0fdyfdg}+\eqref{eq:u-bound}}{\le}&
\eta_r^2 G^2 \sum_{i=1}^n N_i^2
\notag\\
&\overset{\eqref{eq:98yf9d8yf0d_90uf09d}}{=}&
\eta_r^2 G^2 S_N.
\end{eqnarray*}

Finally, by definition,
\begin{equation}
v_i^r
\overset{\eqref{eq:4_defs}}{=} z_i^r-y_i^r \overset{\eqref{eq:ni90yfd87y98fdig}}{=}  w_{i,H_i}^r - w_{i,N_i}^r \overset{\eqref{eq:j099u0f9d8y98gfg}}{=}
-\eta_r \sum_{t=N_i}^{H_i-1} g_i(w_{i,t}^r,\xi_{i,t}^r),
\label{eq:v-def-sum}
\end{equation}
which contains exactly \(Q_i = H_i-N_i\) terms. Using \eqref{eq:v-def-sum} and Cauchy-Schwarz, we obtain
\begin{equation}
\|v_i^r\|^2
\overset{\eqref{eq:v-def-sum}}{=} 
\eta_r^2 \left\|\sum_{t=N_i}^{H_i-1} g_i(w_{i,t}^r,\xi_{i,t}^r)\right\|^2
\leq 
\eta_r^2 Q_i \sum_{t=N_i}^{H_i-1} \|g_i(w_{i,t}^r,\xi_{i,t}^r)\|^2.
\label{eq:=-=-0-yhfdig}
\end{equation}
Taking conditional expectation and using \eqref{eq:bounded-second-moment-used} and \eqref{eq:Q_i},
we get
\begin{align}
\ExpCond{\|v_i^r\|^2}{\{x_j^r\}_{j=1}^n}
&\overset{\eqref{eq:=-=-0-yhfdig}+\eqref{eq:bounded-second-moment-used}+\eqref{eq:Q_i}}{\leq}
\eta_r^2 Q_i^2 G^2.
\label{eq:9=-fdy9gfdf=-0fdg}
\end{align}
Using 
\begin{equation}
\sum_{i=1}^n \|v_i^r-\bar v^r\|^2
\le
\sum_{i=1}^n \|v_i^r\|^2,
\label{eq:mnoi-8ytf87gdi8y98fdgfgf}
\end{equation}
which is an analogue of the bound \eqref{eq:-9b0fdyfdg} applied to the vectors \(\{v_i^r\}_{i=1}^n\),
we obtain
\begin{eqnarray*}
\ExpCond{\sum_{i=1}^n \|v_i^r-\bar v^r\|^2}{\{x_j^r\}_{j=1}^n}
&\overset{\eqref{eq:mnoi-8ytf87gdi8y98fdgfgf}+\eqref{eq:9=-fdy9gfdf=-0fdg}}{\le}&
\eta_r^2 G^2 \sum_{i=1}^n Q_i^2
\notag\\
&\overset{\eqref{eq:98yf9d8yf0d_90uf09d}}{=}&
\eta_r^2 G^2 S_Q.
\end{eqnarray*}
This proves \eqref{eq:W_bound_delay}--\eqref{eq:double_W_bound_delay}.
\end{proof}

\subsection{Disagreement recursion}

Define the {\em disagreement} related to the vectors $\{x_i^r\}_{i=1}^n$ and $\{y_i^r\}_{i=1}^n$ as
\begin{equation}\label{eq:-09y9yf8dy8fy098f+-0=-9g}
X^r \eqdef \sum_{i=1}^n \|x_i^r-\bar x^r\|^2, \qquad Y^r \eqdef \sum_{i=1}^n \|y_i^r-\bar y^r\|^2.
\end{equation}
Analogously,  define the disagreement quantities
\begin{equation}\label{eq:three_defs}
U^r \eqdef \sum_{i=1}^n \|u_i^r-\bar u^r\|^2,
\qquad
V^r \eqdef \sum_{i=1}^n \|v_i^r-\bar v^r\|^2,
\qquad
Z^r \eqdef \sum_{i=1}^n \|z_i^r-\bar z^r\|^2.
\end{equation}

\begin{lemma}[Disagreement recursion]\label{lem:edr}Fix any parameters \(\alpha>0\) and \(\beta>0\), and define
\[
c \eqdef q(1+\alpha)(1+\beta),
\qquad
B \eqdef q(1+\beta)\left(1+\frac1\alpha\right),
\qquad
D \eqdef 1+\frac{q}{\beta}.
\]
Then, under \Cref{ass:independence} (mutual independence) and \Cref{ass:bounded_second_moment} (bounded second moment), for every round \(r\), we have
\begin{align}
\ExpCond{ X^{r+1} }{ \{x_i^r\}_{i=1}^n }
&\leq
c X^{r}
+
\eta_r^2 G^2\bigl(BS_N + DS_Q\bigr).
\label{eq:delay-corrected-disagreement-recursion}
\end{align}
\end{lemma}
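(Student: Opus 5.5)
We first write the deviation of each new model from the new average in terms of the old disagreement and the two drift increments. By \Cref{lem:avg_evol_09vdf}, $\bar x^{r+1}=\bar z^r$, and \eqref{eq:corrected_merge} gives $x_i^{r+1}=z_i^r+\Proj_{S_r}(\bar y^r-y_i^r)$. Using $z_i^r=y_i^r+v_i^r$ and $y_i^r=x_i^r+u_i^r$ from \eqref{eq:4_defs}, we will obtain
\[
x_i^{r+1}-\bar x^{r+1}
=(I-\Proj_{S_r})\bigl(y_i^r-\bar y^r\bigr)+\bigl(v_i^r-\bar v^r\bigr),
\qquad
y_i^r-\bar y^r=(x_i^r-\bar x^r)+(u_i^r-\bar u^r).
\]
The delay-corrected merge thus leaves the overlap increment $v_i^r-\bar v^r$ unprojected, while the projection acts only on the $y$-disagreement.

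Next, we split the two summands with Young's inequality $\|a+b\|^2\le(1+\beta)\|a\|^2+(1+\beta^{-1})\|b\|^2$. This gives
\[
X^{r+1}\le (1+\beta)\sum_i\|(I-\Proj_{S_r})(y_i^r-\bar y^r)\|^2+\Bigl(1+\tfrac1\beta\Bigr)V^r .
\]
We then take the expectation over $S_r$. By \Cref{ass:independence}, $S_r$ is independent of all within-round samples, so $\mathbb{E}_{S_r}\|(I-\Proj_{S_r})a\|^2=q\|a\|^2$ for any fixed $a$, and conditioning on everything except the mask gives $(1+\beta)qY^r$.

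The constant $1+1/\beta$ in front of $V^r$ is too large compared with the target $D=1+q/\beta$. To fix this, we will avoid the crude Young step and instead expand $\|(I-\Proj_{S_r})a_i+b_i\|^2$ exactly, writing $a_i=y_i^r-\bar y^r$ and $b_i=v_i^r-\bar v^r$. The mask-expectation of the cross term is $2q\langle a_i,b_i\rangle$, so
\[
\mathbb{E}_{S_r}\|(I-\Proj_{S_r})a_i+b_i\|^2=q\|a_i\|^2+2q\langle a_i,b_i\rangle+\|b_i\|^2 .
\]
Applying Young only to the cross term, $2q\langle a_i,b_i\rangle\le q\beta\|a_i\|^2+(q/\beta)\|b_i\|^2$, yields
\[
\mathbb{E}_{S_r}X^{r+1}\le q(1+\beta)Y^r+\Bigl(1+\tfrac q\beta\Bigr)V^r=q(1+\beta)Y^r+DV^r,
\]
which matches $D$ exactly. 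This is the step we expect to be the crux: it requires conditioning on the $v$'s and $y$'s jointly, which is legitimate precisely because the mask is independent of the whole round trajectory. With $c$ and $B$ as defined, it is also consistent with the target constants.

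Finally, we expand $Y^r$ by Young with parameter $\alpha$: $Y^r\le(1+\alpha)X^r+(1+1/\alpha)U^r$, so $q(1+\beta)Y^r\le cX^r+BU^r$. Taking the conditional expectation given $\{x_i^r\}$ (so $X^r$ is fixed) and invoking \eqref{eq:U_bound_delay} and \eqref{eq:V_bound_delay} from \Cref{lem:within-round-drift} gives $\mathbb{E}[BU^r+DV^r\mid\{x_i^r\}]\le\eta_r^2G^2(BS_N+DS_Q)$, which is \eqref{eq:delay-corrected-disagreement-recursion}.
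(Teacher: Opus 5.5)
Your proposal is correct and follows the paper's proof essentially step for step. The matching steps are: the decomposition $x_i^{r+1}-\bar x^{r+1}=(I-\Proj_{S_r})(y_i^r-\bar y^r)+(v_i^r-\bar v^r)$; the exact mask expectation $q\|a_i\|^2+2q\langle a_i,b_i\rangle+\|b_i\|^2$, which the paper records as $qZ^r+pV^r$; Young's inequality applied only to the cross term to reach $q(1+\beta)Y^r+DV^r$; and the $\alpha$-split of $Y^r$ combined with \eqref{eq:U_bound_delay}--\eqref{eq:V_bound_delay}. Conditioning on the full pre-mask history rather than only on $\{y_i^r,z_i^r\}_{i=1}^n$ is, if anything, slightly cleaner, since it makes the final tower-property step down to $\{x_i^r\}_{i=1}^n$ immediate.
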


\begin{proof}
By \Cref{lem:avg_evol_09vdf}, we have
\(
\bar x^{r+1}=\bar z^r.
\)
Therefore, for each worker \(i\),
\begin{eqnarray}
x_i^{r+1}-\bar x^{r+1}
&\overset{\eqref{eq:corrected_merge}}{=}&
z_i^r+\Proj_{S_r}(\bar y^r-y_i^r)-\bar z^r
\notag\\
&=&
z_i^r-\bar z^r-\Proj_{S_r}(y_i^r-\bar y^r)
\notag\\
&\overset{\eqref{eq:4_defs}}{=}&
(y_i^r-\bar y^r)+(v_i^r-\bar v^r)-\Proj_{S_r}(y_i^r-\bar y^r)
\notag\\
&=&
(I-\Proj_{S_r})(y_i^r-\bar y^r)+(v_i^r-\bar v^r).
\label{eq:corrected-disagreement-coordinate}
\end{eqnarray}
Because $y_i^r$ and $z_i^r$ are deterministic functions of the initial points $x_i^0$, historical masks, and data samples up to round $r$, \Cref{ass:independence} ensures that the mask $S_r$ is independent of $\{y_i^r, z_i^r\}_{i=1}^n$.
Its conditional distribution remains uniform and each coordinate is not selected with
probability \(q=1-p\). 
For any fixed vectors \(a,b\), we have
\[
\Exp{\|(I-\Proj_{S_r})a+b\|^2}
=
q\|a\|^2+2q\inner{a}{b}+\|b\|^2.
\]
Applying this with
\[
a=y_i^r-\bar y^r,
\qquad
b=v_i^r-\bar v^r,
\]
and using \eqref{eq:corrected-disagreement-coordinate}, we get
\begin{eqnarray}
\ExpCond{\|x_i^{r+1}-\bar x^{r+1}\|^2}{\{y_i^r,z_i^r\}_{i=1}^n}
&=&
q\|y_i^r-\bar y^r\|^2
+
2q\inner{y_i^r-\bar y^r}{v_i^r-\bar v^r}
+
\|v_i^r-\bar v^r\|^2
\notag\\
&=&
q\|(y_i^r-\bar y^r)+(v_i^r-\bar v^r)\|^2
+
p\|v_i^r-\bar v^r\|^2
\notag\\
&=&
q\|z_i^r-\bar z^r\|^2
+
p\|v_i^r-\bar v^r\|^2.
\label{eq:one-worker-mask-expectation}
\end{eqnarray}
Summing \eqref{eq:one-worker-mask-expectation} over \(i\), we obtain
\begin{eqnarray}
\ExpCond{X^{r+1}}{\{y_i^r,z_i^r\}_{i=1}^n}
&=&
q\sum_{i=1}^n\|z_i^r-\bar z^r\|^2
+
p\sum_{i=1}^n\|v_i^r-\bar v^r\|^2
\notag\\
&\overset{\eqref{eq:three_defs}}{=}&
qZ^r+pV^r.
\label{eq:mask-disagreement-exact}
\end{eqnarray}

Next, using the identity
\[
z_i^r-\bar z^r=(y_i^r-\bar y^r)+(v_i^r-\bar v^r)
\]
which follows from \eqref{eq:4_defs}, 
we estimate \(qZ^r+pV^r\). For any vectors \(a,b\), we can estimate \begin{eqnarray}
q\|a+b\|^2+p\|b\|^2
&=&
q\|a\|^2+2q\inner{a}{b}+(q+p)\|b\|^2
\notag\\
&=&
q\|a\|^2+2q\inner{a}{b}+\|b\|^2
\notag\\
&\leq&
q(1+\beta)\|a\|^2+\left(1+\frac{q}{\beta}\right)\|b\|^2,
\label{eq:ab-beta-bound}
\end{eqnarray}
where we used the inequality
\[
2q\inner{a}{b}
\le
q\beta\|a\|^2+\frac{q}{\beta}\|b\|^2.
\]
Applying \eqref{eq:ab-beta-bound} with
\[
a=y_i^r-\bar y^r,
\qquad
b=v_i^r-\bar v^r,
\]
and summing over \(i\), gives
\begin{eqnarray}
qZ^r+pV^r
&\leq&
q(1+\beta)Y^r+\left(1+\frac{q}{\beta}\right)V^r
\notag\\
&=&
q(1+\beta)Y^r+DV^r.
\label{eq:gamma-v-bound}
\end{eqnarray}

It remains to control \(Y^r\). Since
\[
y_i^r-\bar y^r
\overset{\eqref{eq:4_defs}}{=}
(x_i^r-\bar x^r)+(u_i^r-\bar u^r),
\]
using Young's inequality leads to \begin{eqnarray}
Y^r
&\overset{\eqref{eq:three_defs}}{=}&
\sum_{i=1}^n\|y_i^r-\bar y^r\|^2
\notag\\
&=&
\sum_{i=1}^n
\|(x_i^r-\bar x^r)+(u_i^r-\bar u^r)\|^2
\notag\\
&\leq &
\sum_{i=1}^n \left[
(1+\alpha)\|x_i^r-\bar x^r\|^2 +\left(1+\frac1\alpha\right) \|u_i^r-\bar u^r\|^2 \right]
\notag\\
&\overset{\eqref{eq:-09y9yf8dy8fy098f+-0=-9g}+\eqref{eq:three_defs}}{=}&
(1+\alpha)X^{r}
+
\left(1+\frac1\alpha\right)
U^r.
\label{eq:Y-bound-alpha}
\end{eqnarray}
Combining \eqref{eq:mask-disagreement-exact}, \eqref{eq:gamma-v-bound}, and
\eqref{eq:Y-bound-alpha}, we obtain
\begin{eqnarray}
\ExpCond{X^{r+1}}{\{y_i^r,z_i^r\}_{i=1}^n}
&\overset{\eqref{eq:mask-disagreement-exact}}{=}&
qZ^r+pV^r
\notag\\
&\overset{\eqref{eq:gamma-v-bound}}{\leq}&
q(1+\beta)Y^r+DV^r
\notag\\
&\overset{\eqref{eq:Y-bound-alpha}}{\leq}&
q(1+\beta)(1+\alpha)X^{r}
+
q(1+\beta)\left(1+\frac1\alpha\right)
U^r
+
DV^r
\notag\\
&=&
cX^{r}
+
B U^r
+
DV^r.
\label{eq:disagreement-before-expectation}
\end{eqnarray}
Finally, taking conditional expectation with respect to \(\{x_i^r\}_{i=1}^n\) and using
\eqref{eq:U_bound_delay}--\eqref{eq:V_bound_delay}, we get
\begin{eqnarray}
\ExpCond{X^{r+1}}{\{x_i^r\}_{i=1}^n}
&\overset{\eqref{eq:disagreement-before-expectation}}{\leq}&
cX^{r}
+
B\ExpCond{U^r}{\{x_i^r\}_{i=1}^n}
+
D\ExpCond{V^r}{\{x_i^r\}_{i=1}^n}
\notag\\
&\overset{\eqref{eq:U_bound_delay}+\eqref{eq:V_bound_delay}}{\leq}&
cX^{r}+\eta_r^2G^2(BS_N+DS_Q).
\end{eqnarray}
This proves \eqref{eq:delay-corrected-disagreement-recursion}.
\end{proof}

\begin{corollary}[Accumulated disagreement bound]
\label{cor:sum_disagreement}
Assume \(x_i^0=x^0\) for all \(i\), so that \(X^0=0\), and assume \(\eta_r\equiv \eta\) for all $r$.
If
\[
c=q(1+\alpha)(1+\beta)<1,
\]
then
\begin{align}
\sum_{r=0}^{R-1}\Exp{X^{r}}
&\leq
R\,\frac{\eta^2G^2\bigl(BS_N + DS_Q\bigr)}{1-c}
\label{eq:sum_disagreement_corrected_delay}
\end{align}
for every \(R\ge 1\).
\end{corollary}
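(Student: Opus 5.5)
The plan is to unroll the one-step recursion of \Cref{lem:edr} after taking total expectations. With $\eta_r\equiv\eta$ we write $E \eqdef \eta^2G^2(BS_N+DS_Q)$, which is a deterministic constant. We then take full expectation of \eqref{eq:delay-corrected-disagreement-recursion} via the tower property. Conditioning on $\{x_i^r\}_{i=1}^n$ and then averaging gives
\[
\Exp{X^{r+1}} \le c\,\Exp{X^r} + E \qquad \text{for all } r\ge 0 .
\]
The only check needed here is that the right-hand side of \eqref{eq:delay-corrected-disagreement-recursion} is linear in $X^r$ with deterministic coefficients, which it is.

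Next we unroll by induction, starting from $X^0=0$, which holds because all workers are initialized at $x^0$. This gives
\[
\Exp{X^r} \le E\sum_{k=0}^{r-1} c^k \le \frac{E}{1-c}.
\]
Here we use $0\le c<1$; note $c\ge 0$ automatically since $q\ge 0$. The bound is also trivially true for $r=0$.

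Finally, we sum over $r=0,\dots,R-1$ and bound each of the $R$ terms by $E/(1-c)$, which yields \eqref{eq:sum_disagreement_corrected_delay}.

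There is no real obstacle in this argument. The one point to state carefully is the passage from the conditional recursion to the unconditional one, since $X^r$ is a function of $\{x_i^r\}$. Everything else is a geometric series.
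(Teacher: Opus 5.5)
Your proposal is correct and follows the paper's proof: take full expectation of the recursion in \Cref{lem:edr}, unroll by induction from $\Exp{X^0}=0$ to get $\Exp{X^r}\le E\sum_{k=0}^{r-1}c^k\le E/(1-c)$, and sum the $R$ terms. Your remark that $c\ge 0$ is needed to multiply the inductive hypothesis by $c$ makes explicit a point the paper leaves implicit.
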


\begin{proof}
Let
\(
\delta^r \eqdef \Exp{X^r}\).
Taking full expectation in \eqref{eq:delay-corrected-disagreement-recursion}, and using
\(\eta_r\equiv \eta\), gives
\begin{eqnarray}
\delta^{r+1}
\overset{\eqref{eq:delay-corrected-disagreement-recursion}}{\leq}
c\delta^r + C,
\label{eq:delta-recursion}
\end{eqnarray}
where \(C \eqdef \eta^2G^2(BS_N+DS_Q)\).
Since \(x_i^0=x^0\) for all \(i\), we have
\[
X^0=0,
\qquad
\delta^0=\Exp{X^0}=0.
\]
We now prove by induction that, for every \(r\geq 0\),
\begin{equation}
\delta^r
\leq
C\sum_{j=0}^{r-1}c^j,
\label{eq:delta-geometric-bound}
\end{equation}
where the sum is interpreted as \(0\) when \(r=0\).
For \(r=0\), \eqref{eq:delta-geometric-bound} gives
\(
\delta^0\leq 0,
\)
which holds since \(\delta^0=0\). Suppose now that \eqref{eq:delta-geometric-bound} holds for some \(r\geq 0\). Then
\begin{eqnarray*}
\delta^{r+1}
\overset{\eqref{eq:delta-recursion}}{\leq}
c\delta^r+C
\overset{\eqref{eq:delta-geometric-bound}}{\leq}
cC\sum_{j=0}^{r-1}c^j+C
=
C\sum_{j=1}^{r}c^j+C
=
C\sum_{j=0}^{r}c^j.
\end{eqnarray*}
Thus \eqref{eq:delta-geometric-bound} holds for all \(r\geq 0\). Since \(c<1\), we have
\[
\sum_{j=0}^{r-1}c^j
\leq
\frac{1}{1-c},
\]
and therefore
\begin{equation}
\delta^r
\overset{\eqref{eq:delta-geometric-bound}}{\leq}
\frac{C}{1-c}
=
\frac{\eta^2G^2(BS_N+DS_Q)}{1-c}.
\label{eq:delta-uniform-bound}
\end{equation}
Summing \eqref{eq:delta-uniform-bound} over \(r=0,\dots,R-1\), we get
\begin{eqnarray*}
\sum_{r=0}^{R-1}\Exp{X^r}
&=&
\sum_{r=0}^{R-1}\delta^r
\\
&\overset{\eqref{eq:delta-uniform-bound}}{\leq}&
R\,\frac{\eta^2G^2(BS_N+DS_Q)}{1-c}.
\end{eqnarray*}
This proves \eqref{eq:sum_disagreement_corrected_delay}.
\end{proof}

\subsection{One-round descent inequality}

Define
\begin{equation}\label{eq:three_defs_final_lemma}
\bar\nabla_t^r \eqdef \frac{1}{n}\sum_{i\in A_t}\nabla f(w_{i,t}^r),
\qquad
\bar g_t^r \eqdef \frac{1}{n}\sum_{i\in A_t} g_i(w_{i,t}^r,\xi_{i,t}^r),
\qquad
G_r \eqdef \sum_{t=0}^{H_{\max}-1}\bar g_t^r.
\end{equation}

\begin{lemma}[One-round descent]\label{lem:explicit_one_round}Let \Cref{ass:smoothness} (smoothness), \Cref{ass:independence} (independent samples), \Cref{ass:unbiased} (unbiasedness), \Cref{ass:bounded_variance} (bounded variance), and \Cref{ass:bounded_second_moment} (bounded second moment) be satisfied. Then for every round \(r\), we have
\begin{align}
\ExpCond{ f(\bar x^{r+1}) }{ \{x_i^r\}_{i=1}^n }
\leq
f(\bar x^r)
&-
\eta_r\left(\frac{\bar H}{2}-2L\eta_r \bar H H_{\max}\right)\|\nabla f(\bar x^r)\|^2
\notag\\
&
+
\eta_r\frac{L^2H_{\max}}{n}\bigl(1+4L\eta_r H_{\max}\bigr)X^{r} \notag \\
&
+
L\eta_r^2\bar H\frac{\sigma^2}{n}
\notag\\
&
+
L^2\eta_r^3G^2\frac{\Psi_H}{n}\bigl(1+4L\eta_r H_{\max}\bigr).
\label{eq:98y80_098yu0fd_0yhgfdf}
\end{align}
\end{lemma}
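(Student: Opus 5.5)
We apply $L$-smoothness to the exact average recursion of \Cref{lem:avg_evol_09vdf}, $\bar x^{r+1}=\bar x^r-\eta_r G_r$, which gives
\[
f(\bar x^{r+1})\le f(\bar x^r)-\eta_r\inner{\nabla f(\bar x^r)}{G_r}+\frac{L\eta_r^2}{2}\|G_r\|^2 .
\]
Decompose $G_r=\sum_{t}\bar\nabla_t^r+\sum_t\bar\varepsilon_t^r$. Condition on $\cF_{r,t}$ and use the tower property. The noise parts $\bar\varepsilon_t^r$ then have zero conditional mean given $\cF_{r,t}$, so they drop from the inner product. For the same reason, the cross terms $\inner{\bar\varepsilon_s^r}{\bar\varepsilon_t^r}$ with $s<t$ vanish, since $\bar\varepsilon_s^r$ is $\cF_{r,t}$-measurable. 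The cross terms between noise and the gradient sums do \emph{not} vanish in general, because later gradients depend on earlier noise. To avoid handling them, we bound crudely: $\|G_r\|^2\le 2\|\sum_t\bar\nabla_t^r\|^2+2\|\sum_t\bar\varepsilon_t^r\|^2$. The noise part, which is a martingale sum, is bounded by \Cref{lem:variance_reduction} by $\bar H\sigma^2/n$, producing the term $L\eta_r^2\bar H\sigma^2/n$.

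For the inner-product term, write $\inner{\nabla f(\bar x^r)}{\bar\nabla_t^r}=\frac1n\sum_{i\in A_t}\inner{\nabla f(\bar x^r)}{\nabla f(w_{i,t}^r)}$. Apply the polarization identity $-\inner{a}{b}=\frac12(\|a-b\|^2-\|a\|^2-\|b\|^2)$ with the correct normalization. Summing $m_t/n$ over $t$ gives $\bar H$ by \eqref{eq:bo87v8yfd7y87tf8d}, which yields the $-\frac{\bar H}{2}\|\nabla f(\bar x^r)\|^2$ term. It also yields a $-\frac1{2n}\sum\|\nabla f(w_{i,t}^r)\|^2$ term, which we may discard or use. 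The remaining error is $\frac1{2n}\sum_{t}\sum_{i\in A_t}\|\nabla f(\bar x^r)-\nabla f(w_{i,t}^r)\|^2$. By smoothness this is at most $\frac{L^2}{n}\sum_{i,t}\bigl(\|w_{i,t}^r-x_i^r\|^2+\|x_i^r-\bar x^r\|^2\bigr)$. The first piece is bounded by \eqref{eq:double_W_bound_delay} by $\eta_r^2G^2\Psi_H$. The second piece equals $\sum_i H_i\|x_i^r-\bar x^r\|^2\le H_{\max}X^r$. Multiplied by $\eta_r$, these produce the "$1$" parts of the $X^r$ and $\Psi_H$ coefficients.

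For the drift part of $\|G_r\|^2$, use Cauchy--Schwarz over $t$:
\[
\Bigl\|\sum_t\bar\nabla_t^r\Bigr\|^2\le H_{\max}\sum_t\|\bar\nabla_t^r\|^2\le H_{\max}\sum_t\frac{m_t}{n^2}\sum_{i\in A_t}\|\nabla f(w_{i,t}^r)\|^2 .
\]
Then split $\|\nabla f(w_{i,t}^r)\|^2\le 2\|\nabla f(\bar x^r)\|^2+2L^2\|w_{i,t}^r-\bar x^r\|^2$, using $m_t\le n$. The first part gives $\lesssim 2L\eta_r^2H_{\max}\bar H\|\nabla f(\bar x^r)\|^2$. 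The second gives $\frac{4L^3\eta_r^3H_{\max}}{n}$ times the same drift and disagreement quantities as above, i.e. the "$4L\eta_rH_{\max}$" corrections.

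The main obstacle is getting exactly the stated constants. The factor of $2$ from the noise split must be absorbed, with $\frac{L\eta_r^2}{2}\cdot 2$ giving exactly $L\eta_r^2\bar H\sigma^2/n$. The split $\|w_{i,t}^r-\bar x^r\|^2\le 2\|w_{i,t}^r-x_i^r\|^2+2\|x_i^r-\bar x^r\|^2$ must be arranged so that the coefficients come out as $L^2H_{\max}/n$ and $4L^3\eta_rH_{\max}^2/n$. If the straightforward bookkeeping loses a factor, we first try the alternative of keeping the $-\frac1{2n}\sum\|\nabla f(w_{i,t}^r)\|^2$ term from polarization. It can cancel part of the quadratic term when $\eta_r\le 1/(8LH_{\max})$, after which the remaining constants should fit the stated form. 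All terms are then conditioned on $\{x_i^r\}$ via the tower property.
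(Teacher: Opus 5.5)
Your proposal is correct and follows the paper's proof essentially step for step. Both arguments apply smoothness to $\bar x^{r+1}=\bar x^r-\eta_r G_r$. For the inner product, both use $\langle a,b\rangle\ge\frac12\|a\|^2-\frac12\|a-b\|^2$ together with the Young split of $\|w_{i,t}^r-\bar x^r\|^2$. For the quadratic term, both use $\|G_r\|^2\le 2\|\sum_t\bar\nabla_t^r\|^2+2\|\sum_t\bar\varepsilon_t^r\|^2$, then Cauchy--Schwarz over $t$ and the martingale noise bound. The straightforward bookkeeping already gives exactly the stated constants, so your fallback of keeping the $-\frac{1}{2n}\sum\|\nabla f(w_{i,t}^r)\|^2$ term is unnecessary. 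One small slip: the drift quantities enter the quadratic term with coefficient $4L^3\eta_r^2H_{\max}/n$, not $\eta_r^3$, and this is precisely what produces the $4L\eta_rH_{\max}$ correction you identify.
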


\begin{proof}
By \Cref{lem:avg_evol_09vdf}, we have
\begin{equation*}
\bar x^{r+1}
\overset{\eqref{eq:no8yfd9y89fd08ygf}}{=}
\bar x^r-\eta_r G_r.
\end{equation*}
Hence, by \(L\)-smoothness of \(f\) (\Cref{ass:smoothness}),
\begin{eqnarray}
f(\bar x^{r+1})
&\le&
f(\bar x^r)
-
\eta_r \inner{\nabla f(\bar x^r)}{G_r}
+
\frac{L\eta_r^2}{2}\|G_r\|^2.
\label{eq:smoothness_step_98}
\end{eqnarray}
Taking conditional expectation given \(\{x_i^r\}_{i=1}^n\), we obtain
\begin{eqnarray}
\ExpCond{ f(\bar x^{r+1}) }{ \{x_i^r\}_{i=1}^n }
\overset{\eqref{eq:smoothness_step_98}}{\leq}
f(\bar x^r)
-
\eta_r \ExpCond{ \inner{\nabla f(\bar x^r)}{G_r} }{ \{x_i^r\}_{i=1}^n }
+
\frac{L\eta_r^2}{2}\ExpCond{ \|G_r\|^2 }{ \{x_i^r\}_{i=1}^n }.
\label{eq:start_one_round_descent_0980fd}
\end{eqnarray}

\paragraph{Step 1: lower bound the inner-product term.}
Using unbiasedness, we get
\begin{equation}\label{eq:h87t7fgdufiu_098uyff}
\ExpCond{ \bar g_t^r }{ \cF_{r,t} } \overset{\Cref{ass:unbiased}}{=} \bar\nabla_t^r,
\end{equation}
and hence
\begin{eqnarray}
\ExpCond{ \inner{\nabla f(\bar x^r)}{G_r} }{ \{x_i^r\}_{i=1}^n }
&\overset{\eqref{eq:h87t7fgdufiu_098uyff}+\eqref{eq:three_defs_final_lemma}}{=}&
\sum_{t=0}^{H_{\max}-1}
\ExpCond{ \inner{\nabla f(\bar x^r)}{\bar\nabla_t^r} }{ \{x_i^r\}_{i=1}^n }.
\label{eq:inner-expand}
\end{eqnarray}
Further, 
\begin{equation}\label{eq:-uyhfd89y8gfg}
\inner{\nabla f(\bar x^r)}{\bar\nabla_t^r}
\overset{\eqref{eq:three_defs_final_lemma}}{=}
\frac{1}{n}\sum_{i\in A_t}
\inner{\nabla f(\bar x^r)}{\nabla f(w_{i,t}^r)}.
\end{equation}
Using the inequality
\[
\inner{a}{b}
\ge
\frac{1}{2}\|a\|^2-\frac{1}{2}\|a-b\|^2,
\]
with
\(
a=\nabla f(\bar x^r),\;
b=\nabla f(w_{i,t}^r),
\)
and then using \(L\)-Lipschitz continuity of \(\nabla f\) (cf. \Cref{ass:smoothness}), we obtain
\begin{equation}\label{eq:u08yfod9y98gfgf}
\inner{\nabla f(\bar x^r)}{\nabla f(w_{i,t}^r)}
\ge
\frac{1}{2}\|\nabla f(\bar x^r)\|^2
-
\frac{L^2}{2}\|w_{i,t}^r-\bar x^r\|^2.
\end{equation}
Plugging \eqref{eq:u08yfod9y98gfgf} into \eqref{eq:-uyhfd89y8gfg}, and using $m_t = |A_t|$, we get
\begin{eqnarray}
\inner{\nabla f(\bar x^r)}{\bar\nabla_t^r}
&\ge&
\frac{m_t}{2n}\|\nabla f(\bar x^r)\|^2
-
\frac{L^2}{2n}\sum_{i\in A_t}\|w_{i,t}^r-\bar x^r\|^2.
\label{eq:inner-lower}
\end{eqnarray}
Taking conditional expectation and summing over \(t\),
\begin{eqnarray}
\ExpCond{ \inner{\nabla f(\bar x^r)}{G_r} }{ \{x_i^r\}_{i=1}^n }
&\overset{\eqref{eq:inner-expand}+\eqref{eq:inner-lower}}{\ge}&
\left(\frac1{2n}\sum_{t=0}^{H_{\max}-1}m_t\right)\|\nabla f(\bar x^r)\|^2
\notag\\
&&
-
\frac{L^2}{2n}
\sum_{i=1}^n\sum_{t=0}^{H_i-1}
\ExpCond{ \|w_{i,t}^r-\bar x^r\|^2 }{ \{x_i^r\}_{i=1}^n } \notag \\
&=&\frac{\bar H}{2}\|\nabla f(\bar x^r)\|^2
\notag\\
&&
-
\frac{L^2}{2n}
\sum_{i=1}^n\sum_{t=0}^{H_i-1}
\ExpCond{ \|w_{i,t}^r-\bar x^r\|^2 }{ \{x_i^r\}_{i=1}^n } ,
\label{eq:inner-before-distance}
\end{eqnarray}
where in the last step we have used the identity
\[
\sum_{t=0}^{H_{\max}-1}m_t
\overset{\eqref{eq:A_t-89yf908df}}{=}
\sum_{i=1}^n H_i
\overset{\eqref{eq:-0-0-098uy8ygugf}}{=}
n\bar H.
\]

Next, using the decomposition
\begin{equation*}
w_{i,t}^r-\bar x^r
=
(x_i^r-\bar x^r)+(w_{i,t}^r-x_i^r),
\end{equation*}
Young's inequality
\[
\|a+b\|^2 \le 2\|a\|^2+2\|b\|^2,
\]
and \Cref{lem:within-round-drift}, we get
\begin{equation}\label{eq:-0jhfd7y98y9fgf}
\ExpCond{ \|w_{i,t}^r-\bar x^r\|^2 }{ \{x_i^r\}_{i=1}^n }
\le
2\|x_i^r-\bar x^r\|^2 + 2\eta_r^2 t^2 G^2.
\end{equation}
Summing over \(i\) and \(t\), we obtain
\begin{eqnarray}
\sum_{i=1}^n\sum_{t=0}^{H_i-1}
\ExpCond{ \|w_{i,t}^r-\bar x^r\|^2 }{ \{x_i^r\}_{i=1}^n }
&\overset{\eqref{eq:-0jhfd7y98y9fgf}}{\leq}&
\sum_{i=1}^n\sum_{t=0}^{H_i-1}
\left(
2\|x_i^r-\bar x^r\|^2 + 2\eta_r^2 t^2 G^2
\right)
\notag\\
&=&
2\sum_{i=1}^n\sum_{t=0}^{H_i-1}\|x_i^r-\bar x^r\|^2
+
2\eta_r^2 G^2\sum_{i=1}^n\sum_{t=0}^{H_i-1} t^2
\notag\\
&\overset{\eqref{eq:Psi_H-defn}}{=}&
2\sum_{i=1}^n H_i \|x_i^r-\bar x^r\|^2
+
2\eta_r^2 G^2 \Psi_H
\notag\\
&\leq&
2H_{\max}\sum_{i=1}^n \|x_i^r-\bar x^r\|^2
+
2\eta_r^2 G^2 \Psi_H
\notag\\
&\overset{\eqref{eq:-09y9yf8dy8fy098f+-0=-9g}}{=}&
2H_{\max}X^r + 2\eta_r^2 G^2\Psi_H.
\label{eq:distance-bound}
\end{eqnarray}

Substituting \eqref{eq:distance-bound} into \eqref{eq:inner-before-distance}, we get
\begin{eqnarray}
\ExpCond{ \inner{\nabla f(\bar x^r)}{G_r} }{ \{x_i^r\}_{i=1}^n }
&\ge&
\frac{\bar H}{2}\|\nabla f(\bar x^r)\|^2
-
\frac{L^2H_{\max}}{n}X^r
-
\frac{L^2\eta_r^2G^2}{n}\Psi_H.
\label{eq:inner-term-final}
\end{eqnarray}

\paragraph{Step 2: upper bound the quadratic term.}
Let us write
\[
G_r
\overset{\eqref{eq:three_defs_final_lemma}+\eqref{eq:-8-f9d--08g} }{=}
\sum_{t=0}^{H_{\max}-1}\bar\nabla_t^r
+
\sum_{t=0}^{H_{\max}-1}\bar\varepsilon_t^r.
\]
Using Young's inequality \(\|a+b\|^2\le 2\|a\|^2+2\|b\|^2\), we get
\begin{eqnarray}
\ExpCond{ \|G_r\|^2 }{ \{x_i^r\}_{i=1}^n }
\le
2\ExpCond{ \left\|\sum_{t=0}^{H_{\max}-1}\bar\nabla_t^r\right\|^2 }{ \{x_i^r\}_{i=1}^n }
+
2\ExpCond{ \left\|\sum_{t=0}^{H_{\max}-1}\bar\varepsilon_t^r\right\|^2 }{ \{x_i^r\}_{i=1}^n }.
\label{eq:G-split}
\end{eqnarray}

First, let's bound the first term on the right-hand side of \eqref{eq:G-split}. Using the Jensen's inequality  bound
\[
\left\|\sum_{t=0}^{H_{\max}-1} a_t\right\|^2
\le
H_{\max}\sum_{t=0}^{H_{\max}-1}\|a_t\|^2,
\]
and arguments identical to those leading to \eqref{eq:distance-bound} (we repeat the same decomposition and apply \Cref{lem:within-round-drift}), we obtain the bound
\begin{eqnarray}
\ExpCond{ \left\|\sum_{t=0}^{H_{\max}-1}\bar\nabla_t^r\right\|^2 }{ \{x_i^r\}_{i=1}^n }
\leq
2\bar H H_{\max}\|\nabla f(\bar x^r)\|^2
+
\frac{4L^2H_{\max}^2}{n}X^r
+
\frac{4L^2\eta_r^2G^2H_{\max}}{n}\Psi_H.
\label{eq:mean-term}
\end{eqnarray}

Second, we bound the second term on the right-hand side of \eqref{eq:G-split}. We first explain why the cross terms vanish. Recall that
\[
\bar\varepsilon_t^r
=
\frac{1}{n}\sum_{i\in A_t}
\bigl(g_i(w_{i,t}^r,\xi_{i,t}^r)-\nabla f(w_{i,t}^r)\bigr).
\]
By \Cref{ass:unbiased}, and since \(w_{i,t}^r\) is \(\cF_{r,t}\)-measurable, we have
\begin{equation}
\ExpCond{\bar\varepsilon_t^r}{\cF_{r,t}} = 0.
\label{eq:noise-zero-mean}
\end{equation}
Moreover, for any \(s<t\), the vector \(\bar\varepsilon_s^r\) is \(\cF_{r,t}\)-measurable, since it depends only on randomness revealed before time \(t\). Therefore,
\begin{eqnarray}
\ExpCond{\inner{\bar\varepsilon_s^r}{\bar\varepsilon_t^r}}{\{x_i^r\}_{i=1}^n}
&=&
\ExpCond{
\ExpCond{\inner{\bar\varepsilon_s^r}{\bar\varepsilon_t^r}}{\cF_{r,t}}
}{\{x_i^r\}_{i=1}^n}
\notag\\
&=&
\ExpCond{
\inner{\bar\varepsilon_s^r}{\ExpCond{\bar\varepsilon_t^r}{\cF_{r,t}}}
}{\{x_i^r\}_{i=1}^n}
\notag\\
&\overset{\eqref{eq:noise-zero-mean}}{=}&
0.
\label{eq:cross-terms-zero}
\end{eqnarray}
This shows that the sequence \(\{\bar\varepsilon_t^r\}\) is a martingale difference sequence, and hence different-time noise terms are orthogonal in conditional expectation.

Now expand the squared norm:
\begin{eqnarray}
\ExpCond{ \left\|\sum_{t=0}^{H_{\max}-1}\bar\varepsilon_t^r\right\|^2 }{ \{x_i^r\}_{i=1}^n }
&=&
\ExpCond{
\sum_{t=0}^{H_{\max}-1}\|\bar\varepsilon_t^r\|^2
+
2\sum_{0\le s<t\le H_{\max}-1}\inner{\bar\varepsilon_s^r}{\bar\varepsilon_t^r}
}{ \{x_i^r\}_{i=1}^n }
\notag\\
&=&
\sum_{t=0}^{H_{\max}-1}
\ExpCond{\|\bar\varepsilon_t^r\|^2}{ \{x_i^r\}_{i=1}^n } \notag\\
&&\; + \;
2\sum_{0\le s<t\le H_{\max}-1}
\ExpCond{\inner{\bar\varepsilon_s^r}{\bar\varepsilon_t^r}}{ \{x_i^r\}_{i=1}^n }
\notag\\
&\overset{\eqref{eq:cross-terms-zero}}{=}&
\sum_{t=0}^{H_{\max}-1}
\ExpCond{\|\bar\varepsilon_t^r\|^2}{ \{x_i^r\}_{i=1}^n }.
\label{eq:noise-no-cross}
\end{eqnarray}
Finally, applying \Cref{lem:variance_reduction}, we obtain
\begin{eqnarray}
\ExpCond{ \left\|\sum_{t=0}^{H_{\max}-1}\bar\varepsilon_t^r\right\|^2 }{ \{x_i^r\}_{i=1}^n }
&\overset{\eqref{eq:noise-no-cross}}{=}&
\sum_{t=0}^{H_{\max}-1}
\ExpCond{\|\bar\varepsilon_t^r\|^2}{ \{x_i^r\}_{i=1}^n }
\notag\\
&\overset{\Cref{lem:variance_reduction}}{\leq}& \sum_{t=0}^{H_{\max}-1}
\frac{m_t}{n^2}\sigma^2
\notag\\
&=&
\frac{\sigma^2}{n^2}\sum_{t=0}^{H_{\max}-1} m_t
\notag\\
&\overset{\eqref{eq:bo87v8yfd7y87tf8d}}{=}&
\frac{\sigma^2}{n^2}\cdot n\bar H
=
\frac{\bar H}{n}\sigma^2.
\label{eq:noise-term}
\end{eqnarray}
Combining \eqref{eq:G-split}, \eqref{eq:mean-term}, and \eqref{eq:noise-term}, we obtain
\begin{eqnarray}
\ExpCond{ \|G_r\|^2 }{ \{x_i^r\}_{i=1}^n }
&\leq&
4\bar H H_{\max}\|\nabla f(\bar x^r)\|^2
+
\frac{8L^2H_{\max}^2}{n}X^r \\
&+& 
\frac{8L^2\eta_r^2G^2H_{\max}}{n}\Psi_H
+
2\frac{\bar H}{n}\sigma^2.
\label{eq:G-final}
\end{eqnarray}
\paragraph{Step 3: conclude.}
Substituting \eqref{eq:inner-term-final} and \eqref{eq:G-final} into \eqref{eq:start_one_round_descent_0980fd}, and collecting terms, yields
\begin{align*}
\ExpCond{ f(\bar x^{r+1}) }{ \{x_i^r\}_{i=1}^n }
&\leq
f(\bar x^r)
-
\eta_r\left(\frac{\bar H}{2}-2L\eta_r \bar H H_{\max}\right)\|\nabla f(\bar x^r)\|^2
\\
&\quad
+
\eta_r\frac{L^2H_{\max}}{n}\bigl(1+4L\eta_r H_{\max}\bigr)X^{r}
+
L\eta_r^2\bar H\frac{\sigma^2}{n}
\\
&\quad
+
L^2\eta_r^3G^2\frac{\Psi_H}{n}\bigl(1+4L\eta_r H_{\max}\bigr),
\end{align*}
which is \eqref{eq:98y80_098yu0fd_0yhgfdf}.
\end{proof}

\subsection{Proof of \Cref{thm:main}} 
\label{sub:proof_of_thm:main}

\maintheorem*

\begin{proof}
    Taking full expectation in \Cref{lem:explicit_one_round}, and using
    \(\eta_r\equiv\eta\), gives
    \begin{eqnarray}
    \Exp{f(\bar x^{r+1})}
    &\overset{\eqref{eq:98y80_098yu0fd_0yhgfdf}}{\leq}&
    \Exp{f(\bar x^r)}
    -
    \eta\left(\frac{\bar H}{2}-2L\eta\bar H H_{\max}\right)
    \Exp{\|\nabla f(\bar x^r)\|^2}
    \notag\\
    &&+
    \eta\frac{L^2H_{\max}}{n}
    \bigl(1+4L\eta H_{\max}\bigr)\Exp{X^r}
    +
    L\eta^2\bar H\frac{\sigma^2}{n}
    \notag\\
    &&+
    L^2\eta^3G^2\frac{\Psi_H}{n}
    \bigl(1+4L\eta H_{\max}\bigr).
    \label{eq:n98yf09_09u8f09dug}
    \end{eqnarray}
    Since
    \(
    \eta \leq \frac{1}{8LH_{\max}},
    \)
    we have
    \begin{eqnarray}
    4L\eta H_{\max} &\leq & \frac{1}{2},
    \label{eq:stepsize_907f090f}
    \end{eqnarray}
    and therefore,
    \begin{eqnarray}
    1+4L\eta H_{\max}
    &\overset{\eqref{eq:stepsize_907f090f}}{\leq}&
    \frac{3}{2}
    \label{eq:stepsize-factor-bound}
    \end{eqnarray}
    and
    \begin{eqnarray}
    \frac{\bar H}{2}-2L\eta\bar H H_{\max}
    =
    \bar H\left(\frac{1}{2}-2L\eta H_{\max}\right)
    \overset{\eqref{eq:stepsize_907f090f}}{\geq}
    \frac{\bar H}{4}.
    \label{eq:stepsize-gradient-coeff}
    \end{eqnarray}

    Using \eqref{eq:stepsize-gradient-coeff} and \eqref{eq:stepsize-factor-bound} in
    \eqref{eq:n98yf09_09u8f09dug}, we obtain
    \begin{eqnarray}
    \Exp{f(\bar x^{r+1})}
    &\leq&
    \Exp{f(\bar x^r)}
    -
    \frac{\eta\bar H}{4}
    \Exp{\|\nabla f(\bar x^r)\|^2}
    \notag\\
    &&+
    \frac{3}{2}\,\eta\frac{L^2H_{\max}}{n}\Exp{X^r}
    +
    L\eta^2\bar H\frac{\sigma^2}{n}
    +
    \frac{3}{2}\,L^2\eta^3G^2\frac{\Psi_H}{n}.
    \label{eq:expected-one-round-descent-simplified-thm}
    \end{eqnarray}
    Rearranging \eqref{eq:expected-one-round-descent-simplified-thm}, we get
    \begin{eqnarray}
    \frac{\eta\bar H}{4}
    \Exp{\|\nabla f(\bar x^r)\|^2}
    &\leq&
    \Exp{f(\bar x^r)}-\Exp{f(\bar x^{r+1})}
    +
    \frac{3}{2}\,\eta\frac{L^2H_{\max}}{n}\Exp{X^r}
    \notag\\
    &&+
    L\eta^2\bar H\frac{\sigma^2}{n}
    +
    \frac{3}{2}\,L^2\eta^3G^2\frac{\Psi_H}{n}.
    \label{eq:rearranged-simplified-thm}
    \end{eqnarray}
    Summing \eqref{eq:rearranged-simplified-thm} over \(r=0,\dots,R-1\), we obtain
    \begin{eqnarray}
    \frac{\eta\bar H}{4}
    \sum_{r=0}^{R-1}\Exp{\|\nabla f(\bar x^r)\|^2}
    &\overset{\eqref{eq:rearranged-simplified-thm}}{\leq}&
    \sum_{r=0}^{R-1}
    \left(
    \Exp{f(\bar x^r)}-\Exp{f(\bar x^{r+1})}
    \right)
    \notag\\
    &&+
    \frac{3}{2}\,\eta\frac{L^2H_{\max}}{n}
    \sum_{r=0}^{R-1}\Exp{X^r}
    \notag\\
    &&+
    R L\eta^2\bar H\frac{\sigma^2}{n}
    +
    \frac{3}{2}\,R L^2\eta^3G^2\frac{\Psi_H}{n}
    \notag\\
    &=&
    f(x^0)-\Exp{f(\bar x^R)}
    \notag\\
    &&+
    \frac{3}{2}\,\eta\frac{L^2H_{\max}}{n}
    \sum_{r=0}^{R-1}\Exp{X^r}
    \notag\\
    &&+
    R L\eta^2\bar H\frac{\sigma^2}{n}
    +
    \frac{3}{2}\,R L^2\eta^3G^2\frac{\Psi_H}{n}.
    \label{eq:summed-simplified-thm}
    \end{eqnarray}
    By \Cref{ass:lower_bounded} (lower-boundedness), we get
    \begin{eqnarray}
    f(x^0)-\Exp{f(\bar x^R)}
    &\leq&
    f(x^0)-f^\star.
    \label{eq:lower-bound-final-iterate}
    \end{eqnarray}
    Moreover, since \(X^0=0\), \(\eta_r\equiv\eta\), and \(c<1\), \Cref{cor:sum_disagreement} gives
    \begin{eqnarray}
    \sum_{r=0}^{R-1}\Exp{X^r}
    &\overset{\Cref{cor:sum_disagreement}}{\leq}&
    R\,\frac{\eta^2G^2(BS_N+DS_Q)}{1-c}.
    \label{eq:use-sum-disagreement}
    \end{eqnarray}
    Substituting \eqref{eq:lower-bound-final-iterate} and
    \eqref{eq:use-sum-disagreement} into \eqref{eq:summed-simplified-thm}, we get
    \begin{eqnarray}
    \frac{\eta\bar H}{4}
    \sum_{r=0}^{R-1}\Exp{\|\nabla f(\bar x^r)\|^2}
    &\leq&
    f(x^0)-f^\star
    +
    \frac{3}{2}\,R\eta^3\frac{L^2G^2H_{\max}}{n}
    \frac{BS_N+DS_Q}{1-c}
    \notag\\
    &&+
    R L\eta^2\bar H\frac{\sigma^2}{n}
    +
    \frac{3}{2}\,R L^2\eta^3G^2\frac{\Psi_H}{n}.
    \label{eq:before-final-division-thm}
    \end{eqnarray}
    Dividing \eqref{eq:before-final-division-thm} by \(R\eta\bar H/4\), we obtain
    \begin{eqnarray*}
    \frac1R\sum_{r=0}^{R-1}\Exp{\|\nabla f(\bar x^r)\|^2}
    &\leq&
    \frac{4(f(x^0)-f^\star)}{\eta\bar H R}
    +
    \frac{6L^2\eta^2G^2H_{\max}}{n\bar H}
    \frac{BS_N+DS_Q}{1-c}
    \\
    &&+
    \frac{4L\eta\sigma^2}{n}
    +
    \frac{6L^2\eta^2G^2\Psi_H}{n\bar H}.
    \end{eqnarray*}
    Finally, reordering the terms gives \eqref{eq:main_rate}.
\end{proof}
\subsection{Proof of \Cref{corollary:time_complexity}}\label{proof:time_complexity}
Here we provide the proof of \Cref{corollary:time_complexity}.
\begin{proof}
Under the assumptions of \Cref{thm:main}, we have
\begin{equation*}
        \Exp{\norm{\nabla f(\bar x^{\hat{r}})}^2} = \frac{1}{R} \sum_{r=0}^{R-1} \Exp{\norm{\nabla f(\bar x^r)}^2} \le c_0 \cdot \frac{\Delta}{\eta \bar H R} + c_1 \cdot \frac{\eta L \sigma^2}{n} + c_2 \cdot \eta^2 \frac{L^2 G^2 X}{n \bar H} .
\end{equation*}

For the variance term to satisfy $c_1 \frac{\eta L \sigma^2}{n} \le \frac{\varepsilon}{3}$, we require $\eta \le \frac{1}{3 c_1} \frac{\varepsilon n}{L \sigma^2}$ when $\sigma^2 > 0$. 
For the disagreement drift term to satisfy $c_2 \eta^2 \frac{L^2 G^2 X}{n \bar H} \le \frac{\varepsilon}{3}$, we require $\eta \le \frac{1}{\sqrt{3 c_2}} \frac{\sqrt{\varepsilon n \bar H}}{L G \sqrt{X}}$ when $X > 0$.
The largest stepsize satisfying these conditions as well as the stepsize bound from \Cref{thm:main} is
\begin{equation*}
    \hat\eta = \min\nbr{\frac{1}{4 c_1} \frac{\varepsilon n}{L \sigma^2}, \frac{1}{\sqrt{4 c_2}} \frac{\sqrt{\varepsilon n \bar H}}{L G \sqrt{X}}, \frac{1}{8 L H_{\max}}},
\end{equation*}
where the terms involving $\sigma^2$ and $X$ are understood to evaluate to $+ \infty$ when these quantities are zero.

To ensure the remaining term satisfies $c_0 \frac{\Delta}{\eta \bar H R} \le \frac{\varepsilon}{3}$, we must have $R \ge \frac{3 c_0 \Delta}{\varepsilon \hat\eta \bar H}$. 
Substituting the evaluated components of $\hat\eta$ directly yields the explicit round boundary:
\begin{equation*}
    R \ge c_R \left( \frac{\Delta L \sigma^2}{n \varepsilon \bar H} + \frac{\Delta L G \sqrt{X}}{\varepsilon^{3/2} \bar H \sqrt{n \bar H}} + \frac{\Delta L H_{\max}}{\varepsilon \bar H} \right)
\end{equation*}
for an appropriate absolute constant $c_R > 0$.

To convert this round complexity to wall-clock time complexity, we multiply the total elapsed rounds $R$ by the round duration, $T = M \tau + \zeta$. 
Using $\frac{T}{\bar H} = \tau_H \coloneqq \frac{n}{\sum_{i=1}^n \tau_i^{-1}}$, we obtain the stated bound:
\begin{equation*}
    \mathcal{O}\rbr{\frac{\Delta L \sigma^2}{n \varepsilon} \cdot \tau_H + \frac{\Delta L G \sqrt{X}}{\varepsilon^{3/2} \sqrt{n \bar H}} \cdot \tau_H + \frac{\Delta L H_{\max}}{\varepsilon} \cdot \tau_H}.
\end{equation*}
\end{proof}
\section{Additional Experimental Details}
\label{sec:appendix-experiments}

We empirically evaluate sparse local SGD with communication--computation overlap mainly on binary classification problems from LIBSVM, and additionally on CIFAR-10 and Tiny ImageNet image classification with neural networks. 
The goal of the experimental section is to quantify the separate effects of local computation, sparse synchronization, and overlap under heterogeneous worker speeds. 
We compare five methods:
\begin{itemize}
    \item \textbf{Sync SGD}: no local drift, no sparsity, and no overlap.
    \item \textbf{FedAvg-Full}: local drift with full model averaging.
    \item \textbf{Local Sparse}: local drift with sparse model averaging.
    \item \textbf{Overlap overwrite}: local drift, sparse model averaging, and communication--computation overlap.
    \item \textbf{Overlap delay-corrected}: local drift, sparse model averaging, overlap, and corrected merging.
\end{itemize}
The last two methods are our proposed overlap variants. 
The dense baselines, \textbf{Sync SGD} and \textbf{FedAvg-Full}, provide reference points for the cost of local drift and sparse communication, while \textbf{Local Sparse} isolates the additional effect of blocking sparse synchronization.

\paragraph{Setup.}
We use binary logistic regression with labels mapped to $\{-1,+1\}$. 
Given $n$ workers, the training data are split into worker shards using a fixed seed when partitioning is enabled. 
All methods are run with the same initialization, data split, worker partition, and random seeds.

For an example $(x,y)$, the logistic loss is
\begin{equation}
    \ell(w;x,y)
    =
    \log\left(1+\exp\left(-y x^\top w\right)\right),
\end{equation}
where $w \in \mathbb{R}^{d}$ denotes the model parameter. 
We consider the finite-sum objective
\begin{equation}
    F(w)
    =
    \sum_{i=1}^{n}
    \frac{m_i}{m}
    F_i(w),
    \qquad
    m = \sum_{i=1}^{n} m_i,
\end{equation}
where
\begin{equation}
    F_i(w)
    =
    \frac{1}{m_i}
    \sum_{r=1}^{m_i}
    \ell(w;x_{i,r},y_{i,r})
    +
    R(w).
\end{equation}
When studying the nonconvex setting, we use the coordinate-wise Geman--McClure regularizer
\begin{equation}
    R(w)
    =
    \lambda \sum_{j=1}^{d}
    \frac{w_j^2}{w_j^2+\theta^2},
\end{equation}
with $\lambda$ and $\theta$ specified in each experiment. 
This regularizer introduces nonconvexity while preserving a simple logistic-regression structure, allowing us to evaluate the methods using both predictive and stationarity-based metrics. 
We also ran the same comparisons in this nonconvex regularized setting, but do not report those plots because they were qualitatively similar to the logistic-regression results and did not add separate conclusions.
Features are standardized using training-set statistics when normalization is enabled.

\paragraph{Metrics.}
We report training accuracy, validation accuracy, and the full training gradient norm $\|\nabla F(w)\|$. 
Accuracy measures predictive performance, while the gradient norm measures optimization progress and is especially relevant in the nonconvex regularized setting. 
We plot performance as a function of communication rounds and cumulative logical time. 
Round-based plots measure progress per synchronization event, whereas logical-time plots capture the main motivation for overlap: reducing idle communication time by allowing workers to compute while sparse messages are in flight. 
For the CIFAR-10 and Tiny ImageNet experiments, we also plot against cumulative processed examples.
This axis ignores simulated communication time and counts how many training examples have been consumed by the logical workers.
It therefore serves as a proxy for the number of stochastic-gradient steps and, approximately, for the amount of local computation or floating-point work used by each method.
These plots answer a different question from the logical-time plots: whether the overlap methods only look better because they can process more mini-batches during communication delays, or whether those extra mini-batch updates remain useful when methods are compared after consuming the same amount of data.
For multi-seed experiments, curves show the mean across seeds, with shaded regions indicating variability across runs.

\paragraph{Implementation.}
The experiments are implemented in a PyTorch-based logical-time simulation framework.
The framework is not a real distributed system: workers, communication, and delays are simulated deterministically inside one process.
This design lets us isolate algorithmic effects such as local computation, sparsity, communication delay, worker-speed heterogeneity, and communication--computation overlap without noise from hardware scheduling or networking.
Each worker is represented by a separate PyTorch model copy, and the simulator explicitly tracks the round states $x_i^r$, $y_i^r$, and $z_i^r$.
In round $r$, worker $i$ starts from $x_i^r$, performs $N_i$ local SGD steps to obtain $y_i^r$, sends a sparse masked model vector, and, for the overlap methods, continues for $Q_i$ additional steps to obtain $z_i^r$.
The server averages the sparse messages on a shared random mask, and workers then merge the incoming sparse average either by overwrite or by delay correction.

Logical time is controlled by integer worker step times $\tau_i$.
For overlap experiments, the simulator computes $\tau=\operatorname{lcm}(\tau_1,\ldots,\tau_n)$, uses a local compute window of length $M\tau$, and takes the communication delay $\zeta$ to be a multiple of $\tau$.
This gives per-worker step counts $N_i=M\tau/\tau_i$ before communication and $Q_i=\zeta/\tau_i$ during communication, so faster workers naturally perform more local computation within the same logical-time window.
The codebase separates timing, mask sampling, compression, communication accounting, and merge rules, and includes engines for the proposed overlap methods, blocking \textbf{Local Sparse}, dense \textbf{FedAvg-Full}, and \textbf{Sync SGD}.
Experiments are driven by YAML configurations and record optimization metrics, validation metrics, worker disagreement, processed examples, cumulative logical time, and logical communication cost in both coordinates and bits.

\paragraph{Experimental organization.}
We first compare all five methods on LIBSVM binary classification tasks to quantify the cost of local drift, the cost of sparse synchronization, and the benefit of overlap. 
We then focus on the sparse methods, \textbf{Local Sparse}, \textbf{Overlap overwrite}, and \textbf{Overlap delay-corrected}, to isolate the impact of the two proposed overlap mechanisms. 
Finally, we perform ablations over the sparsity level, local computation budget, communication delay, and worker heterogeneity.
We also include CIFAR-10 and Tiny ImageNet neural-network experiments to check that the same qualitative behavior appears beyond logistic regression.

\subsection{External experiments}
\label{subsec:a9a-sparse-overlap}

We begin with a controlled sparse-communication experiment on \texttt{a9a}, comparing only 
\textbf{Local Sparse}, \textbf{Overlap overwrite}, and \textbf{Overlap delay-corrected}. 
This experiment isolates the effect of using the communication window for additional local computation, since all three methods use the same sparsity level and the same logical round duration.

We use $n=4$ workers with heterogeneous step times
\begin{equation}
    (\tau_1,\tau_2,\tau_3,\tau_4) = (1,2,3,6),
\end{equation}
so that $\tau=\operatorname{lcm}(\tau_1,\ldots,\tau_n)=6$. 
All methods are run for $20$ communication rounds with learning rate $\eta=0.1$, batch size $256$, sparsity level $p=0.3$, and no regularization, i.e., $\lambda=0$. 
The training set is normalized using training-set statistics, and $10\%$ of the training data is used for validation.

For \textbf{Local Sparse}, each round consists of a compute window of length $18$ followed by a communication window of length $6$, during which workers block. 
For the two overlap methods, we set $M=3$ and $\zeta=6$, giving the same round duration
\begin{equation}
    M\tau+\zeta = 3\cdot 6 + 6 = 24.
\end{equation}
In this controlled comparison, the three sparse methods also communicate the same number of coordinates per round. 
Since they have the same round duration and the same per-round communication volume, comparing performance as a function of communication rounds is already sufficient: logical time and logical communication cost are common rescalings of the round index. 
We nevertheless include time- and communication-based plots as consistency checks.
Thus, the pre-communication compute phase is identical across the three methods, while the overlap methods additionally exploit the communication window for local updates.

Under these timings, the pre-communication local steps are
\begin{equation}
    (N_1,N_2,N_3,N_4)
    =
    \left(
    \left\lfloor \frac{18}{1} \right\rfloor,
    \left\lfloor \frac{18}{2} \right\rfloor,
    \left\lfloor \frac{18}{3} \right\rfloor,
    \left\lfloor \frac{18}{6} \right\rfloor
    \right)
    =
    (18,9,6,3).
\end{equation}
During communication, the overlap methods perform the additional steps
\begin{equation}
    (Q_1,Q_2,Q_3,Q_4)
    =
    \left(
    \left\lfloor \frac{6}{1} \right\rfloor,
    \left\lfloor \frac{6}{2} \right\rfloor,
    \left\lfloor \frac{6}{3} \right\rfloor,
    \left\lfloor \frac{6}{6} \right\rfloor
    \right)
    =
    (6,3,2,1).
\end{equation}
Therefore, any improvement of the overlap methods over \textbf{Local Sparse} comes from replacing idle communication time by useful local computation. 
The comparison between \textbf{Overlap overwrite} and \textbf{Overlap delay-corrected} then measures the benefit of correcting the merge rather than simply overwriting the synchronized coordinates.

Across all reported metrics and plotting parametrizations considered in this experiment, we observe the same clear ordering:
\[
    \textbf{Overlap delay-corrected}
    \quad \text{outperforms} \quad
    \textbf{Overlap overwrite}
    \quad \text{outperforms} \quad
    \textbf{Local Sparse}.
\]
The separation is significant: for accuracy metrics the delay-corrected variant reaches higher values earlier, while for loss and gradient-norm metrics it decreases the objective and stationarity measure faster. 
The same qualitative behavior has also been observed in many additional settings, including nonconvex regularized objectives, very heterogeneous worker-speed profiles, compute-dominated regimes in which local steps are expensive relative to communication, and communication-dominated regimes in which communication delay is the main bottleneck.

This ordering has a simple interpretation. 
\textbf{Local Sparse} wastes the communication window by waiting, so it performs fewer useful local updates within the same logical round duration. 
\textbf{Overlap overwrite} uses this window for additional computation, which explains its consistent improvement over the blocking baseline, but it then discards part of this progress on the synchronized coordinates by overwriting them with a delayed sparse average. 
\textbf{Overlap delay-corrected} keeps the useful overlap drift and only corrects the disagreement induced by sparse synchronization, so it benefits from the extra computation without paying the full cost of stale overwriting. 
The empirical hierarchy therefore matches the mechanism predicted by the method: overlap is valuable, and delay correction is needed to fully exploit it.
The accuracy plots in \Cref{fig:exp1-accuracy} show this ordering on both training and validation accuracy, \Cref{fig:exp1-gradient} shows the same behavior for stationarity, and \Cref{fig:exp1-loss-rescalings} confirms that the loss comparison is unchanged when rounds are rescaled by logical time or communication cost.

\begin{figure}[H]
    \centering
    \begin{minipage}{0.48\linewidth}
        \centering
        \includegraphics[width=\linewidth]{images/exp1/train_loss_vs_rounds.png}\\
        {\small (a) Training accuracy versus round.}
    \end{minipage}\hfill
    \begin{minipage}{0.48\linewidth}
        \centering
        \includegraphics[width=\linewidth]{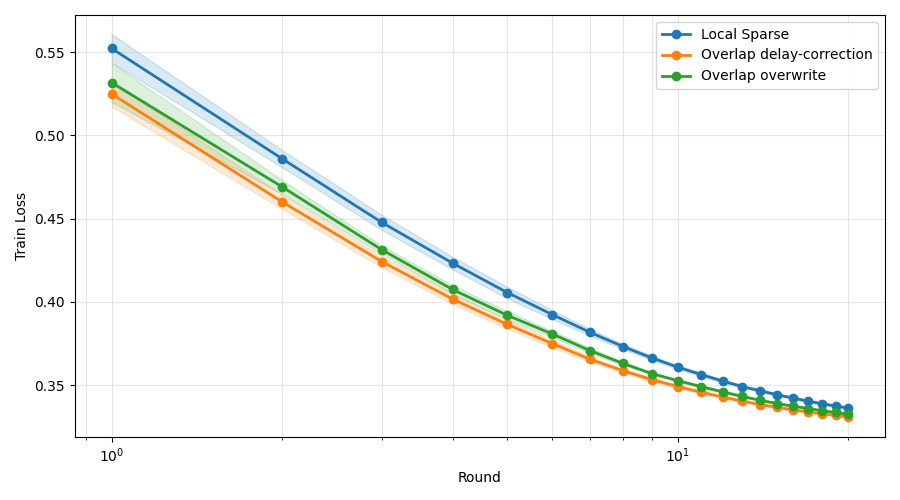}\\
        {\small (b) Training accuracy versus round, log scale.}
    \end{minipage}

    \begin{minipage}{0.48\linewidth}
        \centering
        \includegraphics[width=\linewidth]{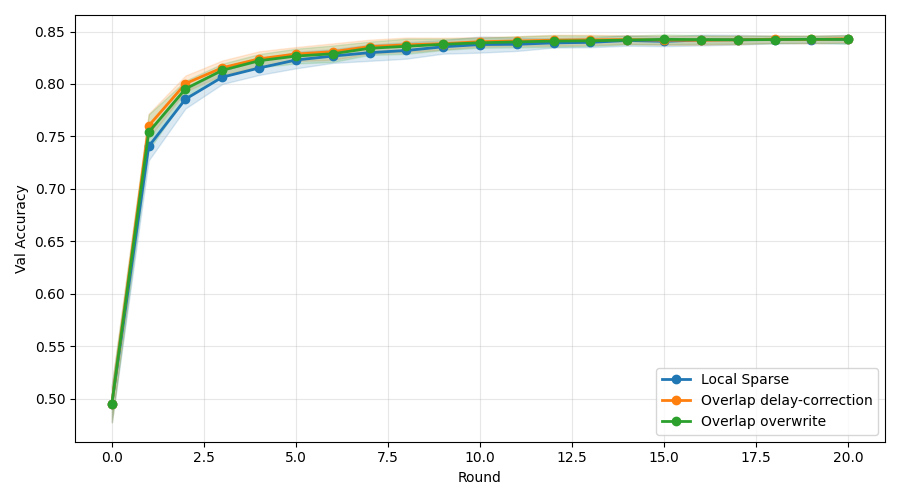}\\
        {\small (c) Validation accuracy versus round.}
    \end{minipage}\hfill
    \begin{minipage}{0.48\linewidth}
        \centering
        \includegraphics[width=\linewidth]{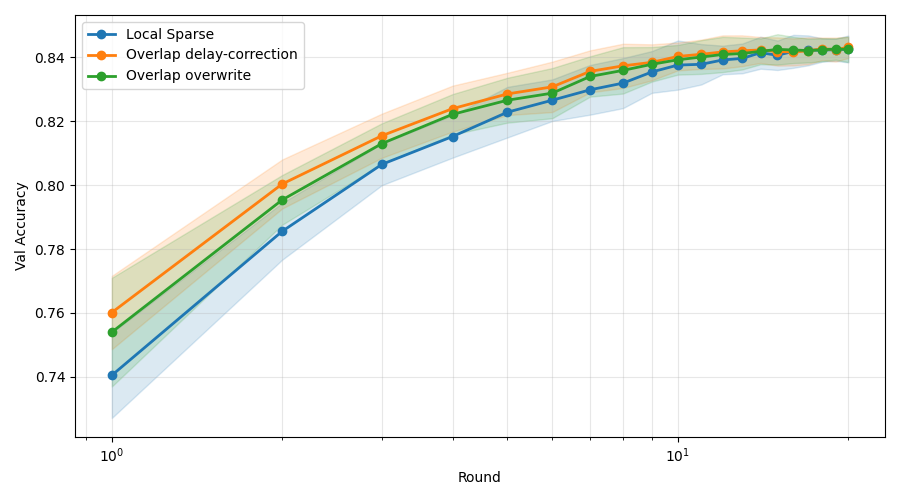}\\
        {\small (d) Validation accuracy versus round, log scale.}
    \end{minipage}
    \caption{Accuracy comparison for the controlled sparse-overlap experiment on \texttt{a9a}. 
    The delay-corrected overlap rule reaches higher accuracy earlier than overwrite, and both overlap rules improve over the blocking local-sparse baseline.}
    \label{fig:exp1-accuracy}
\end{figure}

\begin{figure}[H]
    \centering
    \includegraphics[width=0.62\linewidth]{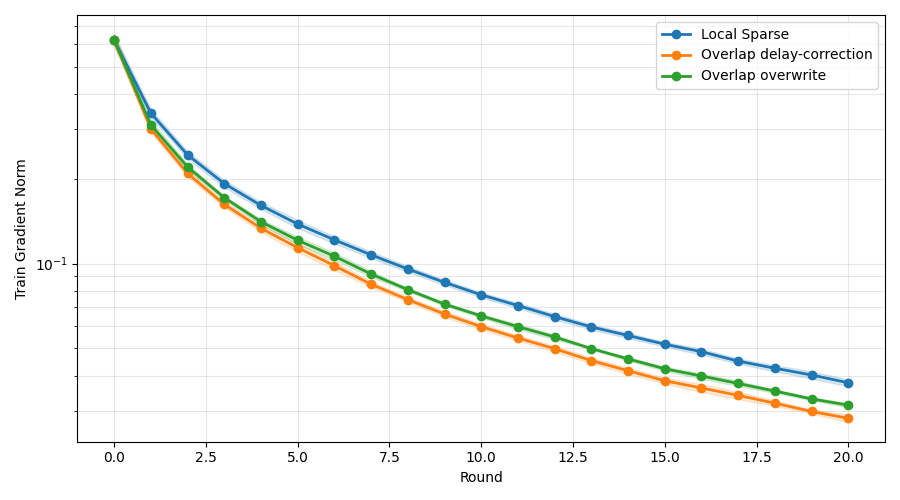}
    \caption{Training gradient norm for the controlled sparse-overlap experiment on \texttt{a9a}. 
    The stationarity metric follows the same ordering as the accuracy plots: delay correction makes the fastest progress, followed by overwrite and then blocking local sparse averaging.}
    \label{fig:exp1-gradient}
\end{figure}

\begin{figure}[H]
    \centering
    \begin{minipage}{0.48\linewidth}
        \centering
        \includegraphics[width=\linewidth]{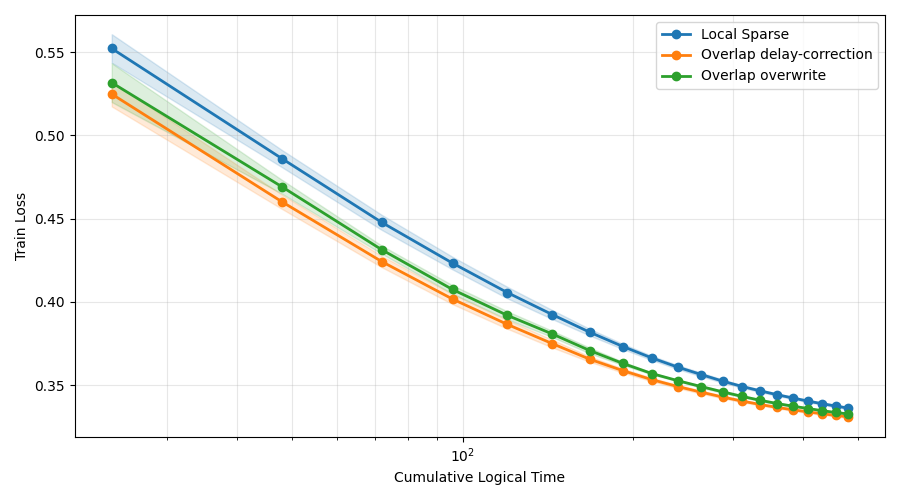}\\
        {\small (a) Training loss versus logical time.}
    \end{minipage}\hfill
    \begin{minipage}{0.48\linewidth}
        \centering
        \includegraphics[width=\linewidth]{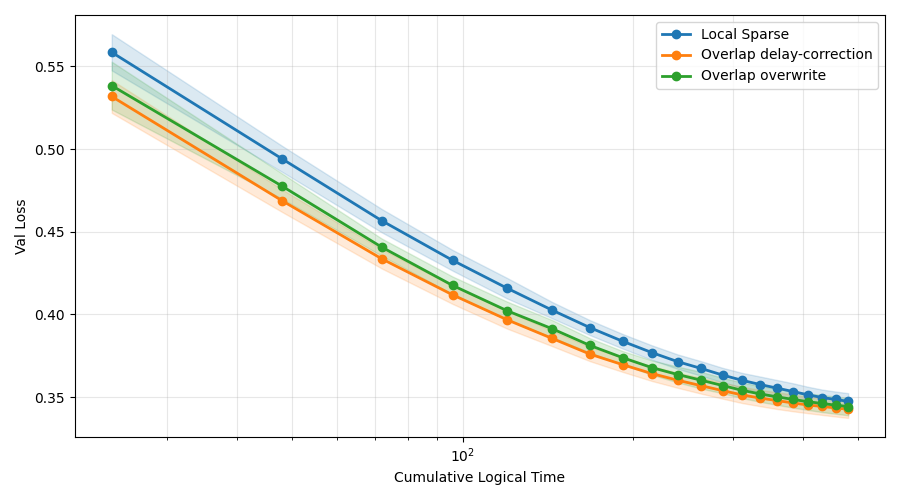}\\
        {\small (b) Validation loss versus logical time.}
    \end{minipage}

    \begin{minipage}{0.48\linewidth}
        \centering
        \includegraphics[width=\linewidth]{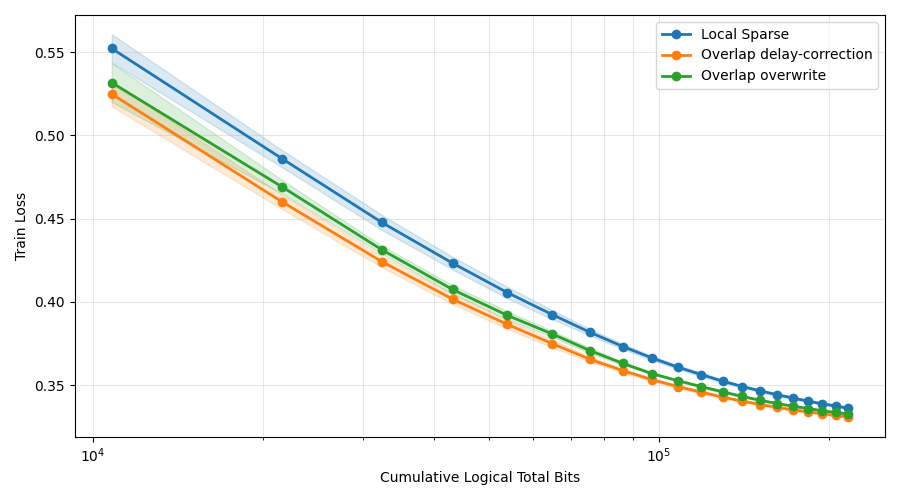}\\
        {\small (c) Training loss versus communication cost.}
    \end{minipage}\hfill
    \begin{minipage}{0.48\linewidth}
        \centering
        \includegraphics[width=\linewidth]{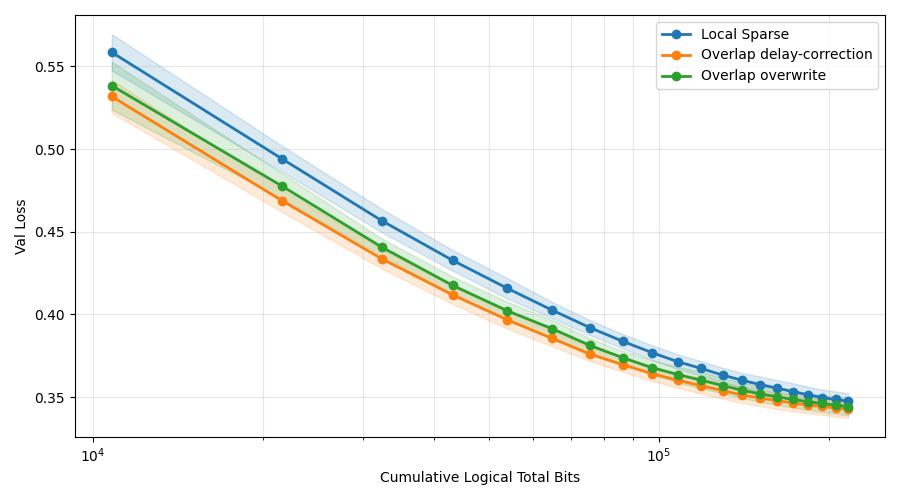}\\
        {\small (d) Validation loss versus communication cost.}
    \end{minipage}
    \caption{Loss curves under equivalent resource parametrizations for the controlled sparse-overlap experiment. 
    Since all three sparse methods use the same round duration and per-round communication volume, logical time and communication cost preserve the round-based ordering.}
    \label{fig:exp1-loss-rescalings}
\end{figure}

\subsection{Merge-rule comparison across regimes}
\label{subsec:merge-rule-comparison}

We next isolate the effect of the delayed merge rule by comparing only the two overlap variants, \textbf{Overlap overwrite} and \textbf{Overlap delay-corrected}, across three additional settings. 
Within each setting, the two methods use the same data split, sparsity level, communication schedule, worker speeds, and random seeds; the only difference is how the delayed sparse average is merged after the overlap phase. 
This comparison is designed to answer a direct question: when communication-computation overlap is used, should the incoming delayed average overwrite the synchronized coordinates, or should it be corrected to preserve the local progress made while communication was in flight?

All three settings use \texttt{a9a} logistic regression with $20$ rounds, learning rate $\eta=0.1$, batch size $256$, sparsity level $p=0.3$, normalized features, and no regularization. 
The settings differ only in the local-computation budget, communication delay, and worker-speed profile:
\[
\begin{array}{c|c|c|c}
\text{setting} & M & \zeta & (\tau_1,\tau_2,\tau_3,\tau_4) \\
\hline
\text{long compute, short delay} & 8 & 6 & (1,2,3,6) \\
\text{short compute, long delay} & 2 & 24 & (1,2,3,6) \\
\text{very heterogeneous speeds} & 2 & 20 & (1,2,10,20).
\end{array}
\]

Across all three settings, delay correction gives lower training and validation loss than overwrite. 
The size of the advantage is not identical across regimes: in the long-compute, short-delay setting the gain is small but persistent; in the short-compute, long-delay setting it is large and visible throughout training; and under very heterogeneous worker speeds it is moderate. 
This variation is useful, because it shows that delay correction is not only helpful in an especially favorable configuration. 
Even when the two merge rules are close, the corrected rule consistently avoids the loss incurred by discarding overlap-phase progress on synchronized coordinates.
The three regimes are shown in \Cref{fig:exp2-long-compute-short-delay,fig:exp2-short-compute-long-delay,fig:exp2-very-heterogeneous}; each figure places the training and validation losses side by side so that the text and plots can be read together.

\begin{figure}[H]
    \centering
    \begin{minipage}{0.48\linewidth}
        \centering
        \includegraphics[width=\linewidth]{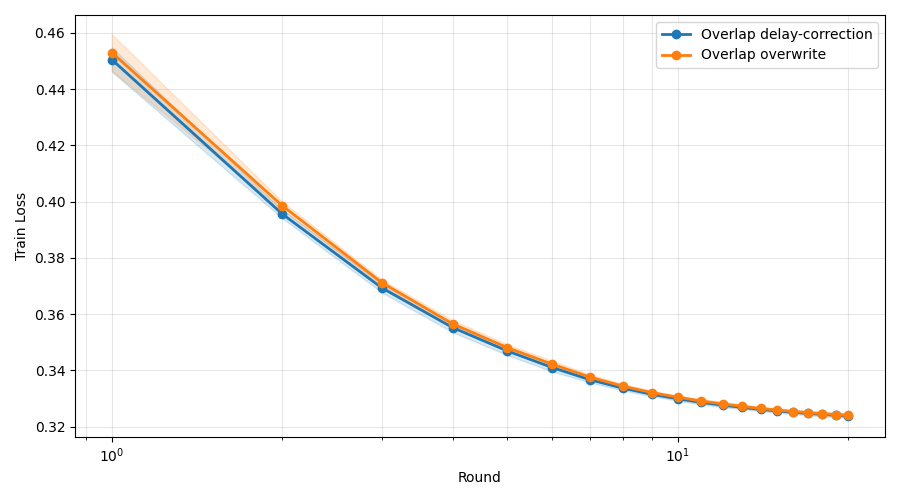}\\
        {\small (a) Training loss.}
    \end{minipage}\hfill
    \begin{minipage}{0.48\linewidth}
        \centering
        \includegraphics[width=\linewidth]{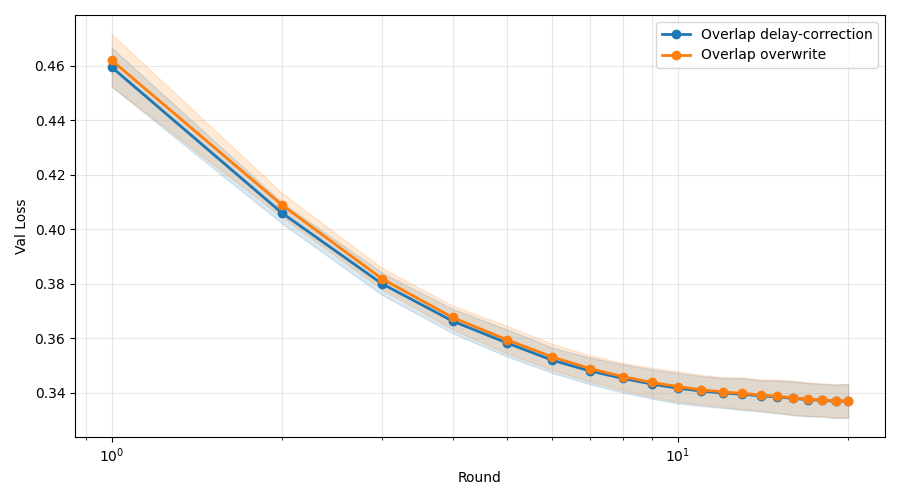}\\
        {\small (b) Validation loss.}
    \end{minipage}
    \caption{Long-compute, short-delay regime ($M=8$, $\zeta=6$, worker times $(1,2,3,6)$). 
    Delay correction gives a small but persistent improvement over overwrite on both training and validation loss.}
    \label{fig:exp2-long-compute-short-delay}
\end{figure}

\begin{figure}[H]
    \centering
    \begin{minipage}{0.48\linewidth}
        \centering
        \includegraphics[width=\linewidth]{images/exp2_2/train_loss_vs_rounds.png}\\
        {\small (a) Training loss.}
    \end{minipage}\hfill
    \begin{minipage}{0.48\linewidth}
        \centering
        \includegraphics[width=\linewidth]{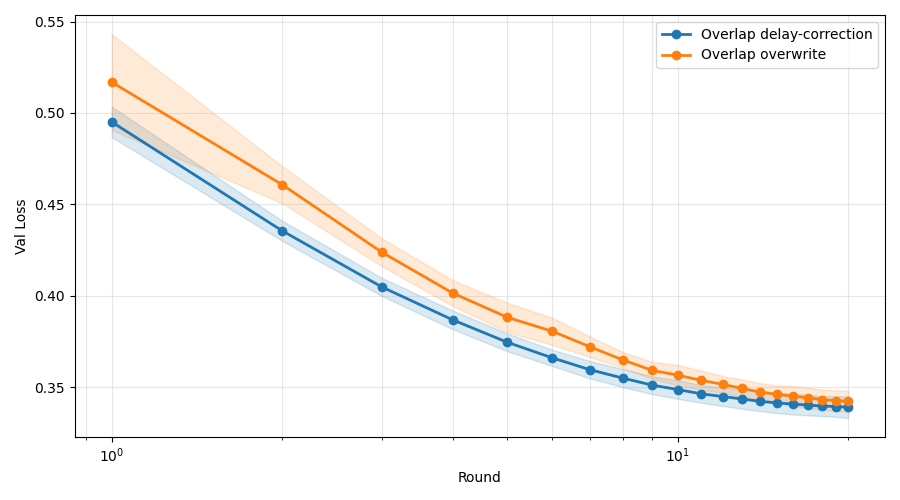}\\
        {\small (b) Validation loss.}
    \end{minipage}
    \caption{Short-compute, long-delay regime ($M=2$, $\zeta=24$, worker times $(1,2,3,6)$). 
    This is the regime with the largest gap: overwrite discards a substantial amount of overlap-phase progress, while delay correction preserves it.}
    \label{fig:exp2-short-compute-long-delay}
\end{figure}

\begin{figure}[H]
    \centering
    \begin{minipage}{0.48\linewidth}
        \centering
        \includegraphics[width=\linewidth]{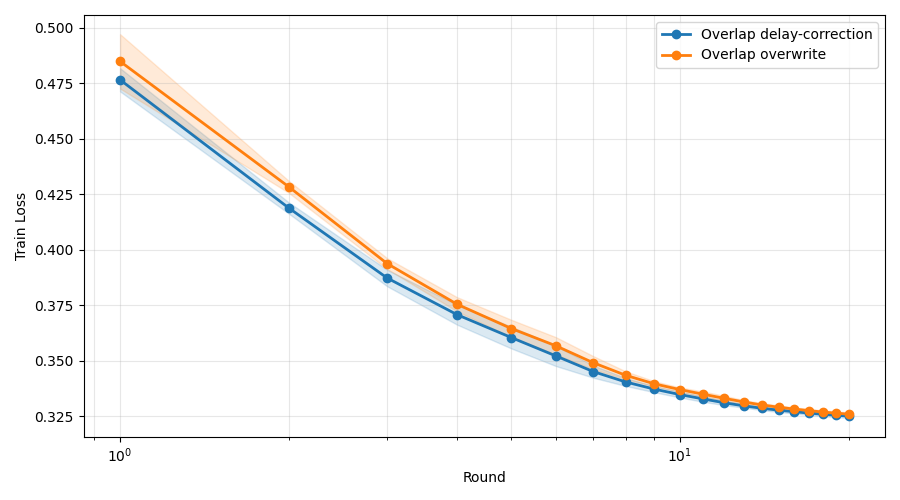}\\
        {\small (a) Training loss.}
    \end{minipage}\hfill
    \begin{minipage}{0.48\linewidth}
        \centering
        \includegraphics[width=\linewidth]{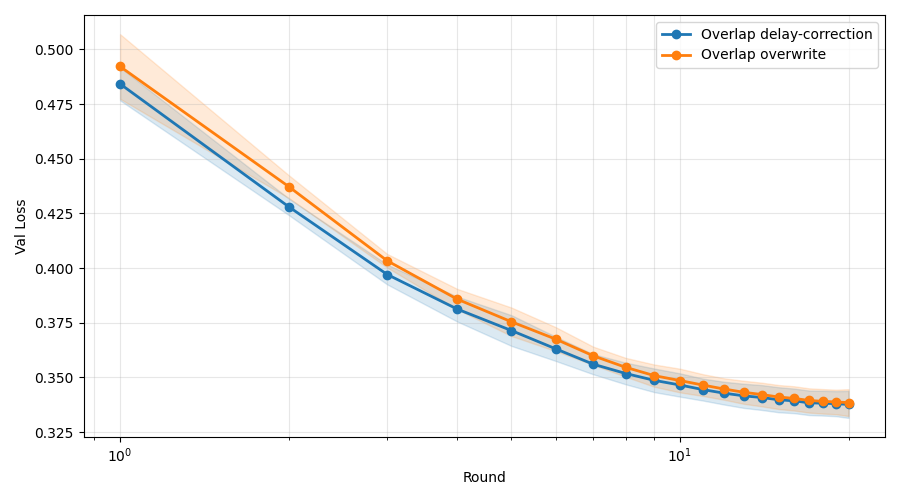}\\
        {\small (b) Validation loss.}
    \end{minipage}
    \caption{Very heterogeneous worker-speed regime ($M=2$, $\zeta=20$, worker times $(1,2,10,20)$). 
    Delay correction keeps a moderate advantage over overwrite, showing that the corrected merge remains useful when worker speeds vary substantially.}
    \label{fig:exp2-very-heterogeneous}
\end{figure}

\subsection{Sparsity-level ablation}
\label{subsec:sparsity-ablation}

We now vary the sparsity level $p$, which controls the fraction of model coordinates communicated at each synchronization.
The goal is to understand the communication-accuracy tradeoff induced by sparse parameter averaging.
We use the same \texttt{a9a} logistic-regression setup with $n=4$ workers, worker times $(1,2,3,6)$, $20$ rounds, learning rate $\eta=0.1$, batch size $256$, normalized features, and no regularization.
The local-computation and delay parameters are fixed to $M=4$ and $\zeta=6$, while the sparsity level is varied over
\[
    p \in \{0.001,0.01,0.1,1.0\}.
\]
We report training loss as a function of cumulative logical total bits in \Cref{fig:exp4-sparsity-ablation}, separately for \textbf{Local Sparse}, \textbf{Overlap overwrite}, and \textbf{Overlap delay-corrected}.

This ablation is best read horizontally, at a fixed communication budget.
Reducing $p$ shifts the curves left by orders of magnitude, because each communication round transmits fewer coordinates.
At the same time, the shape of the loss curve changes only mildly: even very sparse communication continues to make steady progress.
Thus, on this homogeneous-data \texttt{a9a} experiment, aggressive sparsification is highly communication-efficient.
For example, the curves with $p=0.001$ and $p=0.01$ reach losses close to the denser settings while using far fewer total bits.

There is still a tradeoff.
Larger $p$ communicates more of the model and therefore can slightly reduce the final loss after the same number of rounds, especially when comparing the endpoints of the curves.
However, this improvement is expensive in communication: the $p=1.0$ curve corresponds to dense parameter averaging and lies far to the right.
The main conclusion is therefore that sparsity buys a large reduction in communication cost with only a modest optimization penalty in this setting.
The conclusion is consistent across the three sparse methods, and the delay-corrected overlap rule remains at least as good as overwrite while preserving the same communication savings.

\begin{figure}[H]
    \centering
    \begin{minipage}{0.48\linewidth}
        \centering
        \includegraphics[width=\linewidth]{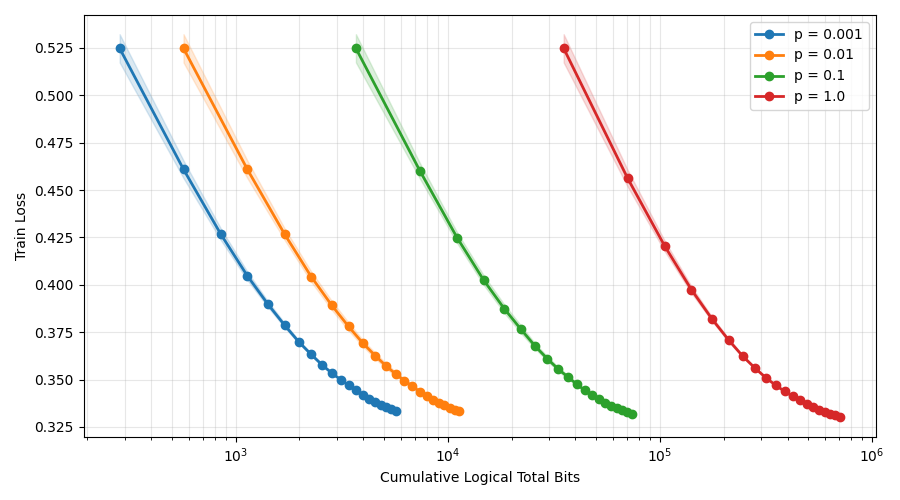}\\
        {\small (a) Local Sparse.}
    \end{minipage}\hfill
    \begin{minipage}{0.48\linewidth}
        \centering
        \includegraphics[width=\linewidth]{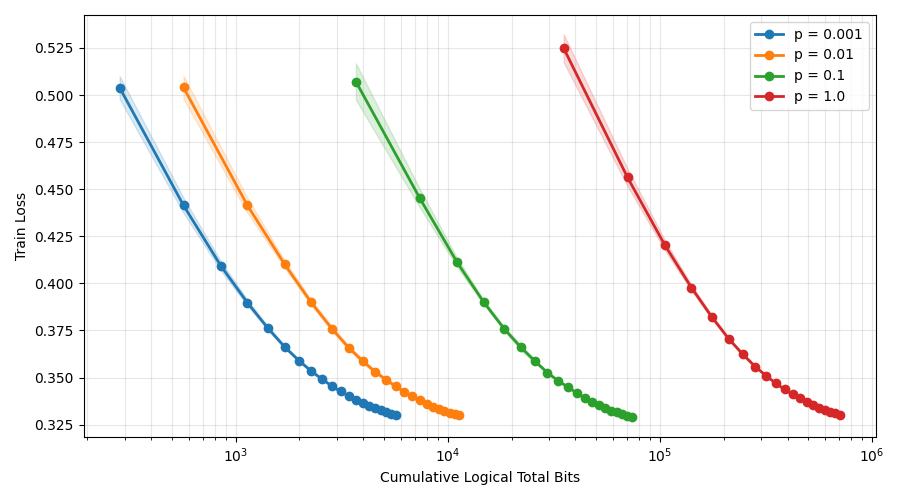}\\
        {\small (b) Overlap overwrite.}
    \end{minipage}

    \begin{minipage}{0.62\linewidth}
        \centering
        \includegraphics[width=\linewidth]{images/exp4/overlap_delay_corrected_train_loss_vs_bits.png}\\
        {\small (c) Overlap delay-corrected.}
    \end{minipage}
    \caption{Sparsity-level ablation on \texttt{a9a}.
    We vary $p\in\{0.001,0.01,0.1,1.0\}$ with $M=4$, $\zeta=6$, worker times $(1,2,3,6)$, and otherwise identical training settings.
    Smaller $p$ dramatically reduces the cumulative number of communicated bits while preserving a similar loss trajectory, showing that sparse parameter averaging gives a strong communication-accuracy tradeoff in this homogeneous-data regime.}
    \label{fig:exp4-sparsity-ablation}
\end{figure}

\subsection{Ablation on the local computation budget}
\label{subsec:ablation-M}
We next study the effect of the local computation budget $M$.
We vary $M \in \{1,4,16,64\}$ and plot the training loss against cumulative
logical time in \Cref{fig:exp5-m-ablation}.
All runs use \texttt{a9a} logistic regression with $20$ rounds, learning rate
$\eta=0.1$, sparsity level $p=0.3$, communication delay $\zeta=6$, batch size
$256$, normalized features, no regularization, and worker times $(1,2,3,6)$.
All curves start from the same initial model at logical time zero.
This initial point is not shown because the horizontal axis is logarithmic,
and $\log(0)$ is undefined.
We keep the raw time values rather than artificially shifting or transforming
the data to display the initial point.
Consequently, the first visible point for larger values of $M$ appears later,
since one communication round has a larger logical duration.

\begin{figure}[H]
    \centering
    \begin{minipage}{0.48\linewidth}
        \centering
        \includegraphics[width=\linewidth]{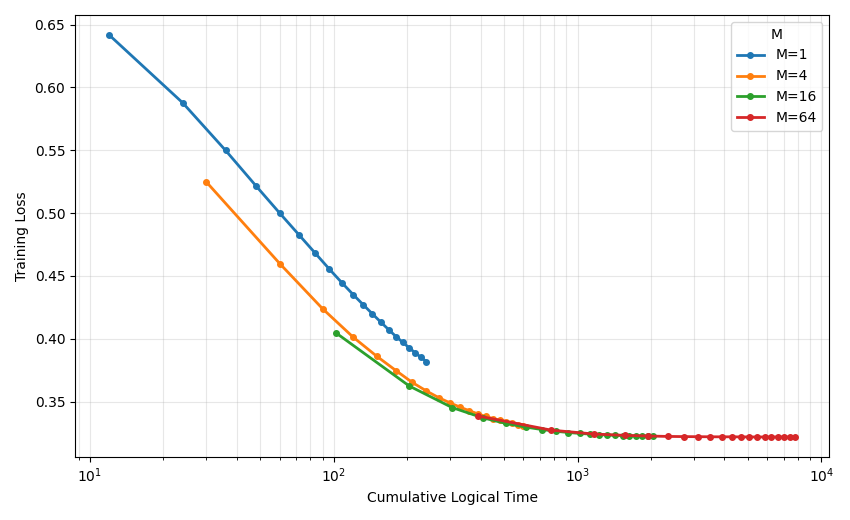}\\
        {\small (a) Local Sparse.}
    \end{minipage}\hfill
    \begin{minipage}{0.48\linewidth}
        \centering
        \includegraphics[width=\linewidth]{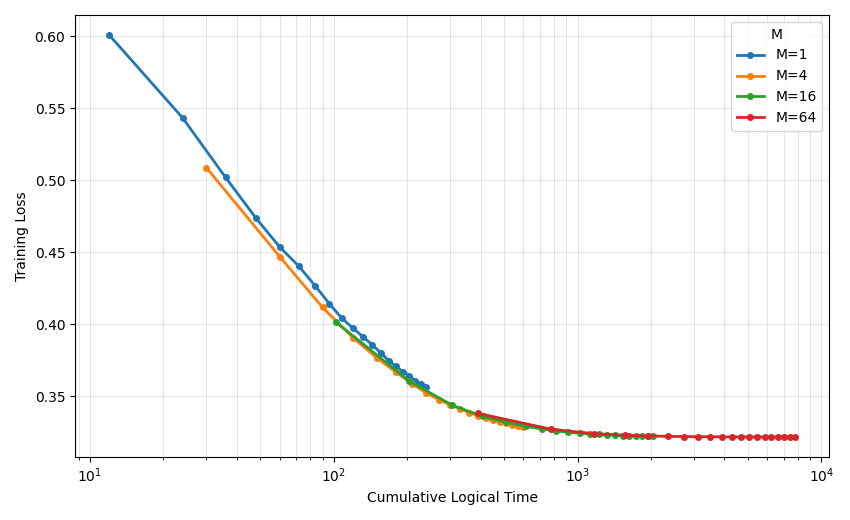}\\
        {\small (b) Overlap overwrite.}
    \end{minipage}

    \begin{minipage}{0.62\linewidth}
        \centering
        \includegraphics[width=\linewidth]{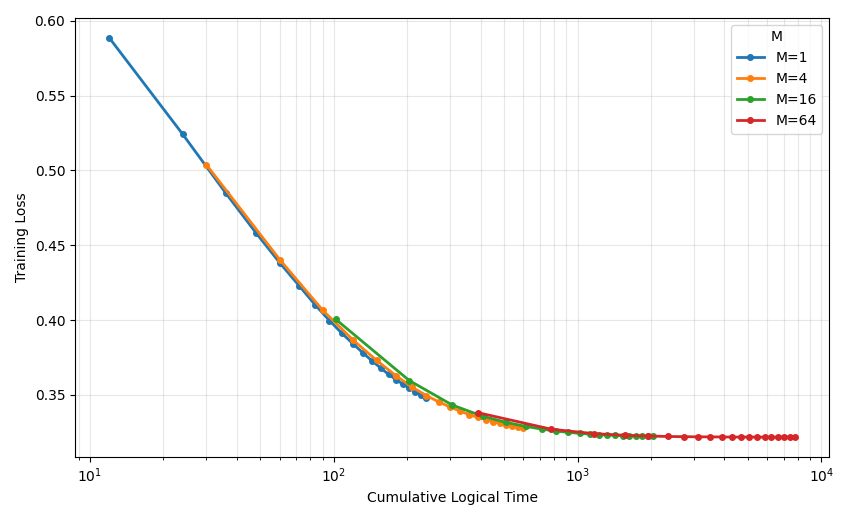}\\
        {\small (c) Overlap delay-corrected.}
    \end{minipage}
    \caption{Ablation on the local computation budget.
    The blocking local-sparse baseline is sensitive to the local computation budget in early logical time, while the overlap methods are more stable.
    The delay-corrected rule gives the strongest collapse across the tested budgets.}
    \label{fig:exp5-m-ablation}
\end{figure}

For the blocking \textbf{Local Sparse} baseline, the early training-loss
trajectory is sensitive to $M$.
Small values of $M$ synchronize more frequently and therefore spend a larger
fraction of logical time in blocking communication.
Larger values, especially $M=16$ and $M=64$, amortize this communication cost
over more local computation and reach lower training loss faster in logical
time.
However, all tested values eventually approach a similar loss floor.

The overlap methods show a weaker dependence on $M$.
For \textbf{Overlap overwrite}, the curves for different values of $M$ are much
closer than for the blocking baseline.
This suggests that once communication is overlapped with local computation, the
local computation budget has a smaller effect on training-loss decrease per
unit logical time.
The remaining differences are mostly visible in the early phase, while the
curves nearly coincide as training progresses.

The strongest invariance is observed for \textbf{Overlap delay-corrected}.
Across $M \in \{1,4,16,64\}$, the loss curves almost collapse when plotted
against cumulative logical time.
This behavior is consistent with the delay-corrected merge preserving the
useful local progress made during the communication window.
In this regime, small values of $M$ do not suffer the same logical-time penalty
observed in the blocking baseline, while larger values of $M$ do not introduce
a visible training-loss penalty.

Overall, this ablation suggests that the blocking sparse method is sensitive to
$M$ mainly through communication amortization.
Communication--computation overlap reduces this sensitivity, and the
delay-corrected rule gives the most stable training-loss behavior across local
computation budgets.
In all cases, the methods reach nearly the same final training-loss floor, so
the main effect of $M$ is on early progress in logical time rather than final
training loss.
This conclusion is for the data-homogeneous regime studied here.
In strongly data-heterogeneous regimes, a large value of $M$ can be harmful for
all methods, because workers may take many consecutive steps toward different
local objectives before their models are realigned.
This client-drift effect is especially important when the data split is highly
non-i.i.d., and it should be controlled separately when choosing the local
computation budget.

\subsection{Effect of communication delay}
\label{subsec:communication-delay-ablation}

We next vary the communication delay $\zeta$ to test when overlap becomes most valuable.
This experiment compares the same three sparse methods as in \Cref{subsec:a9a-sparse-overlap}: \textbf{Local Sparse}, \textbf{Overlap overwrite}, and \textbf{Overlap delay-corrected}.
The point is different from the merge-rule-only comparison in \Cref{subsec:merge-rule-comparison}.
Here we ask how the amount of time spent in communication changes the value of overlapping computation with communication, and how much of that value is lost if the delayed sparse average is merged by overwriting.

We do not plot the case $\zeta=0$.
When $\zeta=0$, communication is instantaneous, so the overlap phase has length zero and the overlap methods perform no extra local steps while messages are in flight.
In that regime, overwrite and delay correction coincide with the blocking local-sparse update, and all three curves are identical under the same seed.
The informative cases are therefore the positive-delay regimes shown in \Cref{fig:exp3-delay-ablation-train}, where the communication window creates both an opportunity for useful extra computation and a stale-merge effect that must be handled.

For the moderate-delay case $\zeta=12$, the two overlap methods already improve over \textbf{Local Sparse} throughout most of training.
The delay-corrected rule is consistently best, although its margin over overwrite is relatively small by the end of training.
This suggests that when the communication window is long enough to make overlap useful but not dominant, the main gain comes from avoiding idle time, while delay correction provides an additional but more modest improvement.

The larger-delay case $\zeta=48$ makes the mechanism much clearer.
\textbf{Local Sparse} falls behind because it waits through a long communication window at every round, whereas both overlap methods turn that same time into additional local optimization.
Among the overlap methods, delay correction gives the lowest training loss across the run and has a visibly larger advantage over overwrite than in the $\zeta=12$ case.
This is consistent with the interpretation of the corrected merge: as $\zeta$ increases, workers make more progress during communication, so a naive overwrite discards more useful work on the synchronized coordinates.
Delay correction preserves this overlap-phase progress and therefore benefits more strongly when communication delay is large.

\begin{figure}[H]
    \centering
    \begin{minipage}{0.48\linewidth}
        \centering
        \includegraphics[width=\linewidth]{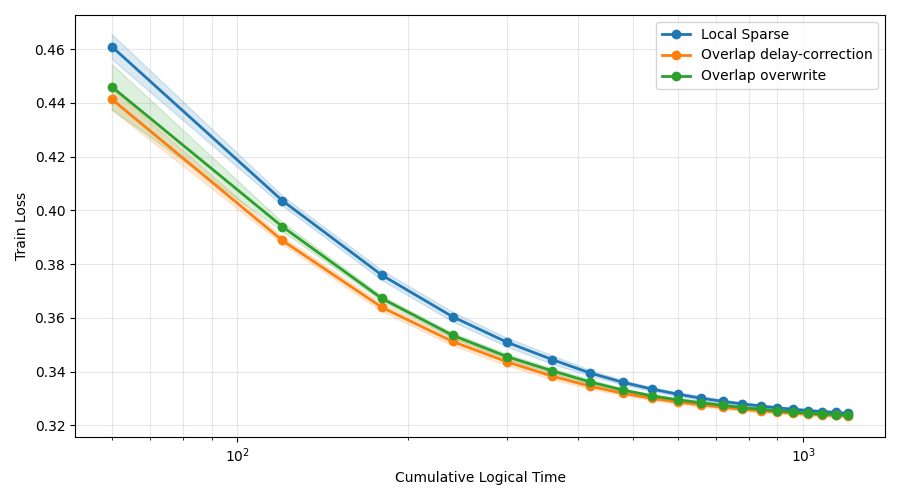}\\
        {\small (a) Moderate delay, $\zeta=12$.}
    \end{minipage}\hfill
    \begin{minipage}{0.48\linewidth}
        \centering
        \includegraphics[width=\linewidth]{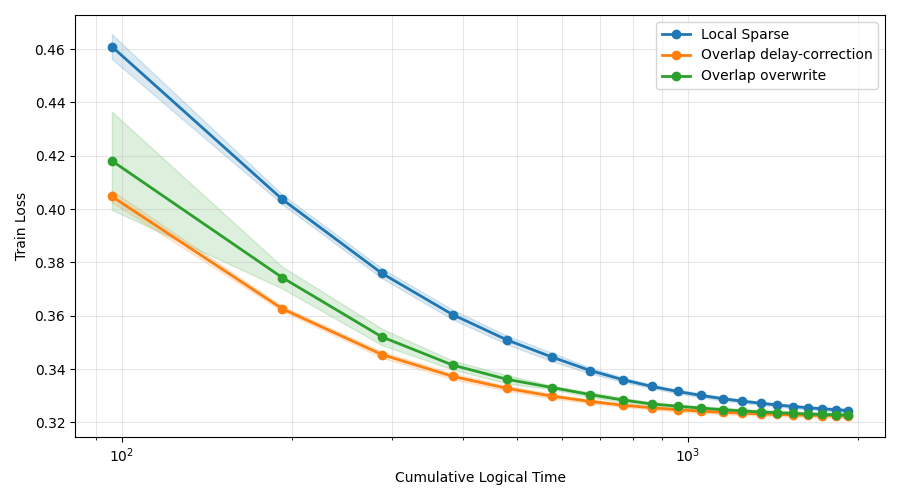}\\
        {\small (b) Large delay, $\zeta=48$.}
    \end{minipage}
    \caption{Training loss versus cumulative logical time for the communication-delay ablation.
    The zero-delay case is omitted because all three methods coincide when $\zeta=0$.
    For positive delay, overlap improves over blocking local sparse averaging, and the delay-corrected merge is consistently strongest; the advantage is especially pronounced when the communication delay is large.}
    \label{fig:exp3-delay-ablation-train}
\end{figure}

\paragraph{A heterogeneous-data stress test.}
The preceding delay ablation should not be read as saying that longer communication delay is always beneficial.
To illustrate the limitation, we also run an intentionally difficult non-i.i.d.\ experiment with a label-Dirichlet partition with concentration parameter $\alpha=0.03$.
This creates strongly heterogeneous worker objectives.
The run uses \texttt{a9a} logistic regression with $40$ rounds, learning rate $\eta=0.3$, sparsity level $p=0.01$, $M=1$, $\zeta=96$, and worker times $(1,2,3,6)$.
This is an extreme communication-dominated regime: before communication, the workers take
\[
    (N_1,N_2,N_3,N_4) = (6,3,2,1)
\]
local steps, while during the communication window the overlap methods take
\[
    (Q_1,Q_2,Q_3,Q_4) = (96,48,32,16)
\]
additional local steps.

The result in \Cref{fig:exp3-heterogeneous-negative} is negative for overlap.
Both overlap methods improve very quickly at the beginning, because the long communication window gives them many extra local updates.
However, with such heterogeneous data these extra updates are not simply useful progress toward the global objective: each worker is also pulled strongly toward its own local objective.
After this fast initial phase, the overlap curves plateau below \textbf{Local Sparse}; in particular, delay correction performs worst because it is designed to preserve the progress made during communication, and in this non-i.i.d.\ setting that progress can be biased toward local optima.
The blocking local-sparse method is slower initially, but by waiting through communication it takes far fewer local steps between synchronizations and eventually reaches better train and validation accuracy.

This experiment is outside the scope of our theory, which is developed for the data-homogeneous setting where all workers optimize the same objective.
Its role is therefore mainly diagnostic.
It shows that aggressive overlap can fail when communication is extremely slow, sparsity is very high, and the data partition is strongly heterogeneous.
In such regimes, one would need additional mechanisms, such as smaller learning rates, less aggressive overlap, stronger synchronization, or an analysis that explicitly controls client drift under heterogeneous objectives.

\begin{figure}[H]
    \centering
    \begin{minipage}{0.48\linewidth}
        \centering
        \includegraphics[width=\linewidth]{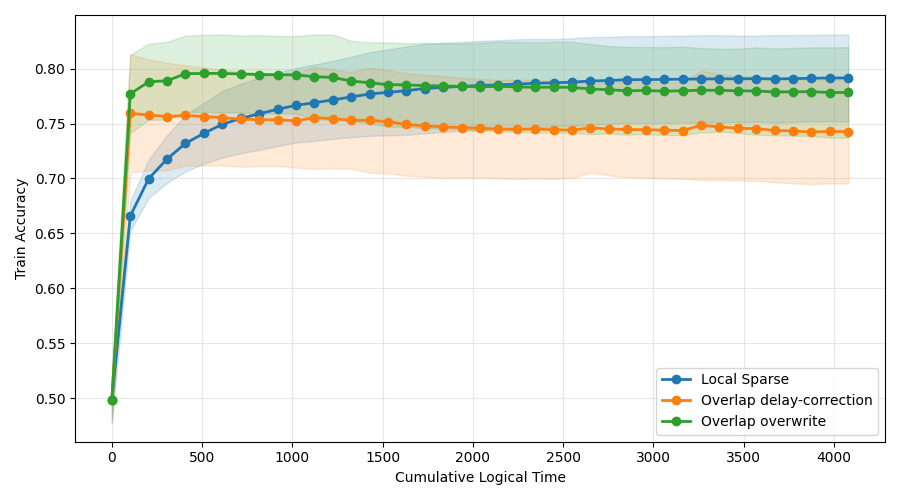}\\
        {\small (a) Training accuracy.}
    \end{minipage}\hfill
    \begin{minipage}{0.48\linewidth}
        \centering
        \includegraphics[width=\linewidth]{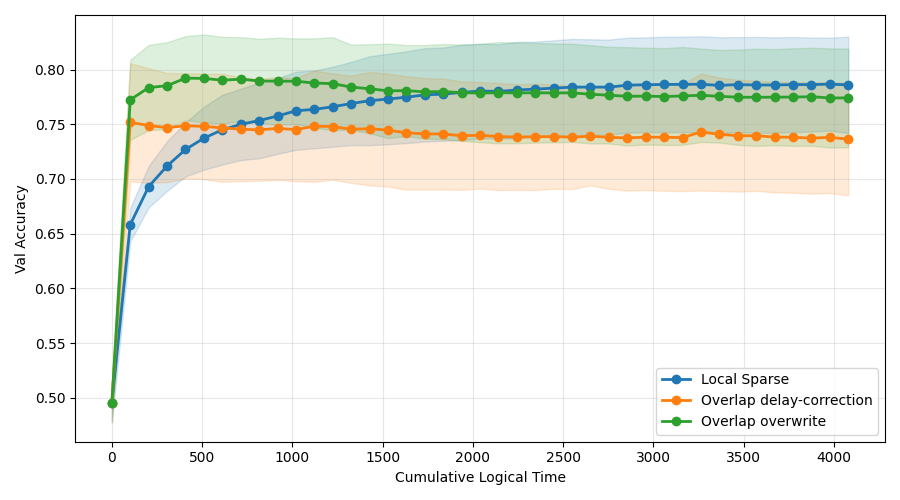}\\
        {\small (b) Validation accuracy.}
    \end{minipage}
    \caption{Negative stress test with strongly heterogeneous data and very slow communication.
    The data are partitioned by a label-Dirichlet split with $\alpha=0.03$, and the overlap methods use $M=1$, $\zeta=96$, and $p=0.01$.
    The overlap methods make rapid initial progress but then plateau below the blocking local-sparse baseline, indicating that too many local steps during communication can drive the methods toward poor local optima when worker objectives are highly heterogeneous.}
    \label{fig:exp3-heterogeneous-negative}
\end{figure}

\subsection{CIFAR-10 neural-network experiments}
\label{subsec:cifar-neural-network}

We also test the sparse overlap methods on CIFAR-10 image classification \citep{krizhevsky2009learning}.
The goal is to check whether the behavior observed for logistic regression persists for a non-linear model trained with the same logical-time simulation framework.
The model is a convolutional classifier trained with cross-entropy loss.
We use the same three sparse methods as above and run all methods with matched initialization, data split, and random seed.
The CIFAR-10 data are split into training, validation, and test sets, with validation held out from the training set.
To focus on optimization and communication effects rather than client drift, all logical workers sample from the same training distribution.
Training uses random crops with padding \(4\), random horizontal flips, weight decay \(5\cdot 10^{-4}\), dropout \(0.1\), batch size \(256\), evaluation batch size \(512\), and \(10\%\) validation data.
The runs use four logical workers with step times \((1,2,3,6)\), learning rate \(0.05\), seed \(11\), target budget \(400\) epochs, and simulated 32-bit communicated values.

We consider two communication regimes.
The normal regime uses sparsity \(p=0.25\), local-computation parameter \(M=4\), and communication delay \(\zeta=6\).
The communication-stress regime uses more aggressive sparsity and a longer delay, with \(p=0.10\), \(M=2\), and \(\zeta=24\).
The stress regime is designed to make communication a larger part of the logical-time budget.

\Cref{fig:cifar-normal,fig:cifar-comm-stress} show the CIFAR-10 results.
In both regimes, overlap improves training and validation accuracy as a function of logical time.
The advantage is clearer in the communication-stress setting, where blocking communication is more expensive.
The processed-example plots provide a complementary check.
Because overlap workers keep taking mini-batch steps while sparse communication is in flight, an overlap method can process more examples by a fixed logical time than blocking \textbf{Local Sparse}.
Plotting against cumulative processed examples removes this timing advantage and compares methods after they have consumed the same amount of training data, which also roughly equalizes the number of local SGD steps and the amount of local compute.
On this axis, the methods are much closer, especially in the normal regime.
Importantly, the overlap variants do not lag behind or become unstable when compared this way.
Thus the additional local steps taken during communication are not merely buying speed by spending extra computation in a harmful way; they appear to remain useful optimization steps.
Together with the logical-time plots, this supports the same interpretation as in the logistic-regression experiments: overlap turns communication waiting time into productive local training, and the main gain is in elapsed logical time rather than in changing the amount of data needed to reach a given accuracy.

\begin{figure}[H]
    \centering
    \begin{minipage}{0.48\linewidth}
        \centering
        \includegraphics[width=\linewidth]{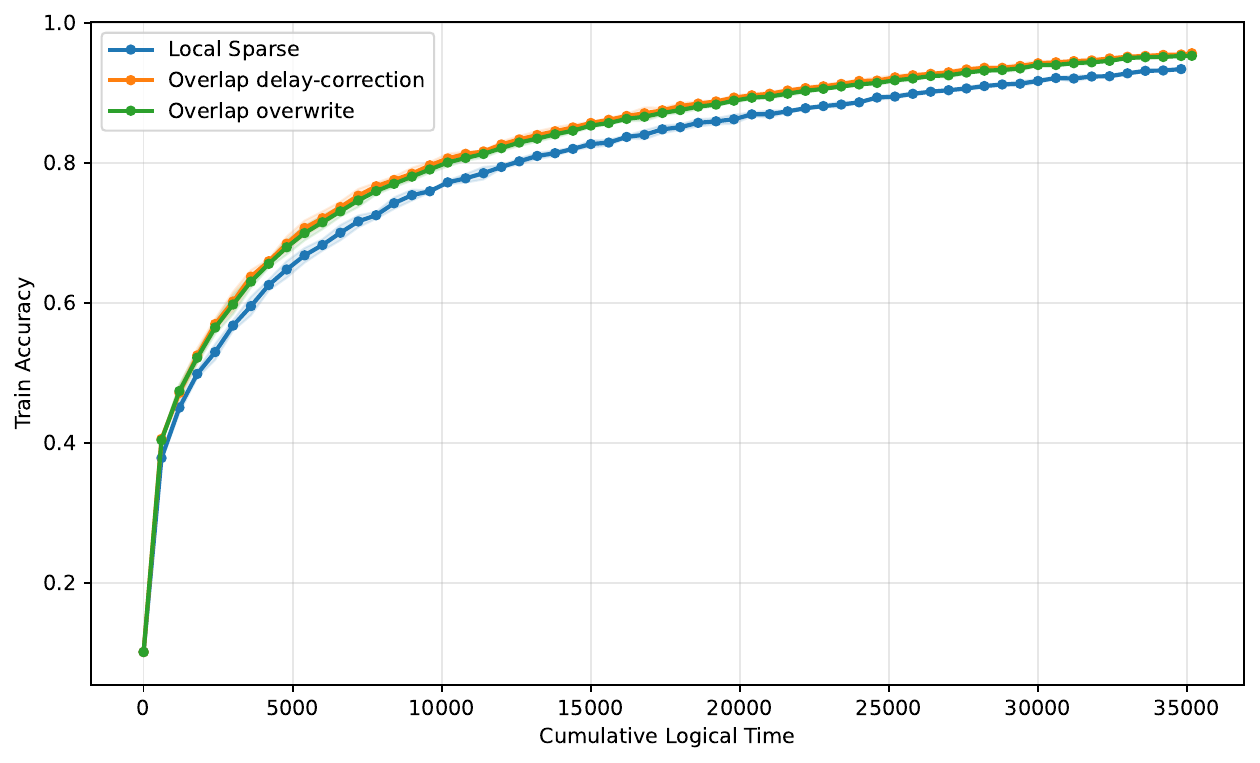}\\
        {\small (a) Training accuracy versus logical time.}
    \end{minipage}\hfill
    \begin{minipage}{0.48\linewidth}
        \centering
        \includegraphics[width=\linewidth]{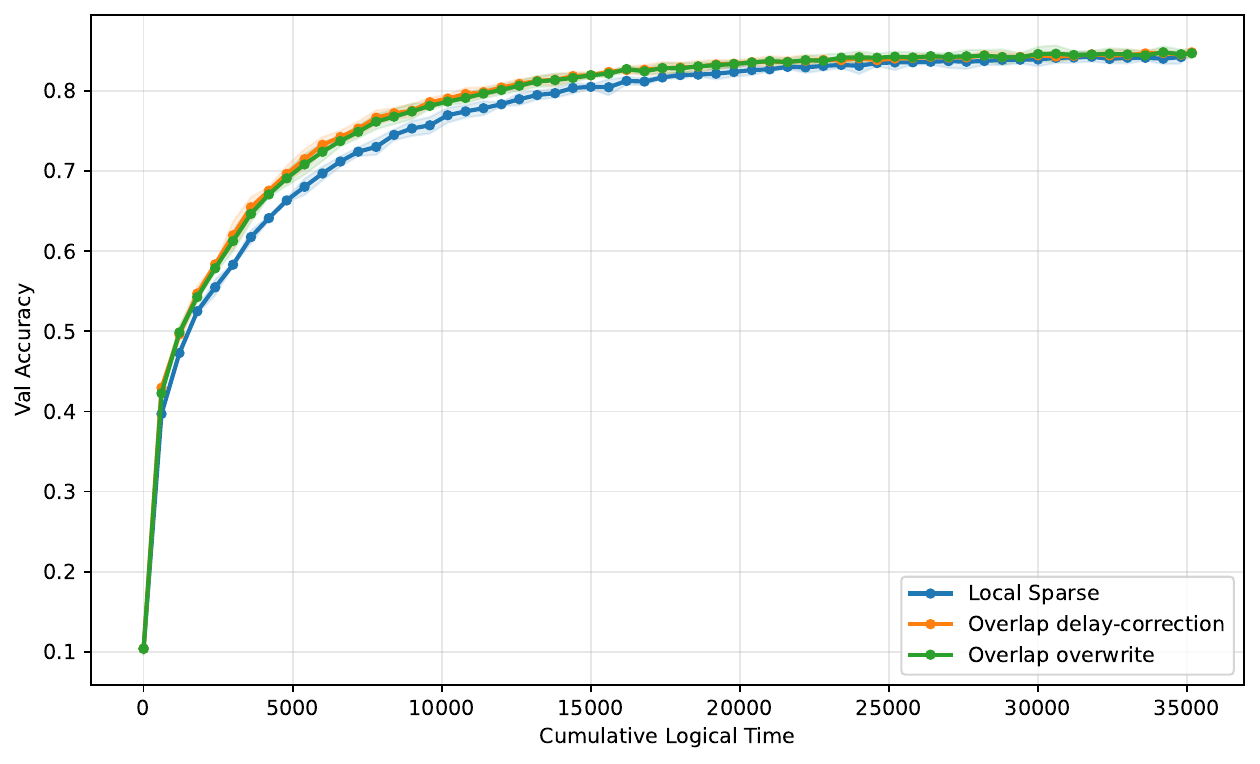}\\
        {\small (b) Validation accuracy versus logical time.}
    \end{minipage}

    \begin{minipage}{0.62\linewidth}
        \centering
        \includegraphics[width=\linewidth]{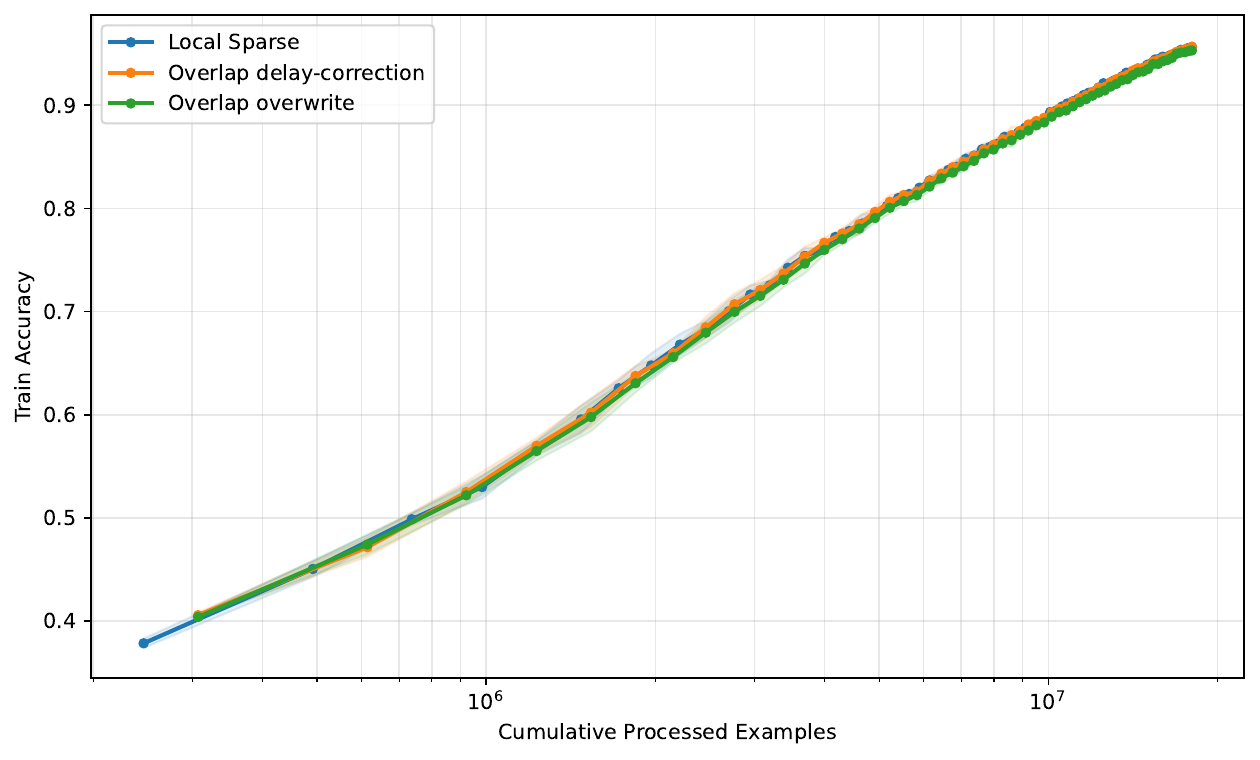}\\
        {\small (c) Training accuracy versus processed examples.}
    \end{minipage}
    \caption{CIFAR-10 convolutional-network experiment in the normal communication regime, with \(p=0.25\), \(M=4\), and \(\zeta=6\).
    The overlap methods improve over blocking sparse averaging in logical time, while the processed-example curves show that this extra overlapped computation does not reduce accuracy at a fixed amount of data processed.}
    \label{fig:cifar-normal}
\end{figure}

\begin{figure}[H]
    \centering
    \begin{minipage}{0.48\linewidth}
        \centering
        \includegraphics[width=\linewidth]{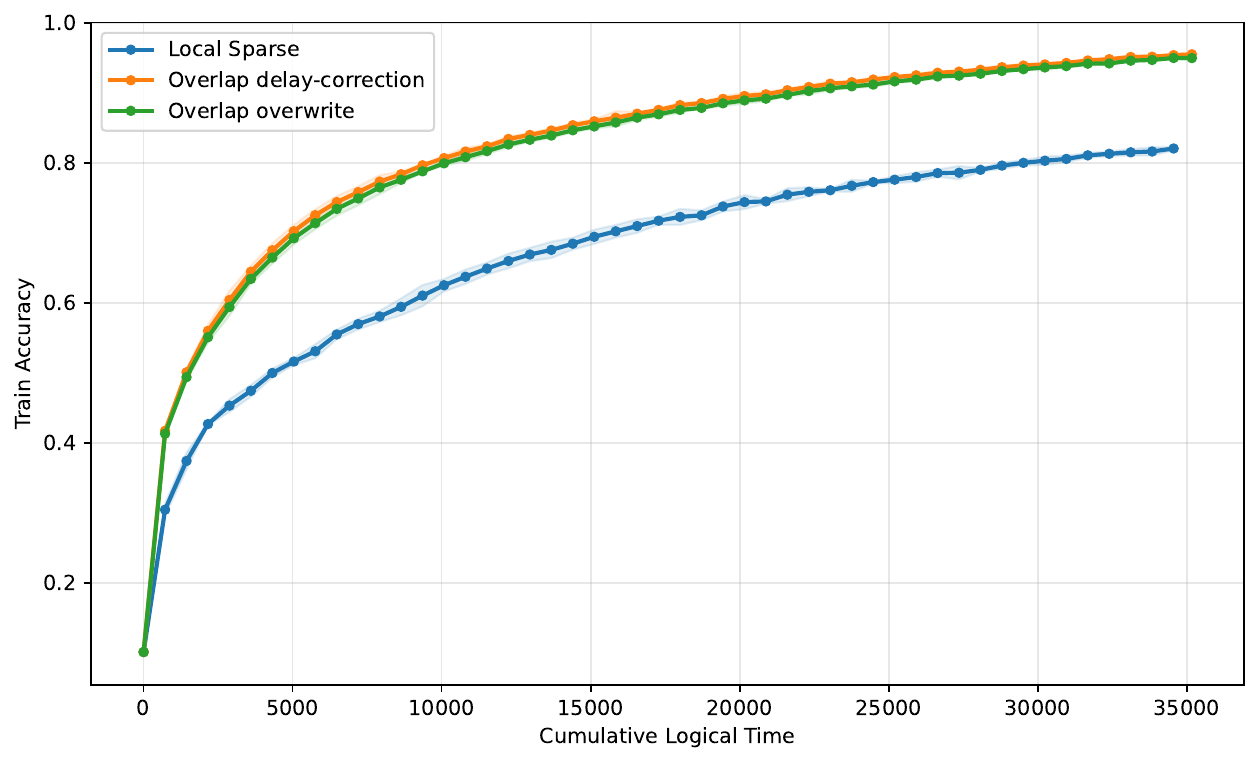}\\
        {\small (a) Training accuracy versus logical time.}
    \end{minipage}\hfill
    \begin{minipage}{0.48\linewidth}
        \centering
        \includegraphics[width=\linewidth]{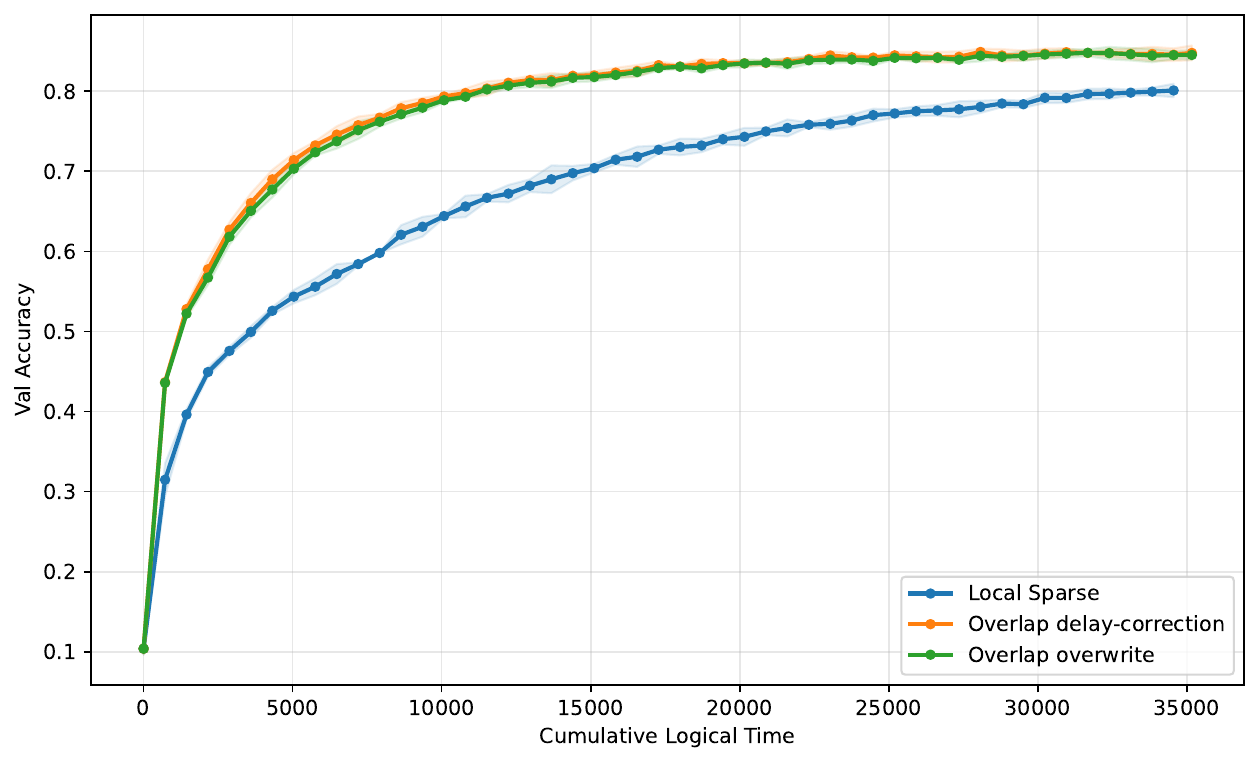}\\
        {\small (b) Validation accuracy versus logical time.}
    \end{minipage}

    \begin{minipage}{0.62\linewidth}
        \centering
        \includegraphics[width=\linewidth]{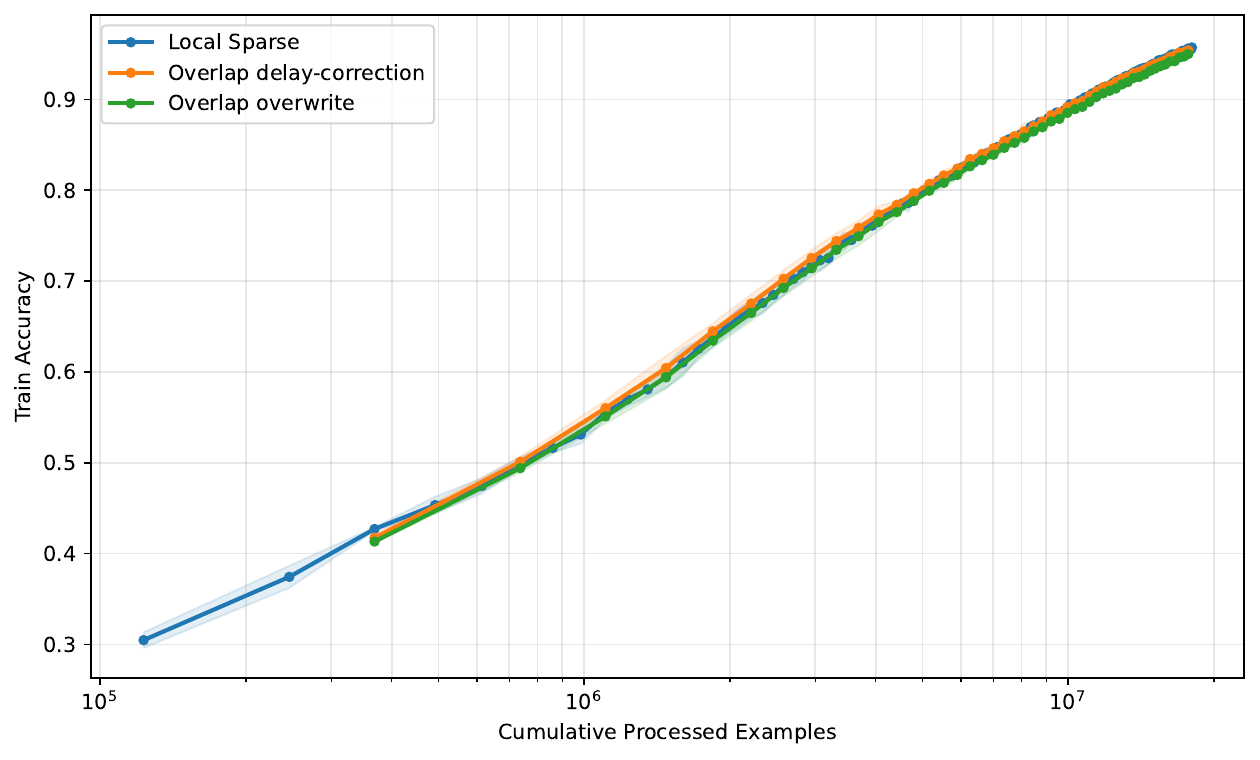}\\
        {\small (c) Training accuracy versus processed examples.}
    \end{minipage}
    \caption{CIFAR-10 convolutional-network experiment in the communication-stress regime, with \(p=0.10\), \(M=2\), and \(\zeta=24\).
    The larger communication delay makes the logical-time benefit of overlap more pronounced, while the processed-example plot checks that the additional local steps remain useful when compared at the same number of consumed training examples.}
    \label{fig:cifar-comm-stress}
\end{figure}

\subsection{Tiny ImageNet neural-network experiments}
\label{subsec:tiny-imagenet-neural-network}

We additionally evaluate the sparse overlap methods on Tiny ImageNet to test the same mechanisms on a larger image-classification problem.
Tiny ImageNet contains \(200\) image classes at resolution \(64\times 64\).
We use the official training split for optimization, hold out \(10\%\) of it for validation, and reserve the official validation split for final test evaluation.
All images are normalized using ImageNet statistics.

The model is a ResNet-18-style convolutional network trained from scratch \citep{he2016deep}.
Because the input images are \(64\times 64\), we replace the standard ImageNet stem by a \(3\times 3\) convolution with stride \(1\) and remove the initial max-pooling layer.
We use GroupNorm instead of BatchNorm, since BatchNorm running statistics are not naturally synchronized by our sparse parameter-averaging implementation.
The final classifier is replaced by a \(200\)-class linear layer, and dropout is disabled.

All methods use the same training recipe: \(100\) effective epochs, batch size \(128\), SGD with momentum \(0.9\), weight decay \(5\cdot 10^{-4}\), initial learning rate \(0.1\), and cosine decay to zero.
Training uses random crops with padding \(4\) and random horizontal flips; evaluation uses deterministic \(64\times 64\) images without augmentation.
As in the CIFAR-10 experiments, we use four logical workers with step times \((1,2,3,6)\) and a shared data partition, so each worker samples from the same training distribution.
We compare \textbf{Local Sparse}, \textbf{Overlap overwrite}, and \textbf{Overlap delay-corrected} under two communication regimes.
The normal regime uses \(p=0.25\), overlap parameters \(M=4\) and \(\zeta=6\), and the corresponding blocking baseline uses compute window \(24\) and communication time \(6\). 
The communication-stress regime uses stronger sparsification and delay, with \(p=0.10\), overlap parameters \(M=2\) and \(\zeta=24\), and the corresponding blocking baseline uses compute window \(12\) and communication time \(24\).

\Cref{fig:tin-normal,fig:tin-comm-stress} show the results.
The main conclusion is again that overlap is the dominant effect.
In the normal regime, both overlap variants improve substantially over blocking sparse averaging in logical time and in rounds, while the processed-example plot shows that the methods have similar sample efficiency once the extra computation performed during communication is accounted for.
In the communication-stress regime, the logical-time gap becomes much larger: blocking \textbf{Local Sparse} spends so much time waiting that it reaches much lower training and validation accuracy within the same time budget.
The processed-example plot is more surprising.
This axis removes the wall-clock advantage of overlap, but it does not equalize the number of sparse merge operations.
In the stress regime, the overlap methods perform the same pre-communication steps as \textbf{Local Sparse}, and then two more such blocks of local computation during communication.
Thus, for a comparable number of processed examples, \textbf{Local Sparse} has applied roughly three times as many sparse averaging operations.
We did not expect \textbf{Local Sparse} to be worse on this axis, and we do not yet fully understand the effect.
One plausible explanation is that, with only \(p=0.10\) communicated coordinates and a deep network, very frequent partial-coordinate merges can perturb the optimization trajectory, while the overlap methods implicitly use a larger local-computation window between sparse synchronizations.

Unlike the logistic-regression merge-rule ablations, Tiny ImageNet shows little separation between \textbf{Overlap overwrite} and \textbf{Overlap delay-corrected}.
The two overlap curves are nearly indistinguishable in the stress regime, and overwrite is slightly ahead in some normal-regime curves.
Thus, these experiments should be read primarily as evidence for the robustness of communication--computation overlap on a more demanding neural-network task, rather than as evidence that one sparse merge rule dominates on every architecture and training recipe.

\begin{figure}[H]
    \centering
    \begin{minipage}{0.48\linewidth}
        \centering
        \includegraphics[width=\linewidth]{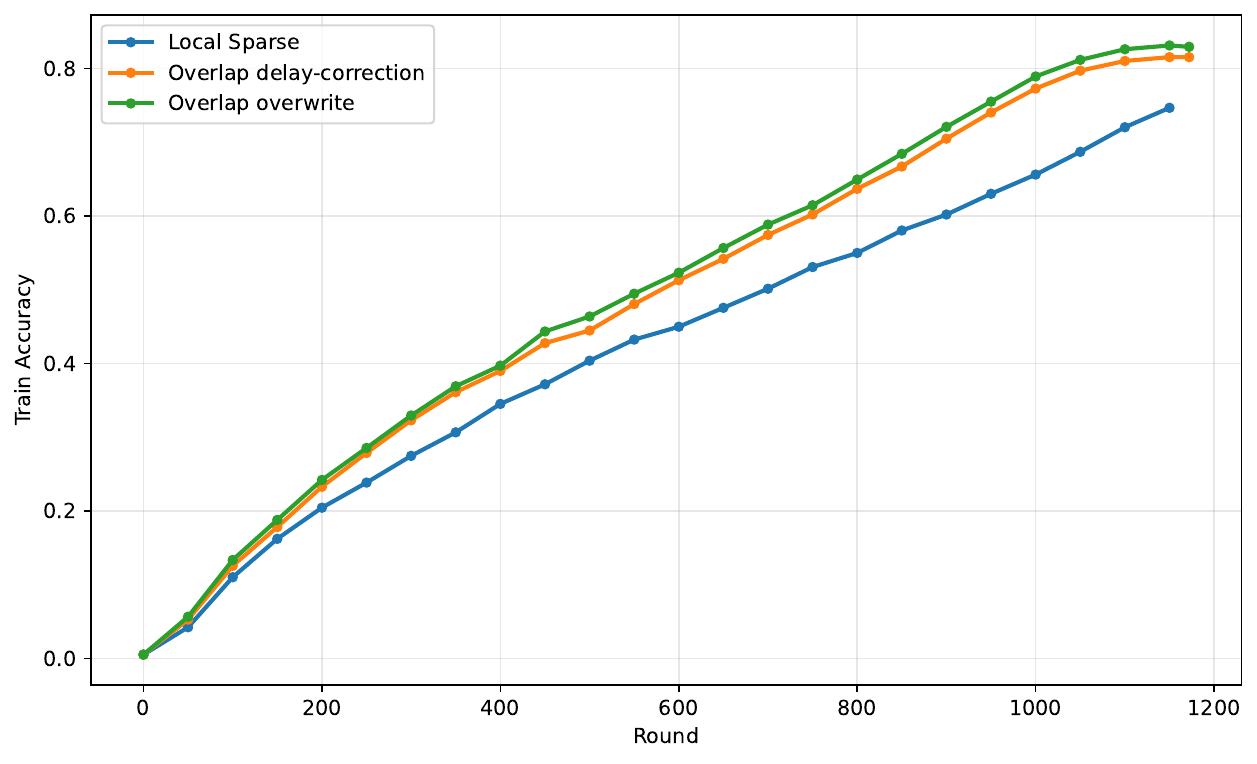}\\
        {\small (a) Training accuracy versus round.}
    \end{minipage}\hfill
    \begin{minipage}{0.48\linewidth}
        \centering
        \includegraphics[width=\linewidth]{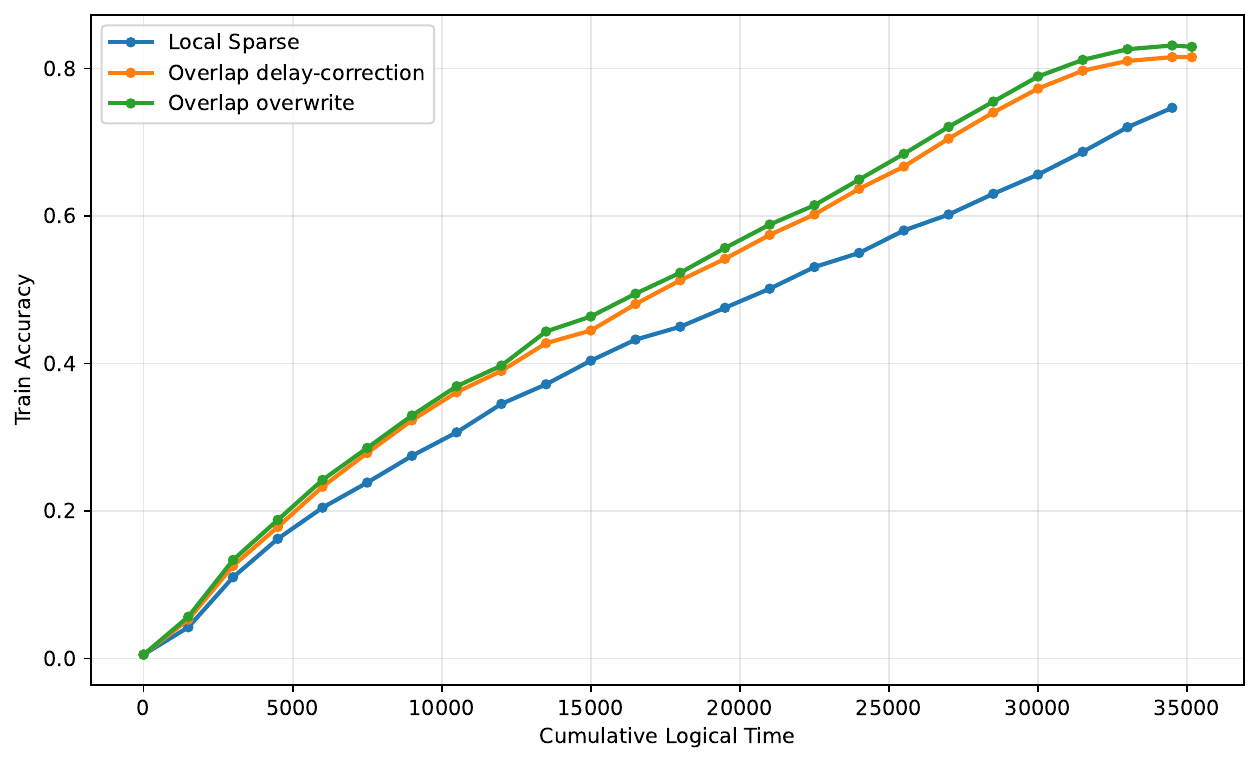}\\
        {\small (b) Training accuracy versus logical time.}
    \end{minipage}

    \begin{minipage}{0.48\linewidth}
        \centering
        \includegraphics[width=\linewidth]{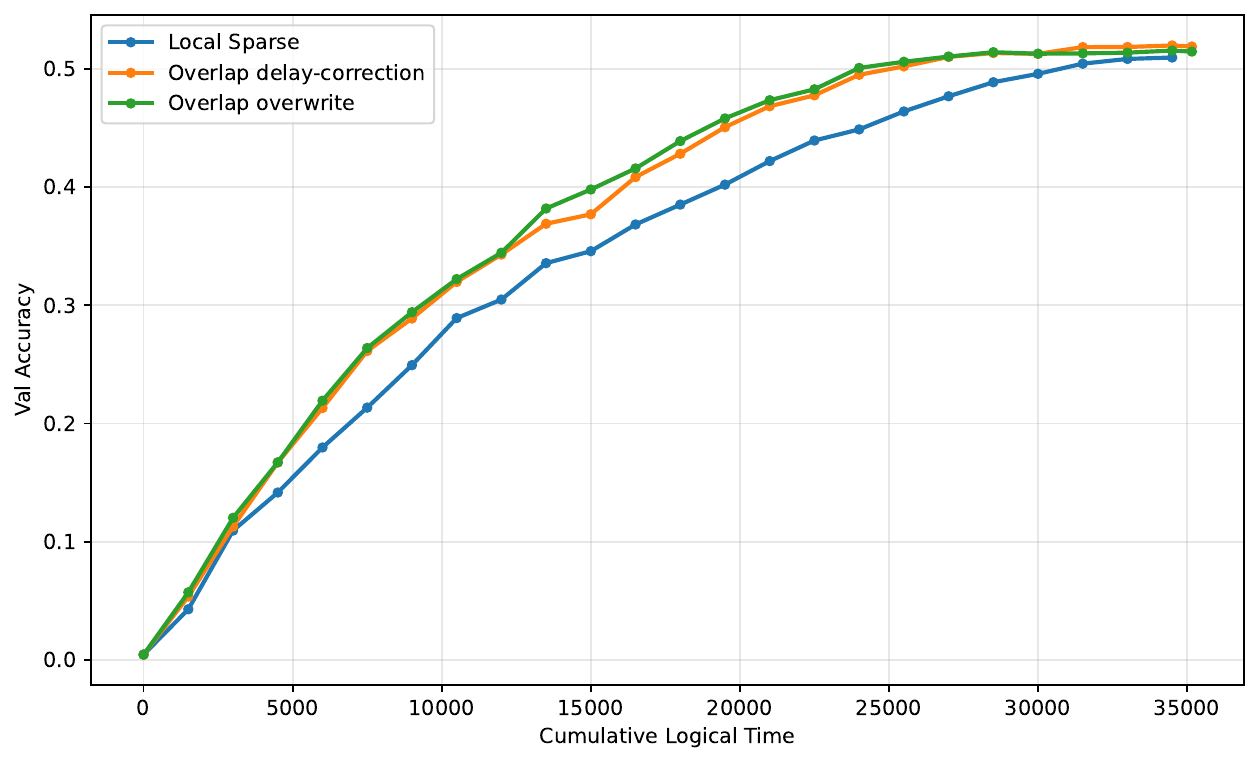}\\
        {\small (c) Validation accuracy versus logical time.}
    \end{minipage}\hfill
    \begin{minipage}{0.48\linewidth}
        \centering
        \includegraphics[width=\linewidth]{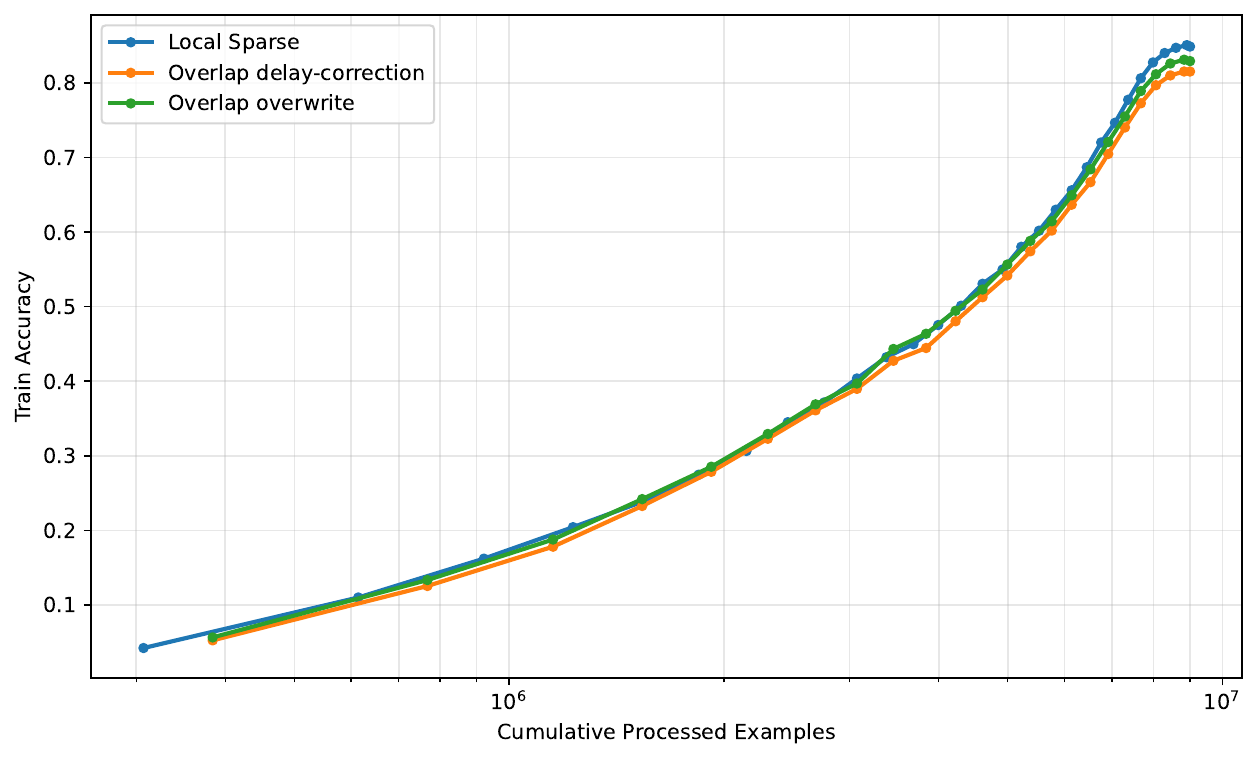}\\
        {\small (d) Training accuracy versus processed examples.}
    \end{minipage}
    \caption{Tiny ImageNet experiment in the normal communication regime, with \(p=0.25\), \(M=4\), and \(\zeta=6\).
    Both overlap variants improve over blocking sparse averaging in logical time and in rounds.
    When plotted against processed examples, the curves are much closer, indicating that the main benefit is the use of communication time for additional useful computation.}
    \label{fig:tin-normal}
\end{figure}

\begin{figure}[H]
    \centering
    \begin{minipage}{0.48\linewidth}
        \centering
        \includegraphics[width=\linewidth]{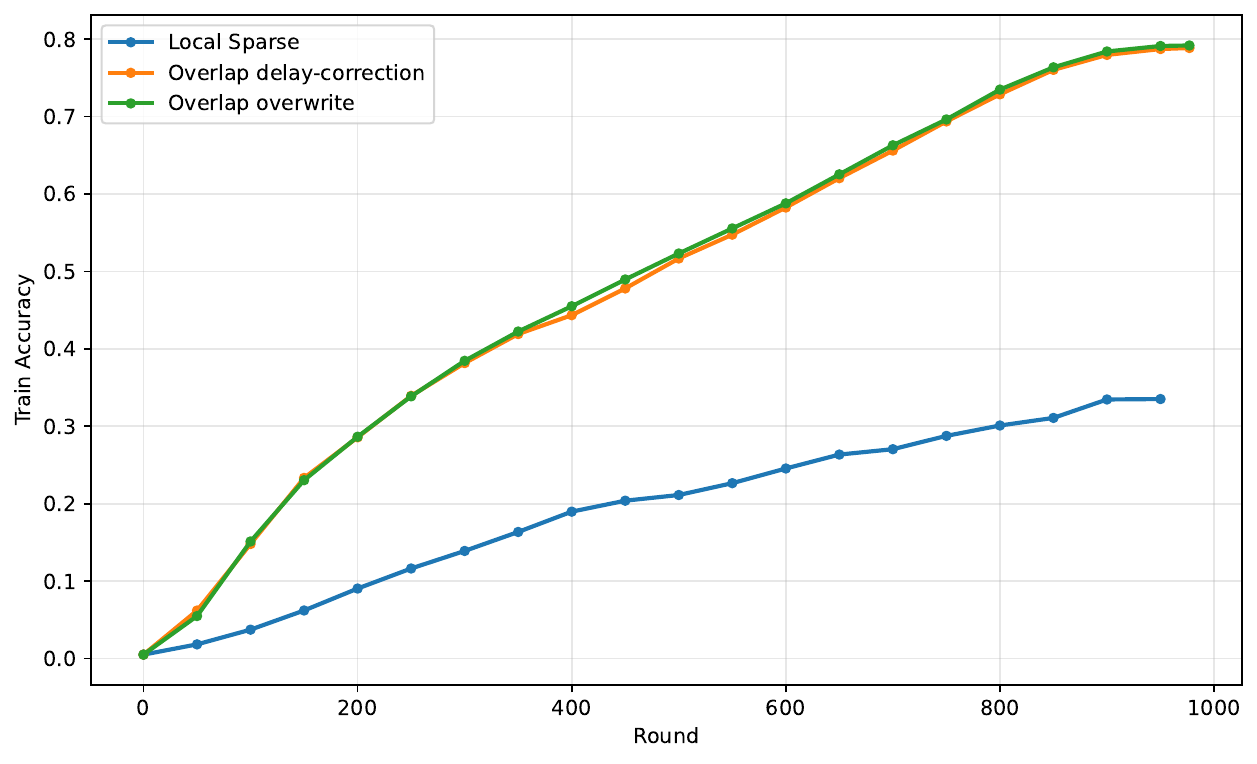}\\
        {\small (a) Training accuracy versus round.}
    \end{minipage}\hfill
    \begin{minipage}{0.48\linewidth}
        \centering
        \includegraphics[width=\linewidth]{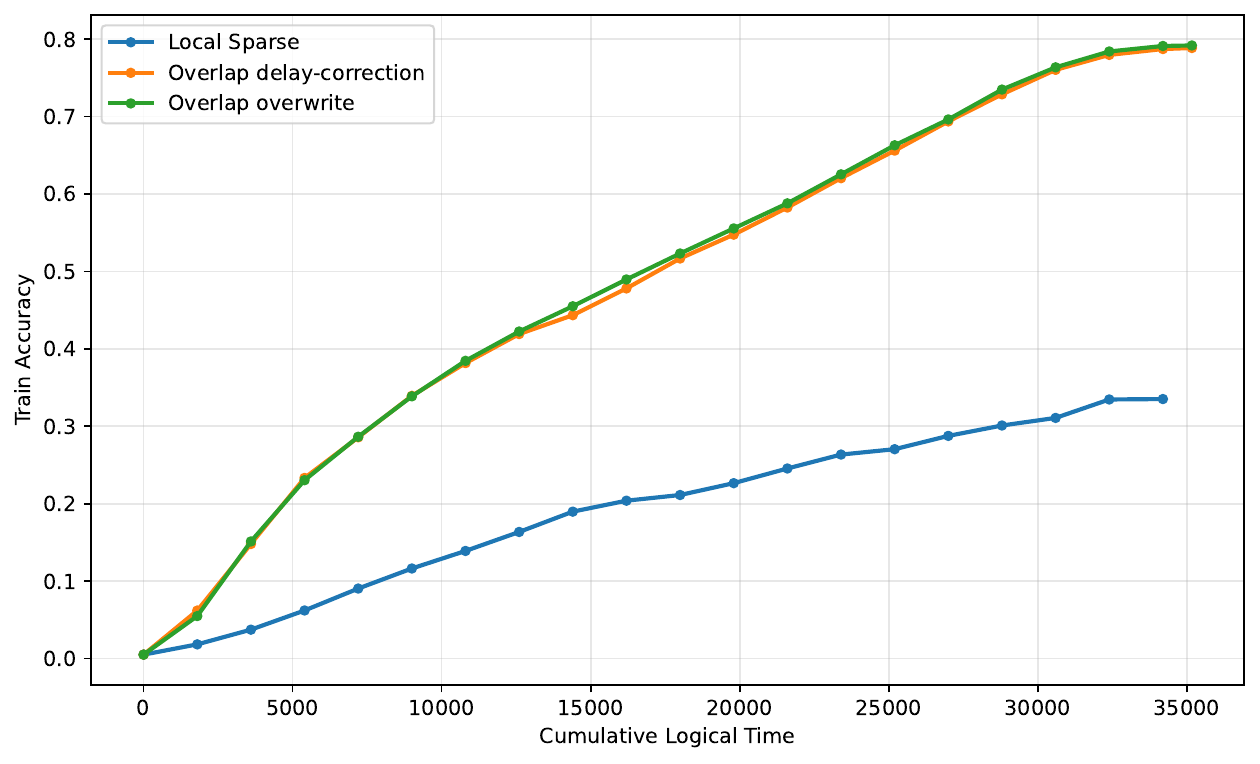}\\
        {\small (b) Training accuracy versus logical time.}
    \end{minipage}

    \begin{minipage}{0.48\linewidth}
        \centering
        \includegraphics[width=\linewidth]{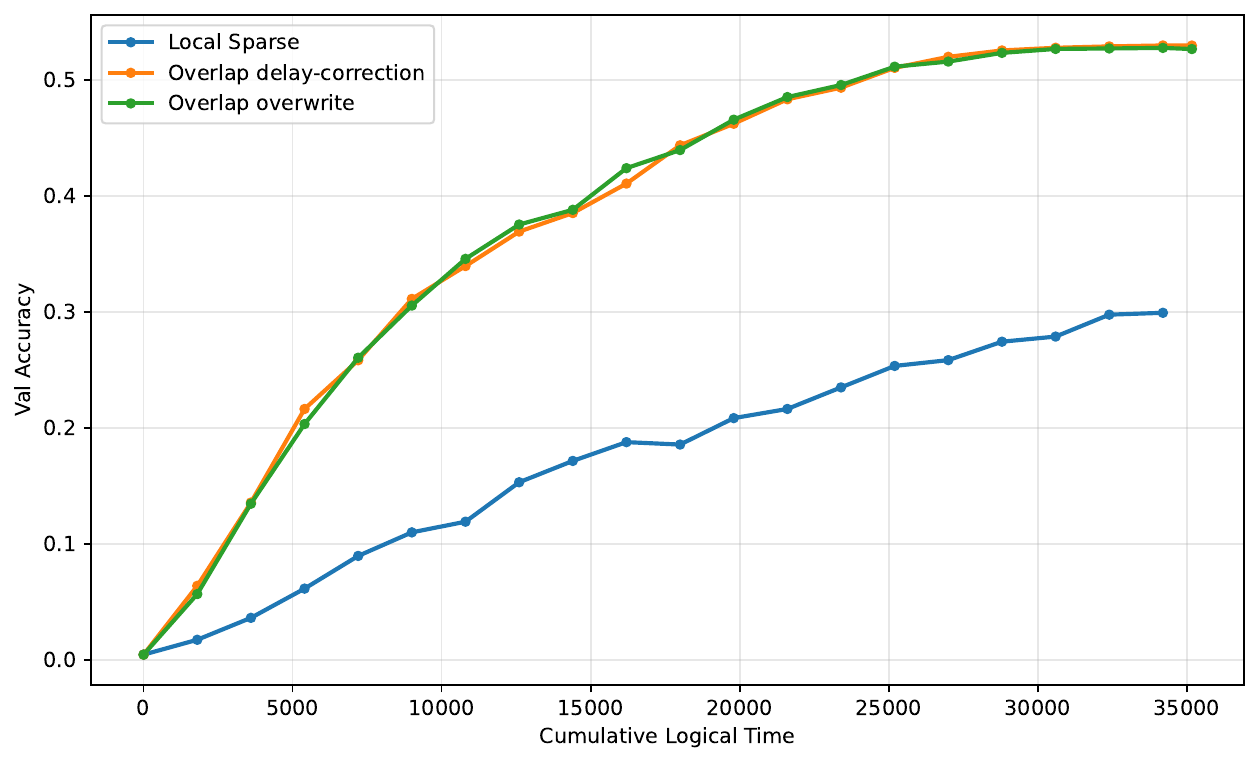}\\
        {\small (c) Validation accuracy versus logical time.}
    \end{minipage}\hfill
    \begin{minipage}{0.48\linewidth}
        \centering
        \includegraphics[width=\linewidth]{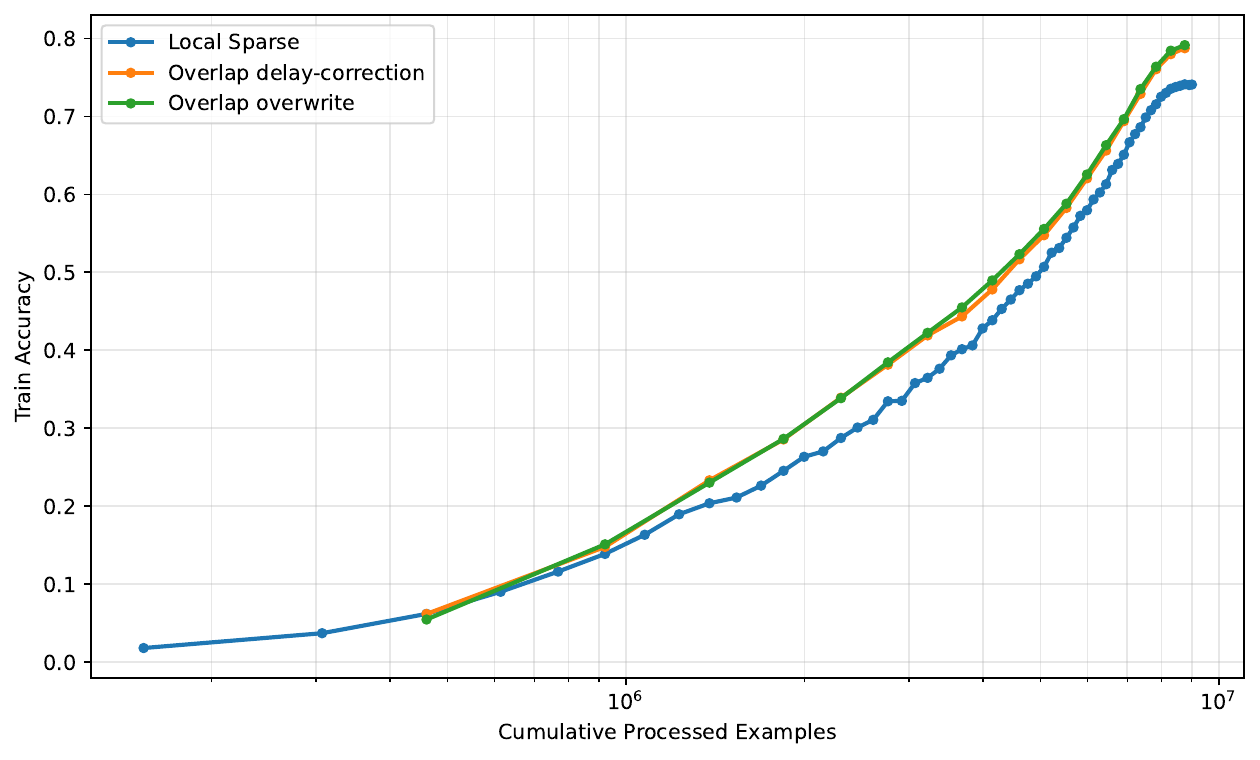}\\
        {\small (d) Training accuracy versus processed examples.}
    \end{minipage}
    \caption{Tiny ImageNet experiment in the communication-stress regime, with \(p=0.10\), \(M=2\), and \(\zeta=24\).
    The longer communication delay makes the advantage of overlap much larger: both overlap variants substantially outperform blocking local sparse averaging in logical time and also remain ahead when compared by processed examples.}
    \label{fig:tin-comm-stress}
\end{figure}

\end{document}